\newcommand\numberthis{\addtocounter{equation}{1}\tag{\theequation}}
\newcommand{\citewithauthor}[1]{\citeauthor{#1} \cite{#1}}
\newtheorem{theorem}{\TE{Theorem}}[section]
\newtheorem{lemma}[theorem]{\TE{Lemma}}
\newtheorem{remark}[theorem]{\TE{Remark}}
\newtheorem{assume}[theorem]{\TE{Assumption}}
\newtheorem{corollary}[theorem]{\TE{Corollary}}
\algnewcommand{\IfThenElse}[3]{
  \State \algorithmicif\ #1\ \algorithmicthen\ #2\ \algorithmicelse\ #3}
\algnewcommand{\LineComment}[1]{\State \(\triangleright\) #1}
\newcommand*{\colorboxed}{}
\def\colorboxed#1#{%
  \colorboxedAux{#1}%
}
\newcommand*{\colorboxedAux}[3]{%
  \begingroup
    \colorlet{cb@saved}{.}%
    \color#1{#2}%
    \boxed{%
      \color{cb@saved}%
      #3%
    }%
  \endgroup
}
\def\Eqref Eq:#1:{\eqref{eq:#1}}
\newcommand{\TE}[1]{\textbf{#1}}
\newcommand{\FPP}[2]{\frac{\partial{#1}}{\partial{#2}}}
\newcommand{\FPPR}[2]{{\partial{#1}}/{\partial{#2}}}
\newcommand{\FPPT}[2]{\frac{\partial^2{#1}}{\partial{#2}^2}}
\newcommand{\FOURR}[4]{\left(\setlength{\arraycolsep}{1pt}\begin{array}{cccc}{#1}^T, & {#2}^T, & {#3}^T, & {#4}^T\end{array}\right)^T}
\newcommand{\argmin}[1]{\underset{#1}{\text{argmin}}\;}
\newcommand{\ST}{\text{s.t.}}
\newcommand{\TWORCell}[2]{\begin{tabular}{@{}c@{}}#1 \\ #2\end{tabular}}
\newcommand{\ProjBox}[1]{\text{Proj}_{\{\ALLONE\geq v\geq \underline{v}\}}(#1)}
\newcommand{\Proj}[1]{\text{Proj}_X(#1)}
\newcommand{\nablaX}{\nabla_X}
\newcommand{\ALLONE}{\mathbb{1}}
\newcommand{\ALLZERO}{\mathbb{0}}
\definecolor{Blue} {rgb}{0.2, 0.2, 0.8}
\definecolor{Red}  {rgb}{0.8, 0.2, 0.2}
\definecolor{Green}{rgb}{0.2, 0.8, 0.2}
\newcommand\fs@ruled@notop{\def\@fs@cfont{\bfseries}\let\@fs@capt\floatc@ruled
  \def\@fs@pre{}%
  \def\@fs@post{\kern2pt\hrule\relax}%
  \def\@fs@mid{\kern2pt\hrule\kern2pt}%
  \let\@fs@iftopcapt\iftrue}
\renewcommand\fst@algorithm{\fs@ruled@notop}
\newif\ifsupp
\title{\large\bf First-Order Bilevel Topology Optimization for Fast Mechanical Design}
\author{Zherong Pan, Xifeng Gao, and Kui Wu  \\
\thanks{Authors are with the Lightspeed \& Quantum Studio, Tencent-America. \{zrpan,xifgao,kwwu@tencent.com\}}}
\begin{document}
\maketitle
\allowdisplaybreaks
\thispagestyle{empty}
\pagestyle{empty}

\begin{abstract}
Topology Optimization (TO), which maximizes structural robustness under material weight constraints, is becoming an essential step for the automatic design of mechanical parts. However, existing TO algorithms use the Finite Element Analysis (FEA) that requires massive computational resources. We present a novel TO algorithm that incurs a much lower iterative cost. Unlike conventional methods that require exact inversions of large FEA system matrices at every iteration, we reformulate the problem as a bilevel optimization that can be solved using a first-order algorithm and only inverts the system matrix approximately. As a result, our method incurs a low iterative cost, and users can preview the TO results interactively for fast design updates. Theoretical convergence analysis and numerical experiments are conducted to verify our effectiveness. We further discuss extensions to use high-performance preconditioners and fine-grained parallelism on the Graphics Processing Unit (GPU).
\end{abstract}
\begin{IEEEkeywords}
Topology Optimization, Bilevel Optimization, Low-Cost Robotics, Mechanisms and Design
\end{IEEEkeywords}
\section{\label{sec:introduction}Introduction}
Several emerging techniques over the past decade are contributing to the new trend of lightweight, low-cost mechanical designs. Commodity-level additive manufacturing has been made possible using 3D printers and user-friendly modeling software, allowing a novice to prototype devices, consumer products, and low-end robots. While the appearance of a mechanical part is intuitive to perceive and modify, other essential properties, such as structural robustness, fragility, vibration frequencies, and manufacturing cost, are relatively hard to either visualize or manage. These properties are highly related to the ultimate performance criteria, including flexibility, robotic maneuverability, and safety. For example, the mass and inertia of a collaborative robot should be carefully controlled to avoid harm to the human when they collide. For manipulators on spaceships, the length of each part should be optimized to maximize the end-effector reachability under strict weight limits. Therefore, many mechanical designers rely on TO methods to search for the optimal structure of mechanical parts by tuning their center-of-mass \cite{wu2016shape}, inertia \cite{10.1145/2601097.2601157}, infill pattern \cite{WU2017358}, or material distribution \cite{10.1007/s00158-018-2143-8}.
\begin{figure}[th]
\centering
\includegraphics[width=\linewidth]{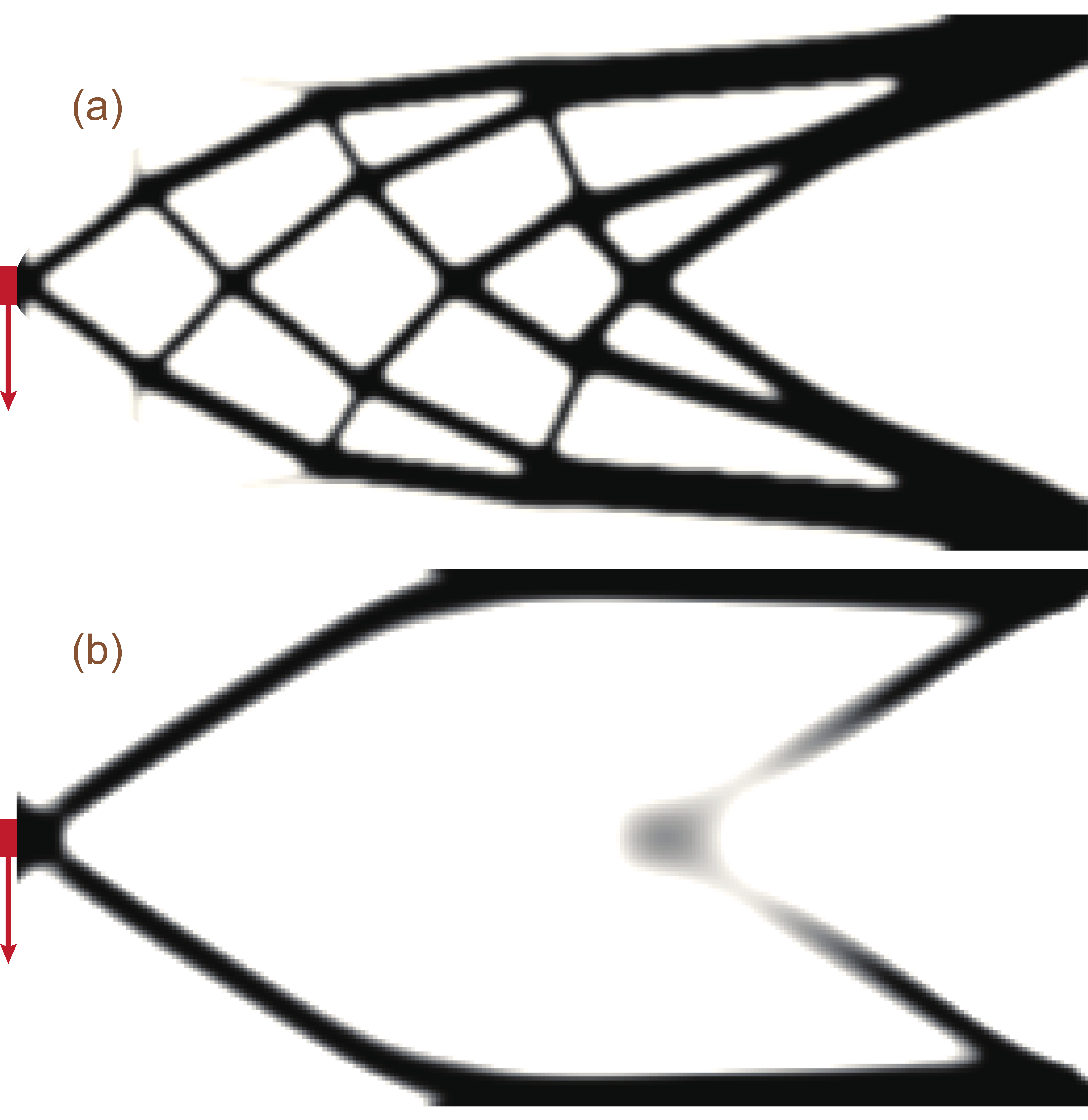}
\caption{\label{fig:teaser} \small{Given the same amount of computational time (200s) on a grid resolution $128\times256$, our method using approximate inverse (a) can find the complex mechanical designs to resist the external load (red force pointing downwards), while the prior method \cite{rojas2016efficient} (b) is still far from convergence.}}
\vspace{-20px}
\end{figure}

Unfortunately, the current TO algorithms \cite{sigmund2013topology} pose a major computational bottleneck. Indeed, TO algorithms rely on FEA to predict the relationship between the extrinsic (boundary loads) and intrinsic status (stress, strain, and energy densities) of a mechanical part. Such predictions are made by solving systems of (non-)linear equations during each iteration of TO. The design space and resulting system size are often gigantic when designing fine-grained material shapes. For example, an $128\times256$ design space illustrated in \prettyref{fig:teaser} incurs $10^4$ decision variables. Prior works make full use of high-end GPU \cite{MARTINEZFRUTOS201747} or many-core CPU \cite{mahdavi2006topology,aage2015topology} to solve such problems. Even after such aggressive optimization, the total time spent on FEA is typically at the level of hours. As a result, mechanical engineers cannot modify or preview their designs without waiting for tens of minutes or even hours, which significantly slow down the loops of design refinement.

\TE{Main Results:} We propose a low-cost algorithm that solves a large subclass of TO problems known as Solid Isotropic Material with Penalization (SIMP) \cite{sigmund200199}. The SIMP model is widely used to search for fine-grained geometric shapes by optimizing the infill levels. Our main observation is that, in conventional TO solvers, the computation resources are predominated by solving the system of equations to the machine precision. We argue that pursuing such precision for intermediary TO iterations is unnecessary as one only expects the system to be exactly solved on final convergence. Therefore, we propose to maintain an approximate solution that is refined over iterations. We show that such a TO solver corresponds to a bilevel reformulation of the SIMP problems (\prettyref{sec:problem}). We further propose a first-order algorithm to solve these bilevel problems with guaranteed convergence to a first-order critical point (\prettyref{sec:method}) of moderate accuracy. Compared with conventional TO algorithms \cite{sigmund2013topology}, our method incurs a much lower iterative cost of $\mathcal{O}(E\log E)$ ($E$ is the number of decision variables), allowing the mechanical designers to interactively preview the TO results and accelerate the loops of design refinement. We then discuss several useful extensions including different preconditioning schemes for refining the approximate solutions (\prettyref{sec:precon}) and a GPU-based implementation that achieves an asymptotic iterative cost of $\mathcal{O}(\log E)$ (\prettyref{sec:parallel}). We highlight the effectiveness of our method through a row of 2D benchmark problems from \cite{valdez2017topology,7332965,kambampati2020large} (\prettyref{sec:experiment}), where our iterative cost can be as low as $11-40$ms when handling a resolution of up to $110$k grid cells.

\section{\label{sec:related}Related Work}
We review representative TO formulations, numerical solvers for TO problems, and bilevel optimizations.

\TE{TO Formulations \& Applications:} Generally speaking, TO involves all mathematical programming problems that use geometric shapes as decision variables. A TO problem has three essential component: geometric representations, governing system of equations, and design objectives/constraints. The SIMP model \cite{sigmund200199} uses density-based representation where the geometric shape is discretized using FEA and decision variables are the infill levels of discrete elements. There are other popular representations. For example, truss optimization \cite{gomes2011truss} uses the decision variables to model the discrete beam existence. Infill pattern optimization \cite{WU2017358} searches for the local arrangements of shape interiors. Compared with the SIMP model, these formulations typically induce much fewer decision variables due to simplicity of shapes or locality. 

In terms of governing system of equations, the SIMP model assumes linearized elastic models, which means that the extrinsic and intrinsic properties of a mechanical part are linearly related. Linear models make fairly accurate predictions under the small-deformation assumption, which is the case with most mechanical design problems with rigid parts. For large deformations, there are extensions to the SIMP model that use nonlinear elastic models \cite{buhl2000stiffness,bruns2001topology} or even elastoplastic models \cite{maute1998adaptive}. However, the use of these nonlinear, non-convex models induces optimization problems of a more complex class than the SIMP model \cite{colson2005bilevel}, for which our analysis does not apply. In a similar fashion as SIMP, truss optimization \cite{SVANBERG198163} has small- and large-deformation variants. We speculate that similar ideas and analysis apply for truss optimization problems under small-deformations.

TO formulations deviate from each other mostly due to their design objectives and constraints. The SIMP model minimizes the total potential energy under a single external load and a total-volume constraint. Standard SIMP model does not have constraints to limit deformations, but there are extensions to include strain/stress constraints \cite{holmberg2013stress}. Unfortunately, stress-constrained formulations also induce a more complex problem class, where our analysis does not apply. Yet another extension to the SIMP model \cite{habbal2004multidisciplinary,holmberg2017game,dunning2011introducing} is to consider multiple or uncertain external loads. Such extension is favorable because a mechanical system can undergo a variety of unknown extrinsic situations. We speculate it is possible to adapt our method to these cases by sampling the uncertain situations and use our method as underlying solvers for each sampled scenario.

\TE{Numerical TO Solvers:} Three categories of numerical approaches have been extensively used: truncated gradient method, general-purpose method, and hierarchical method. The reference implementation \cite{sigmund200199} of the SIMP solver uses the heuristic, truncated gradient method \cite{bendsoe1995optimization}. This method first computes the exact gradient and then updates the design along the negative gradient direction using heuristic step sizes. Although the convergence of these heuristic schemes is hard to analyze, they perform reasonably well in practice and are thus widely used, e.g. in \cite{7332965}. General-purpose methods are off-the-shelf optimization algorithms (e.g., sequential quadratic programming (SQP)/interior point method \cite{rojas2015benchmarking}, and augmented Lagrangian method (ALM) \cite{da2018reliability}). The generality of these solvers makes it easier to experiment with different TO formulations. Without problem-specific optimization, however, they typically scale poorly to the high-dimensional decision space of practical problems. In particular, the (Globally Convergent) Method of Moving Asymptotes (GCMMA) \cite{https://doi.org/10.1002/nme.1620240207} is a special form of SQP that can handle a large set of box constraints, making it particularly suitable for SIMP-type problems. Finally, solving problem in a hierarchical approach is an attempting way to tackle high-dimensional problems. \citewithauthor{nguyen2017polytree} used a quadtree to focus computational resources on the material boundaries. The multigrid method \cite{7332965,amir2014multigrid} constructs a hierarchy of grids to accelerate FEA system solves.

\TE{Bilevel Optimization \cite{colson2005bilevel}:} When a constraint of an optimization problem (high-level problem) involves another optimization (low-level problem), the problem is considered bilevel. In our formulation, the high-level problem optimizes the material infill levels, while the low-level problem performs the FEA analysis. Bilevel problems frequently arise in robotics for time optimality \cite{tang2020enhancing}, trajectory search \cite{Pavone-RSS-19}, and task-and-motion planning \cite{toussaint2015logic}. As a special case considered in our method, if the low-level problem is strictly convex, then its solution is unique and the low-level problem can be replaced by a well-defined solution mapping function. This treatment essentially reduces the bilevel to a single-level optimization. Otherwise, low-level problem can be non-convex and the solution map is multi-valued, where additional mechanism is needed to compare and select low-level solutions (e.g., a maximum- or minimum-value function can be used and the associated problem is NP-hard \cite{bard1991some}). It is noteworthy that oftentimes an arbitrary solution of the low-level problem suffice, in which case penalty method \cite{mehra2021penalty} can be used to approximate one of the solutions on the low-level. Bilevel formulation has been used in TO community for years, but prior works resort to off-the-shelf solvers \cite{kovcvara1997topology} or machine learning \cite{NTOPO} to (approximately) solve the underlying optimization problem, while we utilize the special structure of the SIMP model to design our efficient, first-order method.
\section{\label{sec:problem}Problem Formulation}
In this section we introduce the SIMP model using notations in \cite{7332965}. \ifsupp(see \prettyref{table:symbols} in our appendix for a list of symbols) \fi This model uses FEA to discretize a mechanical part governed by the linear elastic constitutive law. If the material is undergoing an internal displacement $u(x): \Omega\to\mathbb{R}^2$ where $\Omega$ is the material domain, then the internal potential energy is accumulated according to the constitutive law as follows:
\begin{align*}
P[u(\bullet),v(\bullet)]=\int_\Omega\frac{1}{2}u(x)^Tk_e(v(x),x)u(x)dx,
\end{align*}
where $P[\bullet,\bullet]$ is the potential energy functional, $k_e$ is the spatially varying stiffness tensor parameterized by infinite-dimensional decision variable $v(x): \Omega\to\mathbb{R}$ that determines the infill levels. FEA discretizes the material domain $\Omega$ into a set of $E$ elements each spanning the sub-domain $\Omega_e$. The elements are connected by a set of $N$ nodes. By restricting $P[\bullet]$ to a certain finite-dimensional functional space (the shape space), the potential energy can be written as the following sum over elements:
\begin{align*}
P(u,v)=&\frac{1}{2}u^T\sum_{e=1}^KK_e(v)u\triangleq\frac{1}{2}u^TK(v)u\\
K_e(v)\triangleq&\int_{\Omega_e}k_e(v,x)dx.
\end{align*}
Here $K_e$ is the stiffness matrix of the $e$th element, $K$ is the stiffness matrix assembled from $K_e$. With a slight abuse of notation, we denote $P(\bullet,\bullet)$ as a finite-dimensional potential energy function, $u\in\mathbb{R}^{2N}$ without bracket as a vector of 2D nodal displacements ($u\in\mathbb{R}^{3N}$ in 3D), and $v\in\mathbb{R}^E$ without bracket as a vector of elementwise infill levels. The SIMP model is compatible with all kinds of FEA discretization scheme and the case with regular grid is illustrated in \prettyref{fig:pipeline}.
\begin{figure}[th]
\centering
\includegraphics[width=\linewidth]{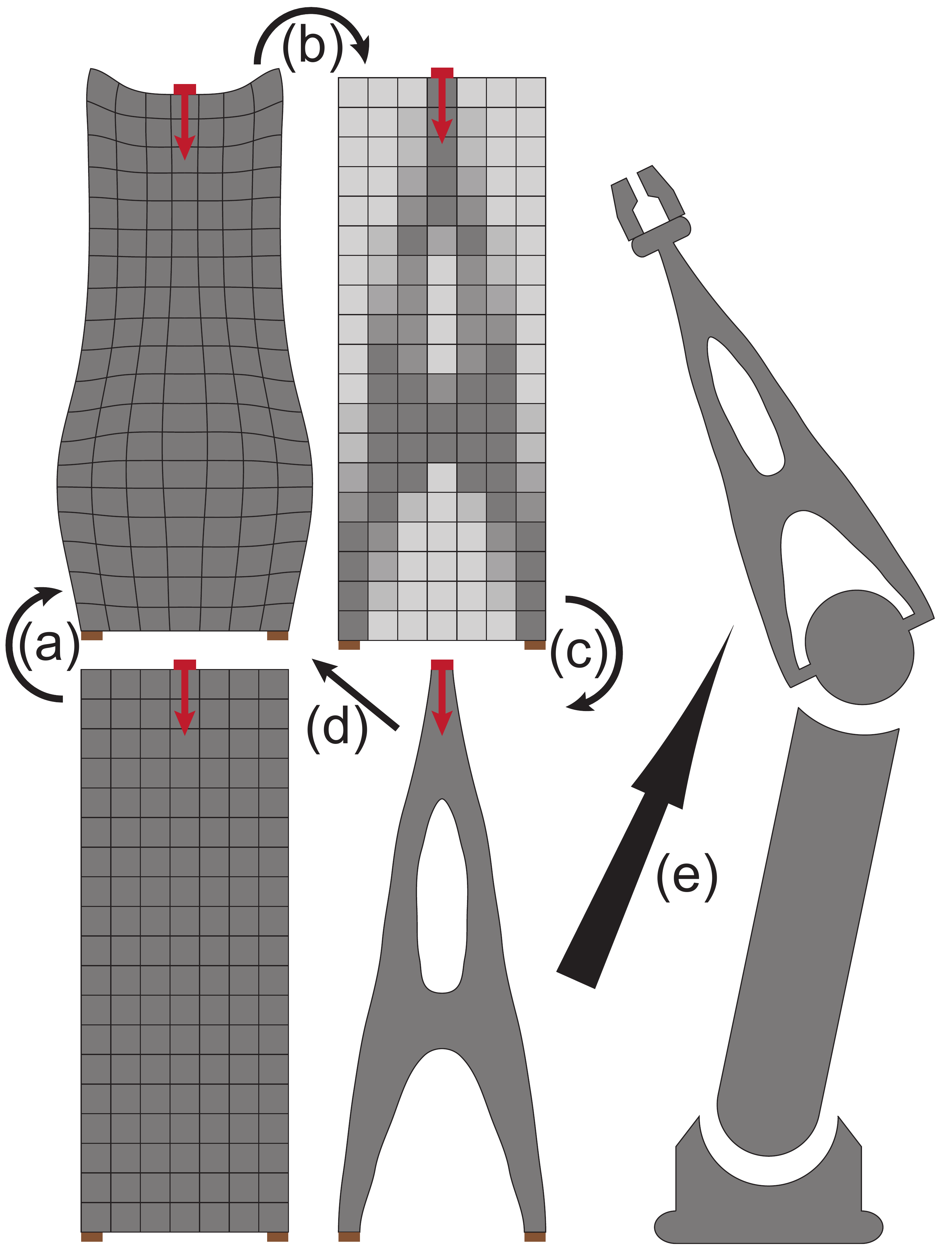}
\caption{\label{fig:pipeline} \small{Key steps of the SIMP model and the design loop: define the SIMP problem: external load (red) and fixture (brown); compute displacement $u$ using FEA (a); perform sensitivity analysis to compute $\nabla l$ (b); update infill levels (c); repeat (b,c,d) until convergence; deploy on a real robot (e).}}
\vspace{-5px}
\end{figure}

Suppose an external force $f$ is exerted on the mechanical part, then the total (internal+external) potential energy is:
\begin{align*}
P_f(u,v)=\frac{1}{2}u^TK(v)u-f^Tu.
\end{align*}
SIMP model works with arbitrary force setup, but external forces are only applied to the boundary in practice. For example, if only the $i$th boundary node is under a force $f_i$, then we have $f=e_if_i$ with $e_i$ being the unit vector. The equilibrium state is derived by minimizing $P_f$ with respect to $u$, giving $u_f=K^{-1}(v)f$ (\prettyref{fig:pipeline}a). As the name suggests the two variables $u_f,f$ are linearly related under the linear elastic constitutive law. The goal of TO is to minimize the induced internal potential energy due to $f$:
\begin{align}
\label{eq:opt}
\argmin{v\in X} l(v)\triangleq\frac{1}{2}u_f^T(v)K(v)u_f(v)=\frac{1}{2}f^TK^{-1}(v)f,
\end{align}
where $X$ is a compact, convex set bounding $v$ away from potentially singular configurations. Several widely used stiffness matrix parameterization of $K(v)$ includes the linear law $K(v)=\sum_{e=1}^Ev_eK_e$ or the power law $K(v)=\sum_{e=1}^Ev_e^\eta K_e$ where $\eta>0$ is some constant. In both cases, we need to choose $X$ such that the spectrum of $K$ is bounded from both sides:
\begin{align}
\label{eq:KVBound}
\ALLZERO<\underline{\rho}\ALLONE\leq\rho(K(v))\leq\bar{\rho}\ALLONE,
\end{align}
for any $v\in X$, where $\rho(\bullet)$ is the eigenvalues of a matrix. A typical choice of $X$ to this end is:
\begin{align*}
X=\{v|\ALLONE\geq v\geq\underline{v}\ALLONE>\ALLZERO\land\ALLONE^Tv\leq\bar{v}\},
\end{align*}
where $v_e$ is the infill level of the $e$th element, $\underline{v}$ is the minimal elementwise infill level, $\bar{v}$ is the total infill level, and $\ALLONE,\ALLZERO$ are the all-one and all-zero vectors, respectively. In this work, we use the following more general assumption on the stiffness matrix parameterization and the shape of $X$:
\begin{assume}
\label{ass:StiffnessParameter}
$X$ is a compact polytopic subset of $\mathbb{R}^E$, on which \prettyref{eq:KVBound} holds and $K(v)$ is smooth.
\end{assume}
\begin{remark}
\label{rem:modifier}
The compactness of $X$ is obvious. $X$ is also polytopic because we only have linear constraints, which is essential when we measure optimality using relative projection error (see \prettyref{sec:method} for details). \prettyref{ass:StiffnessParameter} is more general than the standard SIMP model. Indeed, $K(v)$ can be parameterized using any smooth activation function $\mathcal{A}_e(v)$ for the $e$th element as $K(v)=\sum_{e=1}^E\mathcal{A}_e(v)K_e$, which obviously satisfies \prettyref{ass:StiffnessParameter} as long as $\mathcal{A}_e(v)$ is bounded from both above and below on $X$, away from zero. Our assumption also allows a variety of topology modifications and constraints, some of which are illustrated here: \TE{Symmetry-Constraint:} We can plug in a left-right symmetric mapping matrix $\mathbb{S}$ and define $K(v)=K(\mathbb{S}v_s)$ where $v_s$ is the infill levels for the left-half of the material block. \TE{Component-Cost-Constraint:} A mechanical part can be divided into different sub-components and the amount of material can be assigned for each component as follows:
\begin{align*}
X\triangleq\{v|v=\FOURR{v^1}{v^2}{\cdots}{v^{\#}}\land \ALLONE^Tv^i\leq\bar{v}^i\},
\end{align*}
where $\#$ is the number of sub-components, $v^i$ is the sub-vector of $v$ consisting of decision variables for the $i$th component, and $\bar{v}^i$ is the total allowed amount of material for that sub-component. \TE{Material-Filtering:} \citewithauthor{guest2004achieving} proposed to control the thickness of materials by filtering the infill levels using some convolutional kernel denoted as $\mathbb{C}$. As long as the convolution operator $\mathbb{C}(\bullet)$ is smooth, \prettyref{ass:StiffnessParameter} holds by plugging in $K(v)=\sum_{e=1}^E\mathcal{A}_e(\mathbb{C}(v))K_e$.
\end{remark}

All the conventional methods such as the (GC)MMA \cite{svanberg2007mma,https://doi.org/10.1002/nme.1620240207}, SQP \cite{rojas2016efficient,rojas2015benchmarking}, and \cite{7332965} for solving \prettyref{eq:opt} involve computing the exact gradient via sensitivity analysis (\prettyref{fig:pipeline}b):
\small
\begin{align*}
\nabla l=&\frac{1}{2}\FPP{u_f^T}{v}Ku_f+\frac{1}{2}u_f^T\FPP{K}{v}u_f+\frac{1}{2}u_f^TK\FPP{u_f}{v}
=-\frac{1}{2}u_f^T\FPP{K}{v}u_f,
\end{align*}
\normalsize
where we have used the derivatives of the matrix inverse. The major bottleneck lies in the computation of $u_f$ that involves exact matrix inversion. Here we assume that $\FPPT{K}{v}$ is a third-order tensor of size $|u|\times|u|\times|v|$ and its left- or right-multiplication means contraction along the first and second dimension. Note that the convergence of GCMMA and SQP solvers rely on the line-search scheme that involves the computation of objective function values, which in turn requires matrix inversion. Practical methods \cite{7332965} and \cite{sigmund200199} avoid the line-search schemes using heuristic step sizes. Although working well in practice, the theoretical convergence of these heuristic rules is difficult to establish.

\section{\label{sec:method}Bilevel Topology Optimization}
In this section, we first propose our core framework to solve SIMP problems that allows inexact matrix inversions. We then discuss extensions to accelerate convergence via preconditioning (\prettyref{sec:precon}) and fast projection operators (\prettyref{sec:fastProjection}). Finally, we briefly discuss parallel algorithm implementations (\prettyref{sec:parallel}). Inspired by the recent advent of first-order bilevel optimization algorithms \cite{ghadimi2018approximation,hong2020two,khanduri2021near}, we propose to reformulation \prettyref{eq:opt} as the following bilevel optimization:
\begin{equation}
\begin{aligned}
\label{eq:optBilevel}
\argmin{v\in X}& f(u,v)\triangleq\frac{1}{2}u_f^TK(v)u_f\\
\ST&\quad u_f\in\argmin{u}P_f(u,v).
\end{aligned}
\end{equation}
The low-level part of \prettyref{eq:optBilevel} is a least-square problem in $u$ and, since $K(v)$ is always positive definite when $v\in X$, the low-level solution is unique. Plugging the low-level solution into the high-level objective function and \prettyref{eq:opt} is recovered. The first-order bilevel optimization solves \prettyref{eq:optBilevel} by time-integrating the discrete dynamics system as described in \prettyref{alg:Opt} and we denote this algorithm as \underline{F}irst-Order \underline{B}ilevel \underline{T}opology \underline{O}ptimization (FBTO). The first line of \prettyref{alg:Opt} is a single damped Jacobi iteration for refining the low-level solution using an adaptive step size of $\beta_k$. But instead of performing multiple Jacobi iterations until convergence, we go ahead to use the inexact result after one iteration for sensitivity analysis. Finally, we use a projected gradient descend to update the infill levels with an adaptive step size of $\alpha_k$.
\begin{algorithm}[ht]
\caption{\label{alg:Opt} FBTO}
\begin{algorithmic}[1]
\For{k$\gets1,2,\cdots$}
\State $u_{k+1}\gets u_k-\beta_k(K(v_k)u_k-f)$
\State $v_{k+1}\gets \Proj{v_k+\frac{\alpha_k}{2}u_k^T\FPP{K}{v_k}u_k}$
\label{ln:highlevel}
\EndFor
\end{algorithmic}
\end{algorithm}
Here $\Proj{\bullet}$ is the projection onto the convex set $X$ under Euclidean distance. This scheme has linear iterative cost of $\mathcal{O}(E)$ as compared with conventional method that has superlinear iterative cost due to sparse matrix inversions (some operations of \prettyref{alg:Opt} have a complexity of $\mathcal{O}(N)$ but we know that $E=\Theta(N)$).
\begin{remark}
The succinct form of our approximate gradient ($-u_k^T\FPPR{K}{v_k}u_k/2$ in \prettyref{ln:highlevel}) makes use of the special structure of the SIMP model, which is not possible for more general bilevel problems. If a general-purpose first-order bilevel solver is used, e.g. \cite{ghadimi2018approximation} and follow-up works, one would first compute/store $\nabla_{uv} P_f$ and then approximate $\nabla_{uu} P_f^{-1}\nabla_{uv} P_f$ via sampling. Although the stochastic approximation scheme does not require exact matrix inversion, they induce an increasing number of samples with a larger number of iterations. Instead, we make use of the cancellation between $K(v)$ in the high-level objective and $K^{-1}(v)$ in the low-level objective to avoid matrix inversion or its approximations, which allows our algorithm to scale to high dimensions without increasing sample complexity.
\end{remark}
As our main result, we show that \prettyref{alg:Opt} is convergent under the following choices of parameters.
\begin{assume}
\label{ass:BetaChoice}
We choose constant $\beta_k$ and some constant $p>0$ satisfying the following condition:
\begin{align}
\label{eq:ThetaBound}
0<\beta_k<\frac{2\underline{\rho}}{\bar{\rho}^2}\land
p>\frac{1-2\beta_k\underline{\rho}+\bar{\beta}^2\beta_k^2}{2\beta_k\underline{\rho}-\bar{\beta}^2\beta_k^2}.
\end{align}
\end{assume}
\begin{assume}
\label{ass:GammaChoice}
Define $\Delta_k\triangleq\|u_k-u_f(v_k)\|^2$ and suppose $\Delta_1\leq U^2$, we choose the following uniformly bounded sequence $\{\Gamma_k\}$ with constant $\bar{\Gamma}, \bar{\Theta}$:
\begin{equation*}
\bar{\Theta}\triangleq\frac{p+1}{p}(1-2\beta_k\underline{\rho}+\beta_k^2\bar{\rho}^2)\land\Gamma_k\leq\bar{\Gamma}\leq\frac{U^2(1-\bar{\Theta})}{(U+U_f)^4},
\end{equation*}
where $U_f$ is the finite upper bound of $u_f(v)$ on $X$.
\end{assume}
\begin{assume}
\label{ass:AlphaChoice}
We choose $\alpha_k$ as follows:
\begin{align*}
\alpha_k=\frac{1}{k^m}\sqrt{\frac{\bar{\Gamma}}{L_u^2L_{\nabla K}^2}\frac{4}{p+1}},
\end{align*}
for positive constants $m,p$, where $L_u$ is the L-constant of $u_f$ and $L_{\nabla K}$ is the upper bound of the Frobenius norm of $\FPPR{K(v)}{v}$ when $v\in X$.
\end{assume}
\begin{assume}
\label{ass:MNChoice}
We choose positive constants $m,n$ satisfying the following condition:
\begin{align*}
\max(1-2m+mn,2-2m-mn,2-3m)<0\land m<1.
\end{align*}
\end{assume}
\begin{theorem}
\label{thm:gradientConvergence}
Suppose $\{v_k\}$ is the sequence generated by \prettyref{alg:Opt} under \prettyref{ass:StiffnessParameter}, \ref{ass:BetaChoice}, \ref{ass:GammaChoice}, \ref{ass:AlphaChoice}, \ref{ass:MNChoice}. For any $\epsilon>0$, there exists an iteration number $k$ such that $\|\nablaX l(v_k)\|\leq\epsilon$, where $\nablaX l(v_k)$ is the projected negative gradient into the tangent cone of $X$, $\mathcal{T}_X$, defined as:
\begin{align*}
\nablaX l(v_k)=\argmin{d\in \mathcal{T}_X}\|d+\nabla l(v_k)\|.
\end{align*}
\end{theorem}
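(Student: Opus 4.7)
The plan is to decouple the convergence analysis into two intertwined error sequences: the inner residual $\Delta_k=\|u_k-u_f(v_k)\|^2$, which measures how well the single damped Jacobi step tracks the true FEA solution, and the outer stationarity measure $\|\nablaX l(v_k)\|$. Because the approximate descent direction on Line~\ref{ln:highlevel} equals the true gradient $-\tfrac12 u_f^T\FPPR{K}{v}u_f$ whenever $u_k=u_f(v_k)$, the heart of the proof is to show that $\Delta_k$ can be kept small enough, relative to the outer step size $\alpha_k$, so that the inexact gradient still induces sufficient descent of $l$. The parameter choices in Assumptions~\ref{ass:BetaChoice}--\ref{ass:MNChoice} are engineered precisely to close this loop.

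First I would derive a one-step recursion for $\Delta_k$. Writing $u_{k+1}-u_f(v_k)=(I-\beta_k K(v_k))(u_k-u_f(v_k))$ and using the spectral bound \prettyref{eq:KVBound} yields a contraction by the factor $1-2\beta_k\underline{\rho}+\beta_k^2\bar\rho^2$, which is strictly less than one by the first half of \prettyref{ass:BetaChoice}. Then I would split $u_{k+1}-u_f(v_{k+1})=[u_{k+1}-u_f(v_k)]+[u_f(v_k)-u_f(v_{k+1})]$ and apply Young's inequality with parameter $p$, picking up the factor $(p+1)/p$ on the contraction term (so it becomes $\bar\Theta$) and $(1+p)$ on the Lipschitz term bounded via $L_u\|v_{k+1}-v_k\|$. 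Since $\|v_{k+1}-v_k\|\le\tfrac{\alpha_k}{2}L_{\nabla K}\|u_k\|^2\le\tfrac{\alpha_k}{2}L_{\nabla K}(U+U_f)^2$, the definition of $\alpha_k$ in \prettyref{ass:AlphaChoice} makes the forcing term equal to $\bar\Gamma(U+U_f)^4/k^{2m}$. Combined with the bound on $\bar\Gamma$ in \prettyref{ass:GammaChoice}, an induction shows $\Delta_k\le U^2$ is maintained, and that $\Delta_k=\mathcal{O}(k^{-2m})$ via a standard geometric-plus-forcing argument.

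Second I would analyse the outer iterate. The exact gradient and the surrogate used by FBTO differ by
\begin{align*}
\|\tfrac12 u_k^T\FPP{K}{v_k}u_k-\nabla l(v_k)\|\le L_{\nabla K}(U+U_f)\sqrt{\Delta_k},
\end{align*}
so the gradient error is $\mathcal{O}(k^{-m})$. Using \prettyref{ass:StiffnessParameter} I would first certify that $l$ is Lipschitz smooth on $X$ with some constant $L_l$, because $K(v)^{-1}$ is analytic on the compact set $X$ where $K(v)$ is uniformly positive definite. A standard descent lemma for the projected gradient step with an inexact direction then gives
\begin{align*}
l(v_{k+1})\le l(v_k)-c_1\alpha_k\|\nablaX l(v_k)\|^2+c_2\alpha_k\Delta_k+c_3 L_l\alpha_k^2(U+U_f)^4,
\end{align*}
for suitable constants $c_i$, where the projection-aware tangent-cone argument (using that $X$ is polytopic, as emphasised in \prettyref{rem:modifier}) converts the inner product term into the stated projected-gradient norm.

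Finally I would telescope over $k=1,\ldots,N$. The sum on the left is bounded below by $\min_{k}-(l(v_N)-l(v_1))$, which is finite by compactness of $X$, so
\begin{align*}
\sum_{k=1}^{N}\alpha_k\|\nablaX l(v_k)\|^2\le C+\sum_{k=1}^{N}\bigl(c_2\alpha_k\Delta_k+c_3\alpha_k^2\bigr).
\end{align*}
The constraints on $m,n$ in \prettyref{ass:MNChoice} are exactly what is needed for $\sum\alpha_k=\infty$ while $\sum\alpha_k\Delta_k$ and $\sum\alpha_k^2$ remain finite (the three inequalities $\max(1-2m+mn,2-2m-mn,2-3m)<0$ arise naturally when combining the $\mathcal{O}(k^{-m})$ step size with the $\mathcal{O}(k^{-2m})$ inner error under the candidate decay rate $\|\nablaX l\|\sim k^{-n/2}$). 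A standard $\liminf$ argument then yields that for every $\epsilon>0$ there exists $k$ with $\|\nablaX l(v_k)\|\le\epsilon$. The main obstacle, and the most delicate piece of the proof, is the coupled-rate bookkeeping in this last step: one must simultaneously ensure the contraction parameter $\bar\Theta<1$ (forcing a lower bound on $p$), the invariance $\Delta_k\le U^2$ (upper-bounding $\bar\Gamma$), and the summability trade-off encoded in \prettyref{ass:MNChoice}, with all three inequalities tightening one another through the shared constant $p$.
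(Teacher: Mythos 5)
Your proposal follows essentially the same route as the paper: a contraction-plus-Young recursion for the tracking error $\Delta_k$ with the $(p+1)/p$ split, an induction giving the uniform bounds $\Delta_k\leq U^2$ and $\|u_k\|\leq U+U_f$ and then a polynomial decay rate, an inexact-gradient descent inequality for $l$, and a telescoping/summability argument under \prettyref{ass:MNChoice} followed by a $\liminf$ and feasible-direction argument on the polytope to reach $\|\nablaX l(v_k)\|\leq\epsilon$. The one step you compress is the passage from the finite-step projected-gradient displacement $\frac{1}{\alpha_k}\|\Proj{v_k-\alpha_k\nabla l(v_k)}-v_k\|$ to the tangent-cone measure $\|\nablaX l(v_k)\|$ (for fixed $\alpha_k$ the naive inequality between the two points the wrong way, which is precisely why the paper's final proof introduces the feasible directions $d_k$ and the obtuse-angle criterion), but you correctly identify the polytopic structure of $X$ as the mechanism that closes this gap, so the approach is the same.
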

We prove \prettyref{thm:gradientConvergence} in \ifsupp\prettyref{sec:proof} \else our extended version \fi  where we further show that the high-level optimality error scales as $\mathcal{O}(k^{m-1})$, so that $m$ should be as small as possible for the best convergence rate. Direct calculation leads to the constraint of $m>3/4$ from the first two conditions of \prettyref{ass:MNChoice}, so the convergence rate can be arbitrarily close to $\mathcal{O}(k^{-1/4})$ as measured by the following relative projection error \cite{calamai1987projected}:
\begin{align*}
\Delta_k^v\triangleq&\frac{1}{\alpha_k^2}\|\Proj{v_k-\alpha_k\nabla l(v_k)}-v_k\|^2.
\end{align*}
Although our analysis of convergence uses a similar technique as that for the two-timescale method \cite{hong2020two}, our result is only single-timescale. Indeed, the low-level step size $\beta_k$ can be constant and users only need to tune a decaying step size for $\alpha_k$. We will further show that certain versions of our algorithm allow a large choice of $\beta_k=1$ (see \ifsupp\prettyref{sec:parameter}\else our extended version\fi). In addition, our formula for choosing $\alpha_k$ does not rely on the maximal number of iterations of the algorithm.

\begin{algorithm}[ht]
\caption{\label{alg:POpt} PFBTO}
\begin{algorithmic}[1]
\For{k$\gets1,2,\cdots$}
\State $\delta_k\gets M^{-1}(v_k)(K(v_k)u_k-f)$
\State $u_{k+1}\gets u_k-\beta_k K(v_k)M^{-1}(v_k)\delta_k$
\State $v_{k+1}\gets \Proj{v_k+\frac{\alpha_k}{2}u_k^T\FPP{K}{v_k}u_k}$
\EndFor
\end{algorithmic}
\end{algorithm}
\begin{algorithm}[ht]
\caption{\label{alg:CPOpt} CPFBTO}
\begin{algorithmic}[1]
\For{k$\gets1,2,\cdots$}
\State $u_{k+1}\gets u_k-\beta_k M^{-1}(v_k)(K(v_k)u_k-f)$
\label{ln:lowlevel}
\State $v_{k+1}\gets \Proj{v_k+\frac{\alpha_k}{2}u_k^T\FPP{K}{v_k}u_k}$
\EndFor
\end{algorithmic}
\end{algorithm}
\subsection{\label{sec:precon}Preconditioning}
Our \prettyref{alg:Opt} uses steepest gradient descend to solve the linear system in the low-level problem, which is known to have a slow convergence rate of $(1-1/\kappa)^{2k}/(1+1/\kappa)^{2k}$ \cite{luenberger1984linear} with $\kappa$ being the condition number of the linear system matrix. A well-developed technique to boost the convergence rate is preconditioning \cite{benzi2002preconditioning}, i.e., pre-multiplying a symmetric positive-definite matrix $M(v)$ that approximates $K(v)$ whose inverse can be computed at a low-cost. Preconditioning is a widely used technique in the TO community \cite{7332965,10.1145/3272127.3275012} to accelerate the convergence of iterative linear solvers in solving $u_f(v)=K(v)^{-1}f$. In this section, we show that FBTO can be extended to this setting by pre-multiplying $M(v)$ in the low-level problem and we name \prettyref{alg:POpt} as \underline{P}re-conditioned FBTO or PFBTO. \prettyref{alg:POpt} is solving the following different bilevel program from \prettyref{eq:optBilevel}:
\begin{equation}
\begin{aligned}
\label{eq:optBilevelSqr}
\argmin{v\in X}& f(u,v)\triangleq\frac{1}{2}u_f^TK(v)u_f\\
\ST&\quad u_f\in\argmin{u}\|K(v)u-f\|^2,
\end{aligned}
\end{equation}
where the only different is the low-level system matrix being squared to $K^2(v)$. Since $K(v)$ is non-singular, the two problems have the same solution set. \prettyref{eq:optBilevelSqr} allows us to multiply $M^{-1}(v)$ twice in \prettyref{alg:POpt} to get the symmetric form of $K(v)M^{-2}(v)K(v)$. To show the convergence of \prettyref{alg:POpt}, we only need the additional assumption on the uniform boundedness of the spectrum of $M(v)$:
\begin{assume}
\label{ass:BoundedMSpectrum}
For any $v\in X$, we have:
\begin{align*}
\ALLZERO<\underline{\rho}_M\ALLONE\leq\rho(M(v))\leq\bar{\rho}_M\ALLONE.
\end{align*}
\end{assume} 
\prettyref{ass:BoundedMSpectrum} holds for all the symmetric-definite preconditioners. The convergence can then be proved following the same reasoning as \prettyref{thm:gradientConvergence} with a minor modification summarized in \ifsupp\prettyref{sec:PreconditionedProof}\else our extended version\fi. Our analysis sheds light on the convergent behavior of prior works using highly efficient preconditioners such as geometric multigrid \cite{7332965,10.1145/3272127.3275012} and conjugate gradient method \cite{borrvall2001large}. We further extend these prior works by enabling convergent algorithms using lightweight preconditioners such as Jacobi/Gauss-Seidel iterations and approximate inverse schemes, to name just a few. The practical performance of \prettyref{alg:POpt} would highly depend on the design and implementation of specific preconditioners.

On the down side of \prettyref{alg:POpt}, the system matrix in the low-level problem of \prettyref{eq:optBilevelSqr} is squared and so is the condition number. Since the convergence speed of low-level problem is $(1-1/\kappa)^{2k}/(1+1/\kappa)^{2k}$, squaring the system matrix can significantly slow down the convergence, counteracting the acceleration brought by preconditioning. This is because we need to derive a symmetric operator of form $K(v)M^{-2}(v)K(v)$. As an important special case, however, only a single application of $M^{-1}(v)$ suffice if $M(v)$ commutes with $K(v)$, leading to the \underline{C}ommutable PFBTO or CPFBTO \prettyref{alg:CPOpt}. A useful commuting preconditioner is the Arnoldi process used by the GMRES solver \cite{saad1986gmres}, which is defined as:

\small
\begin{equation}
\begin{aligned}
\label{eq:LSP}
&M^{-1}(v)b\triangleq\sum_{i=0}^Dc_i^*K^i(v)b\\
&c_i^*\triangleq\argmin{c_i}\|b-\sum_{i=1}^{D+1}c_{i-1}K^i(v)b\|^2,
\end{aligned}
\end{equation}
\normalsize
where $D$ is the size of Krylov subspace. By sharing the same eigenvectors, $M(v)$ is clearly commuting with $K(v)$. This is a standard technique used as the inner loop of the GMRES solver, where the least square problem \prettyref{eq:LSP} is solved via the Arnoldi iteration. The Arnoldi process can be efficiently updated through iterations if $K(v)$ is fixed, which is not the case with our problem. Therefore, we choose the more numerically stable Householder QR factorization to solve the least square problem, and we name \prettyref{eq:LSP} as Krylov-preconditioner.

To show the convergence of \prettyref{alg:CPOpt}, we use a slightly different analysis summarized in the following theorem:
\begin{assume}
\label{ass:BetaChoicePre}
We choose constant $\beta_k$ and some constant $p>0$ satisfying the following condition:
\begin{align}
\label{eq:ThetaBoundPreconditioned}
0<\beta_k<\frac{2\underline{\rho}\underline{\rho}_M^2}{\bar{\rho}^2\bar{\rho}_M}\land
p>\frac{1-2\beta_k\frac{\underline{\rho}}{\bar{\rho}_M}+\beta_k^2\frac{\bar{\rho}^2}{\underline{\rho}_M^2}}{2\beta_k\frac{\underline{\rho}}{\bar{\rho}_M}-\beta_k^2\frac{\bar{\rho}^2}{\underline{\rho}_M^2}}.
\end{align}
\end{assume}
\begin{assume}
\label{ass:GammaChoicePre}
Define $\Xi_k\triangleq\|K(v_k)u_k-u_f\|^2$ and suppose $\Xi_1\leq U^2$, we choose the following uniformly bounded sequence $\{\Gamma_k\}$ with constant $\bar{\Gamma}, \bar{\Theta}$:
\begin{equation*}
\Theta_k\triangleq\frac{p+1}{p}(1-2\beta_k\frac{\underline{\rho}}{\bar{\rho}_M}+\beta_k^2\frac{\bar{\rho}^2}{\underline{\rho}_M})\land
\Gamma_k\leq\bar{\Gamma}\leq\frac{U^2(1-\bar{\Theta})}{(U/\underline{\rho}+U_f)^6},
\end{equation*}
where $U_f$ is the finite upper bound of $u_f(v)$ on $X$.
\end{assume}
\begin{assume}
\label{ass:AlphaChoicePre}
We choose $\alpha_k$ as follows:
\begin{align*}
\alpha_k=\frac{1}{k^m}\sqrt{\frac{\bar{\Gamma}}{L_K^2L_{\nabla K}^2}\frac{4}{p+1}},
\end{align*}
for some positive constants $m,p$, where $L_K$ is the L-coefficient of $K(v)$'s Frobenius norm on $X$.
\end{assume}
\begin{theorem}
\label{thm:gradientConvergencePre}
Suppose $M(v)$ commutes with $K(v)$, $\{v_k\}$ is the sequence generated by \prettyref{alg:CPOpt} under \prettyref{ass:StiffnessParameter}, \ref{ass:BetaChoicePre}, \ref{ass:GammaChoicePre}, \ref{ass:AlphaChoicePre}, \ref{ass:MNChoice}, \ref{ass:BoundedMSpectrum}. For any $\epsilon>0$, there exists an iteration number $k$ such that $\|\nablaX l(v_k)\|\leq\epsilon$, where $\nablaX l(v_k)$ is the projected negative gradient into the tangent cone of $X$.
\end{theorem}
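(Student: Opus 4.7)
The plan is to closely parallel the proof of \prettyref{thm:gradientConvergence}, replacing the damped-Jacobi residual analysis with one tailored to the single preconditioned update in \prettyref{ln:lowlevel}. The commuting hypothesis $M(v)K(v)=K(v)M(v)$ is the essential ingredient: simultaneous diagonalization of $M,K$ makes the low-level iteration matrix $I-\beta_k M^{-1}(v_k)K(v_k)$ symmetric with eigenvalues $1-\beta_k\lambda_i(K)/\lambda_i(M)$, and combining \prettyref{ass:StiffnessParameter} and \prettyref{ass:BoundedMSpectrum} produces exactly the factor $1-2\beta_k\underline{\rho}/\bar{\rho}_M+\beta_k^2\bar{\rho}^2/\underline{\rho}_M^2$ appearing inside $\Theta_k$ in \prettyref{ass:GammaChoicePre}. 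Without this commuting property one would have to symmetrize to $K(v)M^{-2}(v)K(v)$ as in \prettyref{alg:POpt} and pay the squared condition number.

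First I would establish a contraction for the low-level residual. Using $K(v_k)u_f(v_k)=f$, the update rewrites as $u_{k+1}-u_f(v_k)=(I-\beta_k M^{-1}(v_k)K(v_k))(u_k-u_f(v_k))$. Multiplying by $K(v_{k+1})$, splitting off the drift $u_f(v_{k+1})-u_f(v_k)$, and applying Young's inequality with the parameter $p$ from \prettyref{ass:BetaChoicePre} produces a recursion of the form $\Xi_{k+1}\leq \Theta_k \Xi_k + \Gamma_k^{-1}\|v_{k+1}-v_k\|^2$, where the drift is controlled by the Lipschitz constant of $u_f$ (itself bounded by $L_K/\underline{\rho}$ on the compact $X$). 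By \prettyref{eq:ThetaBoundPreconditioned} we have $\Theta_k\leq\bar{\Theta}<1$, and \prettyref{ass:GammaChoicePre} keeps $\Xi_k$ uniformly bounded by $U^2$ through induction, mirroring the role played by $\Delta_k$ in \prettyref{thm:gradientConvergence}.

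Next I would run the high-level descent analysis. The approximate gradient $g_k:=\tfrac{1}{2}u_k^T\FPP{K}{v_k}u_k$ differs from $\nabla l(v_k)=-\tfrac{1}{2}u_f(v_k)^T\FPP{K}{v_k}u_f(v_k)$ by a quantity whose norm is bounded by $L_{\nabla K}(U+U_f)\|u_k-u_f(v_k)\|\leq L_{\nabla K}(U+U_f)\Xi_k^{1/2}/\underline{\rho}$. Combining this with the standard projected-descent lemma applied to the smooth $l$ on the polytopic $X$ yields $l(v_{k+1})\leq l(v_k)-\tfrac{\alpha_k}{2}\|\nablaX l(v_k)\|^2+C_1\alpha_k\Xi_k/\underline{\rho}^2+C_2\alpha_k^2$ for constants determined by $L_K,L_{\nabla K},U,U_f$.

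Finally I would combine both inequalities under the schedule $\alpha_k=k^{-m}$ from \prettyref{ass:AlphaChoicePre}. Telescoping the descent bound and propagating the linear recursion on $\Xi_k$ shows, under \prettyref{ass:MNChoice} (whose inequalities are precisely what make the perturbation terms summable against $\alpha_k$), that $\sum_k\alpha_k\Xi_k<\infty$ and $\sum_k\alpha_k^2<\infty$ while $\sum_k\alpha_k=\infty$; this forces $\sum_k\alpha_k\|\nablaX l(v_k)\|^2<\infty$, and a standard $\liminf$ argument supplies an iterate $k$ with $\|\nablaX l(v_k)\|\leq\epsilon$. I expect the main obstacle to be constant bookkeeping: every spectral quantity in the proof of \prettyref{thm:gradientConvergence} must be rederived in the preconditioned norm, replacing $\underline{\rho},\bar{\rho}$ with $\underline{\rho}/\bar{\rho}_M,\bar{\rho}/\underline{\rho}_M$ to match \prettyref{ass:BetaChoicePre}--\prettyref{ass:GammaChoicePre}, and one must verify that the commuting hypothesis is what blocks the squared condition number from sneaking back into $\Theta_k$.
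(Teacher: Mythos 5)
Your plan follows the paper's own proof essentially step for step: the decisive choice is to measure low-level progress by the residual $\Xi_k=\|K(v_k)u_k-f\|^2$ rather than by $\Delta_k$, to use the commuting hypothesis so that $K(v_k)M^{-1}(v_k)$ is symmetric positive definite with spectrum in $[\underline{\rho}/\bar{\rho}_M,\bar{\rho}/\underline{\rho}_M]$ (this is exactly what keeps the squared condition number of \prettyref{alg:POpt} out of $\Theta_k$), to obtain the recursion $\Xi_{k+1}\leq\Theta_k\Xi_k+\mathcal{O}(\alpha_k^2\|u_k\|^6)$ with the drift controlled by $L_K$, to prove uniform boundedness of $\Xi_k$ and $u_k$ by induction under \prettyref{ass:GammaChoicePre}, and to feed $\Xi_k=\mathcal{O}(k^{1-2m})$ into the high-level analysis with the gradient error bounded via $\|u_k-u_f(v_k)\|\leq\Xi_k^{1/2}/\underline{\rho}$. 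This is precisely the content of \prettyref{lem:lowLevelErrorPre} through \prettyref{cor:projectionErrorPre}. (Your additive term ``$\Gamma_k^{-1}\|v_{k+1}-v_k\|^2$'' is presumably a slip: the coefficient multiplying the drift is proportional to $\alpha_k^2$, not to the inverse of $\Gamma_k$.)

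One step would fail if taken literally: the descent inequality $l(v_{k+1})\leq l(v_k)-\tfrac{\alpha_k}{2}\|\nablaX l(v_k)\|^2+\cdots$. What the projected-descent lemma actually controls is the gradient mapping, i.e. $\tfrac{1}{\alpha_k}\|v_{k+1}-v_k\|^2$ (equivalently $\alpha_k\Delta_k^v$ after correcting for the approximate gradient), and by the Calamai--Mor\'e monotonicity of $\alpha\mapsto\|\Proj{v-\alpha g}-v\|/\alpha$ one only has $\Delta_k^v\leq\|\nablaX l(v_k)\|^2$ for a fixed positive step size --- the inequality points the wrong way for your claim, so $\sum_k\alpha_k\|\nablaX l(v_k)\|^2<\infty$ cannot be extracted this way and the li{}minf argument does not close on its own. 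The paper bridges this gap in the final block of the proof of \prettyref{thm:gradientConvergencePre}: it picks a feasible direction $d_k\in\mathcal{T}_X(v_k)$ with $\|d_k\|\leq1$ and $\|\nablaX l(v_k)\|\leq-\left<\nabla l(v_k),d_k\right>+\epsilon/4$, then uses the obtuse-angle criterion at $z=\tilde{v}_{k+1}+\tau_k d_k$ (this is where the polytopic structure of $X$ from \prettyref{ass:StiffnessParameter} is needed) to bound the directional derivative by $\sqrt{\Delta_k^v}$ plus the $\mathcal{O}(k^{(1-2m)/2})$ gradient-approximation error. Since you state you would parallel the proof of \prettyref{thm:gradientConvergence}, importing that same closing argument repairs this; but as written the $\|\nablaX l(v_k)\|^2$ descent bound is not available.
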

The proof of \prettyref{thm:gradientConvergencePre} is provided in \ifsupp\prettyref{sec:PreconditionedProof}\else our extended version\fi, which allows the choice of a large, constant step size $\beta_k=1$ when the Krylov-preconditioner is used.

\subsection{\label{sec:fastProjection}Efficient Implementation}
We present some practical strategies to further accelerate the computational efficacy of all three \prettyref{alg:Opt},\ref{alg:POpt},\ref{alg:CPOpt} that are compatible with our theoretical analysis. First, the total volume constraint is almost always active as noted in \cite{svanberg2007mma}. Therefore, it is useful to maintain the total volume constraint when computing the approximate gradient. If we define the approximate gradient as $\tilde{\nabla}l\triangleq-\frac{1}{2}u_k^T\FPPR{K}{v_k}u_k$ and consider the total volume constraint $\ALLONE^Tv_k=\bar{v}$, then a projected gradient can be computed by solving:
\begin{align*}
\tilde{\nabla}_Pl\triangleq\argmin{\ALLONE^T\tilde{\nabla}_Pl=0}\|\tilde{\nabla}_Pl-\tilde{\nabla}l\|^2,
\end{align*}
with the analytic solution being $\tilde{\nabla}_Pl=\tilde{\nabla}l-\ALLONE\ALLONE^T\tilde{\nabla}l/E$. In our experiments, using $\tilde{\nabla}_Pl$ in place of $\tilde{\nabla}l$ in the last line of FBTO algorithms can boost the convergence rate at an early stage of optimization. 

Second, the implementation of the projection operator $\Proj{\bullet}$ can be costly in high-dimensional cases. However, our convex set $X$ typically takes a special form that consists of mostly bound constraints $\ALLONE\geq v\geq \underline{v}\ALLONE$ with only one summation constraint $\ALLONE^Tv\leq\bar{v}$. Such a special convex set is known as a simplex and a special $\mathcal{O}(E\log(E))$ algorithm exists for implementing the $\Proj{\bullet}$ as proposed in \cite{condat2016fast}. Although the original algorithm \cite{condat2016fast} only considers the equality constraint $\ALLONE^Tv=1$ and single-sided bounds $v\geq0$, a similar technique can be adopted to handle our two-sided bounds and inequality summation constraint and we provide its derivation for completeness. We begin by checking whether the equality constraint is active. We can immediately return if $\ALLONE^T\ProjBox{v_k}\leq\bar{v}$. Otherwise, the projection operator amounts to solving the following quadratic program:
\begin{align*}
\argmin{\ALLONE\geq v\geq \underline{v}\ALLONE}\frac{1}{2}\|v-v_k\|^2\quad\ST\;\ALLONE^Tv=\bar{v},
\end{align*}
whose Lagrangian $\mathcal{L}$ and first-order optimality conditions are:
\small
\begin{align*}
\mathcal{L}=&\frac{1}{2}\|v-v_k\|^2+\lambda_\ALLONE^T(\ALLONE-v)+\lambda_{\underline{v}}^T(v-\underline{v}\ALLONE)+\lambda(\ALLONE^Tv-\bar{v})\\
v=&v_k+\lambda_\ALLONE-\lambda_{\underline{v}}-\lambda\ALLONE\land
\ALLZERO\leq\lambda_\ALLONE\perp\ALLONE-v\geq\ALLZERO\land
\ALLZERO\leq\lambda_{\underline{v}}\perp{v-\underline{v}\ALLONE}\geq\ALLZERO,
\end{align*}
\normalsize
where $\lambda_\ALLONE,\lambda_{\underline{v}},\lambda$ are Lagrange multipliers. We can conclude that $v=\text{Clamp}(v_k-\lambda\ALLONE,\underline{v}\ALLONE,\ALLONE)$ and the following equality holds due to the constraint $\ALLONE^Tv=\bar{v}$ being active:
\begin{align}
\label{eq:piecewiseLinearEQ}
\bar{v}=\ALLONE^T\text{Clamp}(v_k-\lambda\ALLONE,\underline{v}\ALLONE,\ALLONE),
\end{align}
which is a piecewise linear equation having at most $2E$ pieces. There are $E$ left-nodes in the form of $v_k-\underline{v}\ALLONE$ and $E$ right-nodes in the form of $v_k-\ALLONE$, separating different linear pieces. The piecewise linear equation can be solved for $\lambda$ and thus $v$ by first sorting the $2E$ nodes at the cost of $\mathcal{O}(E\log E)$ and then looking at each piece for the solution. We summarize this process in \prettyref{alg:Proj} where we maintain the sorted end points of the line segments via running sums.
\begin{algorithm}[ht]
\caption{\label{alg:Proj} Project $v_k$ onto $X$ ($\text{SLOPE}$ is the slope of each line segment, $\text{RHS}$ is the righthand side of \prettyref{eq:piecewiseLinearEQ})}
\begin{algorithmic}[1]
\If{$\ALLONE^T\ProjBox{v_k}\leq\bar{v}$}
\State Return $\ProjBox{v_k}$
\EndIf
\State $\text{ARRAY}\gets\emptyset$
\For{$v_k^e\in v_k$}
\State $\text{ARRAY}\gets\text{ARRAY}\bigcup \{v_k^e-\underline{v},v_k^e-1\}$
\EndFor
\State Sort $\text{ARRAY}$ from low to high\Comment{$\mathcal{O}(E\log E)$}
\State $\text{SLOPE}\gets0$, $\text{RHS}\gets E$, $\text{NODES}\gets\emptyset$, $\lambda_\text{last}^*\gets0$
\For{$\lambda^*\in\text{ARRAY}$}\Comment{$\mathcal{O}(E)$}
\State $\text{RHS}\gets\text{RHS}+\text{SLOPE}(\lambda^*-\lambda_\text{last}^*)$
\If{$\lambda^*$ of form $v_k^e-1$}
\State $\text{SLOPE}\gets\text{SLOPE}-1$
\EndIf
\State \algorithmicelse\ $\text{SLOPE}\gets\text{SLOPE}+1$
\Comment{$\lambda^*$ of form $v_k^e-\underline{v}$}
\State $\text{NODES}\gets\text{NODES}\bigcup\{<\lambda^*,\text{RHS}>\}$, $\lambda_\text{last}^*\gets\lambda^*$
\EndFor
\For{Consecutive $\text{NODE}, \text{NODE}'\in\text{NODES}$}\Comment{$\mathcal{O}(E)$}
\If{Segment $<\text{NODE}, \text{NODE}'>$ passes through $\bar{v}$}
\State Solve for $\lambda$ and return $\text{Clamp}(v_k-\lambda\ALLONE,\underline{v}\ALLONE,\ALLONE)$
\EndIf
\EndFor
\end{algorithmic}
\end{algorithm}
\begin{figure*}[t]
\centering
\includegraphics[width=\linewidth]{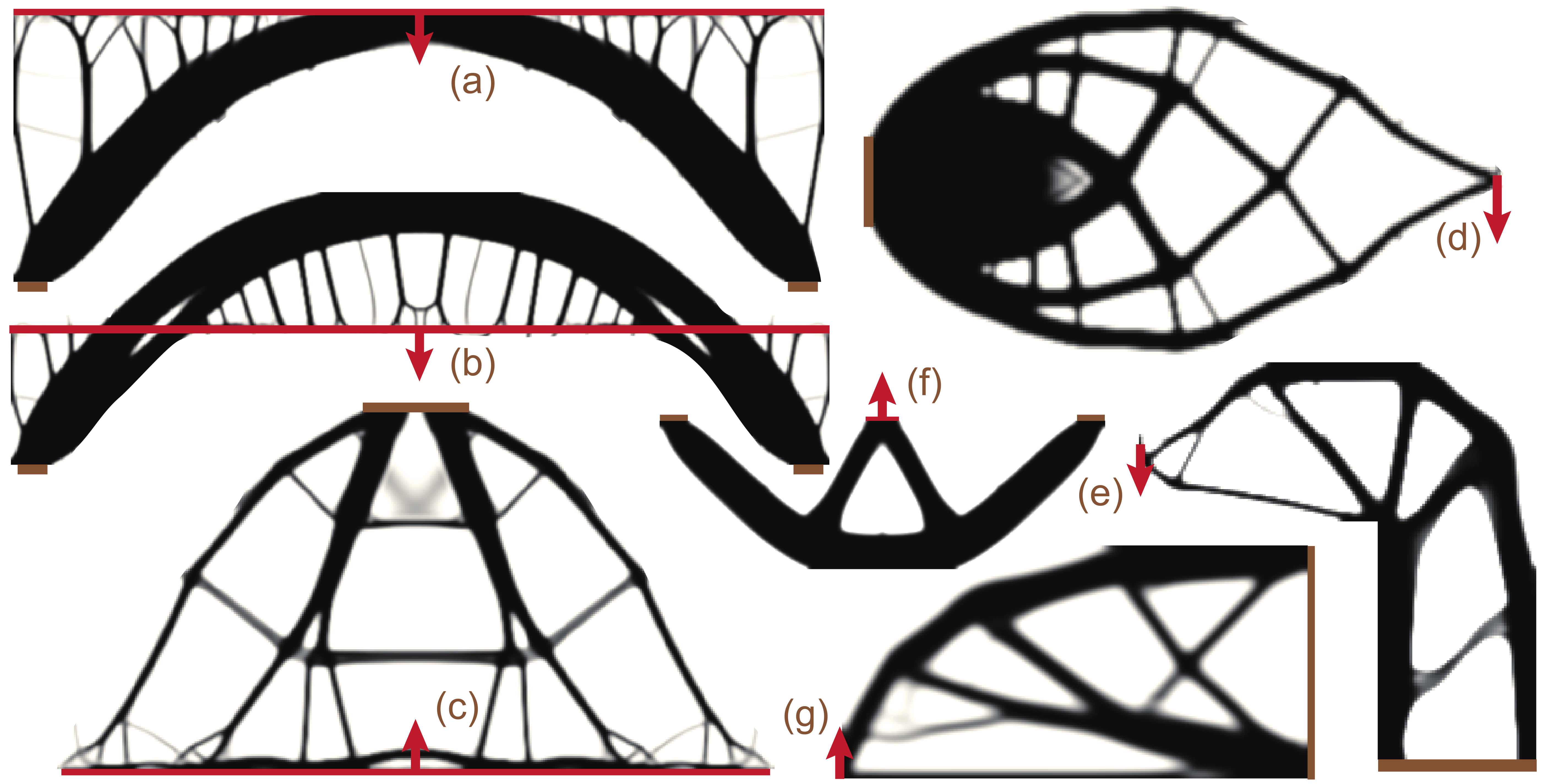}
\caption{\label{fig:gallery} \small{A gallery of benchmark problems selected from \cite{valdez2017topology,7332965} and 2D version of problems in \cite{kambampati2020large}, which are solved using \prettyref{alg:CPOpt}.}}
\vspace{-5px}
\end{figure*}
\subsection{\label{sec:parallel}Fine-Grained Parallelism}
\begin{figure}[th]
\centering
\includegraphics[width=\linewidth]{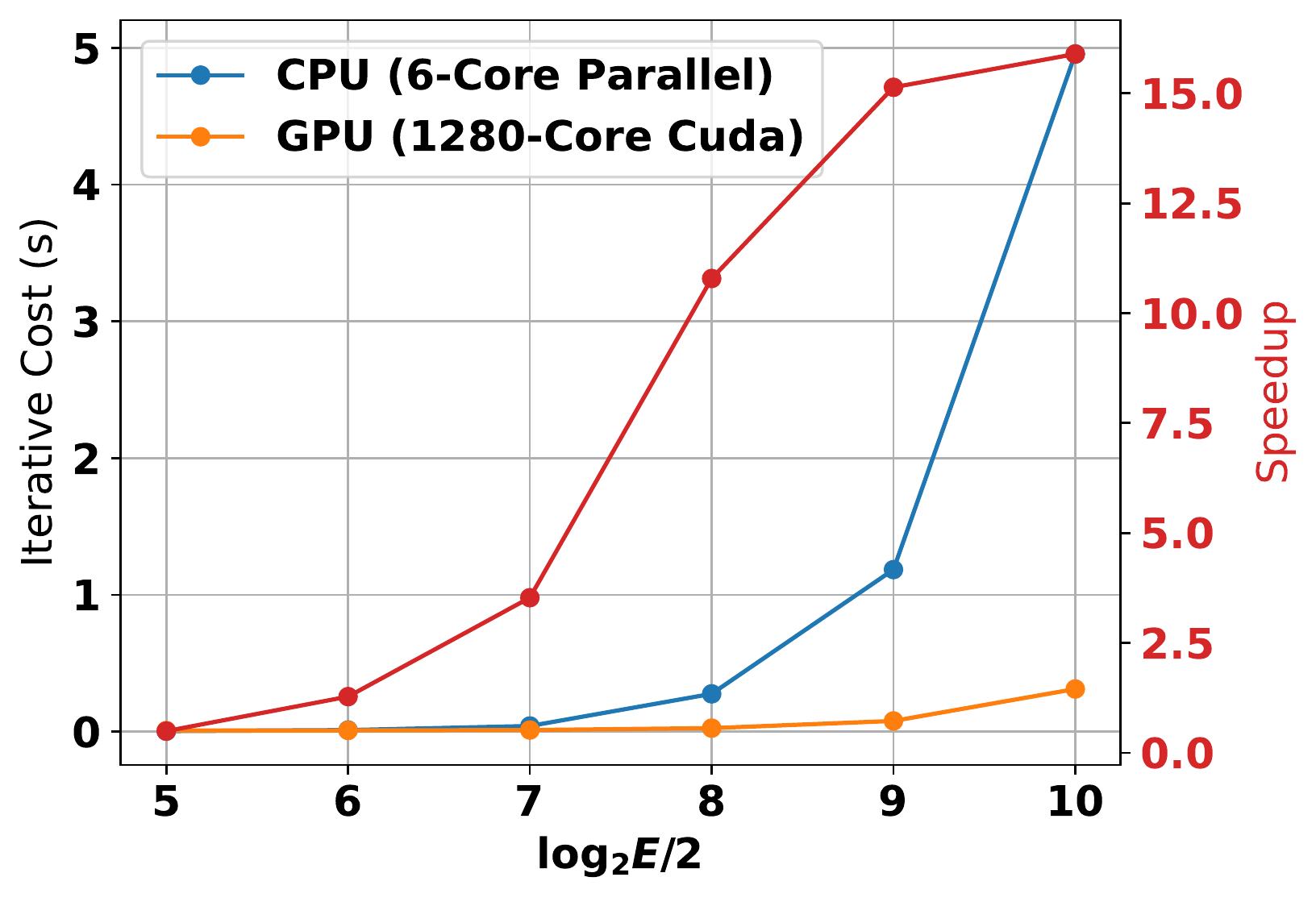}
\vspace{-15px}
\caption{\label{fig:parallel} \small{The acceleration rate of low-end mobile GPU ($1280$ cores) versus CPU (6 cores) implementation under different resolutions of discretization. We achieve a maximal speedup of $16$ times.}}
\vspace{-5px}
\end{figure}
Prior work \cite{7332965} proposes to accelerate TO solver on GPU, but they require a complex GPU multgrid implementation which is also costly to compute per iteration. In comparison, our \prettyref{alg:Opt},\ref{alg:POpt},\ref{alg:CPOpt} can make full use of many-core hardwares with slight modifications. In this section, we discuss necessary modifications for a GPU implementation assuming the availability of basic parallel scan, reduction, inner-product, and radix sort operations \cite{sanders2010cuda}. A bottleneck in \prettyref{alg:Opt},\ref{alg:POpt},\ref{alg:CPOpt} is matrix-vector production $K(v)u$, of which the implementation depends on the type of FEA discretization. If the discretization uses a regular pattern, such as a regular grid in \prettyref{fig:pipeline}, then the element-to-node mapping is known and can be hard coded, so the computation for each node is independent and costs $\mathcal{O}(1)$ if $E$ thread blocks are available. For irregular discretizations, we have to store the sparse matrix explicitly and suggest using a parallel sparse matrix-vector product routine \cite{bell2008efficient}. To implement the Krylov-preconditioner, we perform in-place QR factorization to solve for $c_i^*$. The in-place QR factorization involves computing the inner-product for $D^2$ times and then solving the upper-triangular system. Altogether, the least square solve in \prettyref{eq:LSP} costs $\mathcal{O}(D^2\log E+D^2)$ when $E$ threads are available. Here we use a serial implementation of the upper-triangular solve, which is not a performance penalty when $D\ll E$. Finally, \prettyref{alg:Proj} involves one radix sort that costs $\mathcal{O}(\log E)$ and three for loops, the first and last for loops do not have data dependency and cost $\mathcal{O}(1)$. The second for loop accumulates two variables ($\text{Slope}, \text{Total}$) that can be accomplished by a parallel scan taking $\mathcal{O}(\log E)$. In summary, the cost of each iteration of \prettyref{alg:Opt},\ref{alg:POpt},\ref{alg:CPOpt} can be reduced to $\mathcal{O}(D^2\log E)$ when $\mathcal{O}(E)$ threads are available. We profile the parallel acceleration rate in \prettyref{fig:parallel}.
\begin{figure}[ht]
\vspace{-5px}
\centering
\setlength{\tabcolsep}{5pt}
\begin{tabular}{l}
\includegraphics[width=\linewidth]{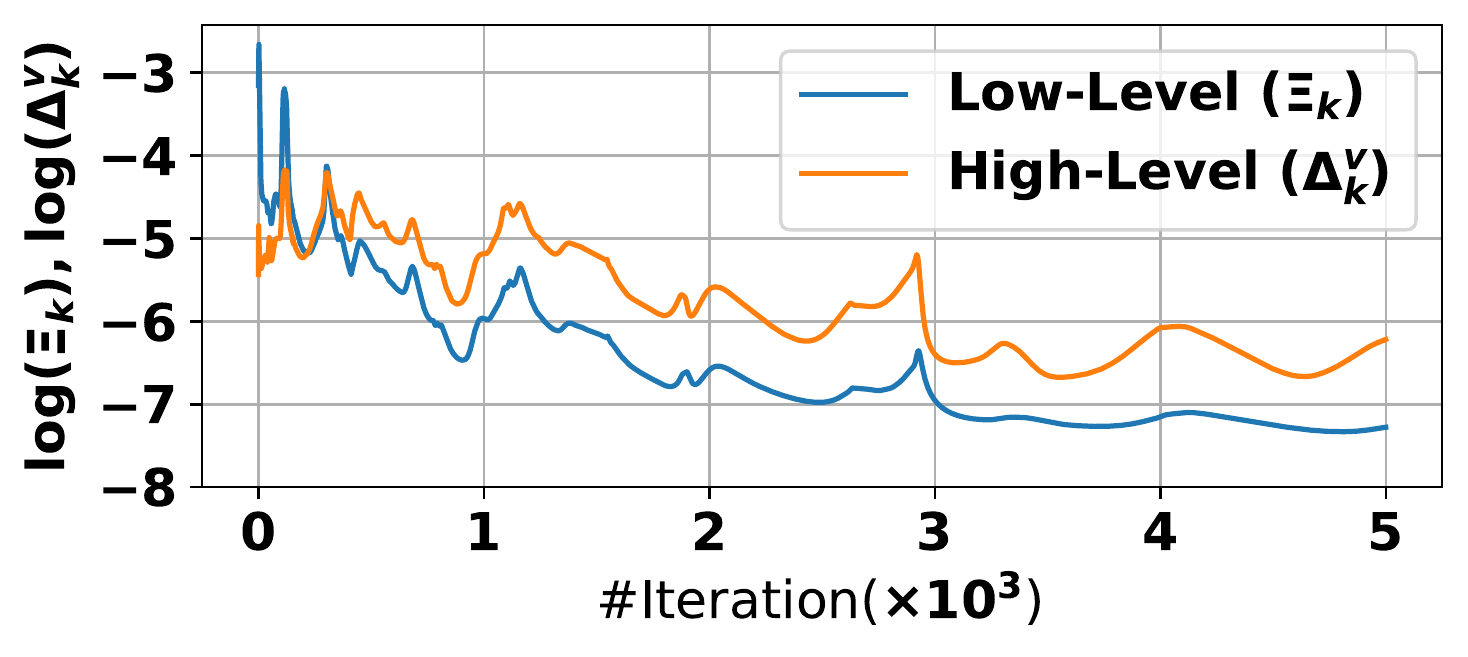}\vspace{-5px}
\put(-200,7){(a)}\\
\includegraphics[width=\linewidth]{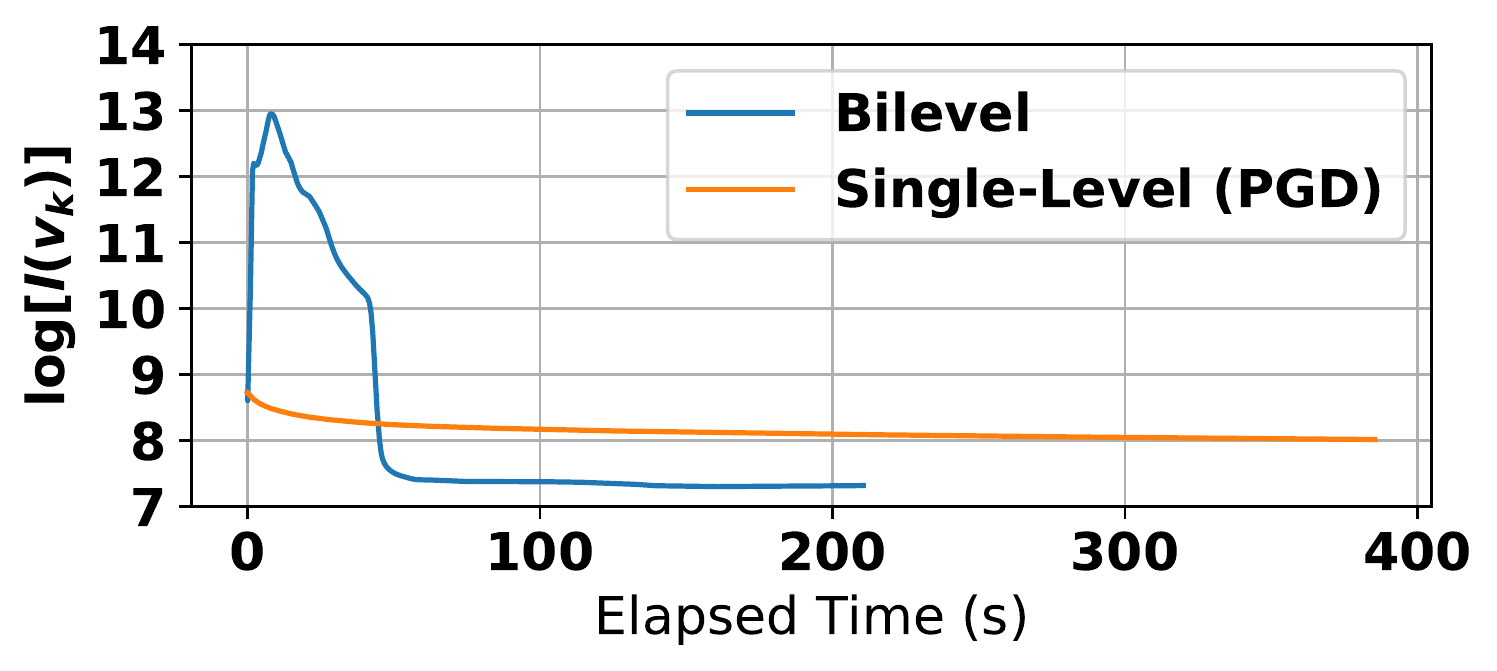}\vspace{-5px}
\put(-200,7){(b)}
\end{tabular}
\caption{\label{fig:history} \small{For the benchmark problem of \prettyref{fig:teaser}, we run bilevel optimization (\prettyref{alg:CPOpt}) for $5000$ iterations and single-level optimization (PGD) for $200$ iterations. (a): We profile our low- and high-level error evolution, plotted against the number of iterations. (b): We compare the convergence history of true energy function value $l(v_k)$ of both algorithms, plotted against computational time. Our method not only converges faster but also converges to a better solution.}}
\vspace{-5px}
\end{figure}
\begin{figure}[ht]
\vspace{-5px}
\centering
\includegraphics[width=\linewidth]{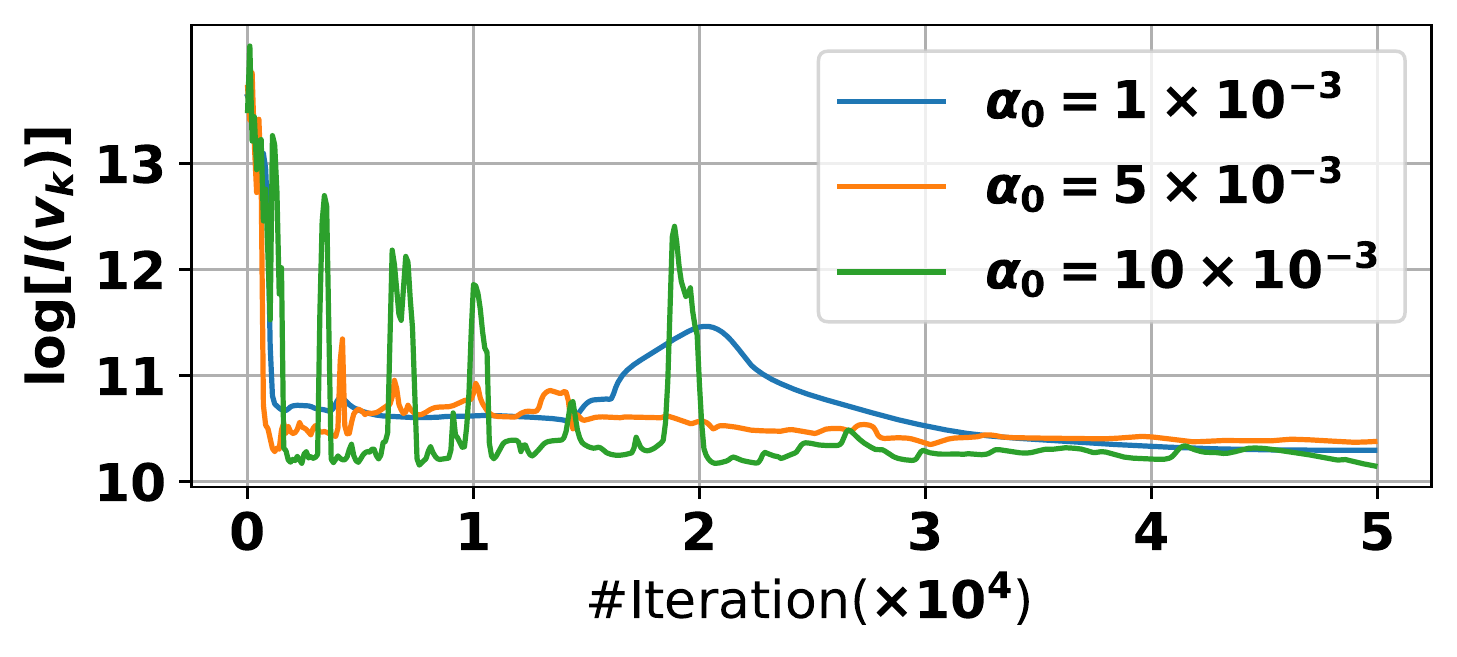}
\caption{\label{fig:alpha0} \small{For the benchmark problem in \prettyref{fig:gallery}e, we compared the convergence history of three choices of $\alpha_0$. A larger $\alpha_0$ would lead to more fluctuations at an early stage, but all three optimizations would ultimately converge. When $\alpha_0>10^{-2}$, we observe divergence.}}
\vspace{-5px}
\end{figure}
\begin{figure}[ht]
\vspace{-5px}
\centering
\includegraphics[width=\linewidth]{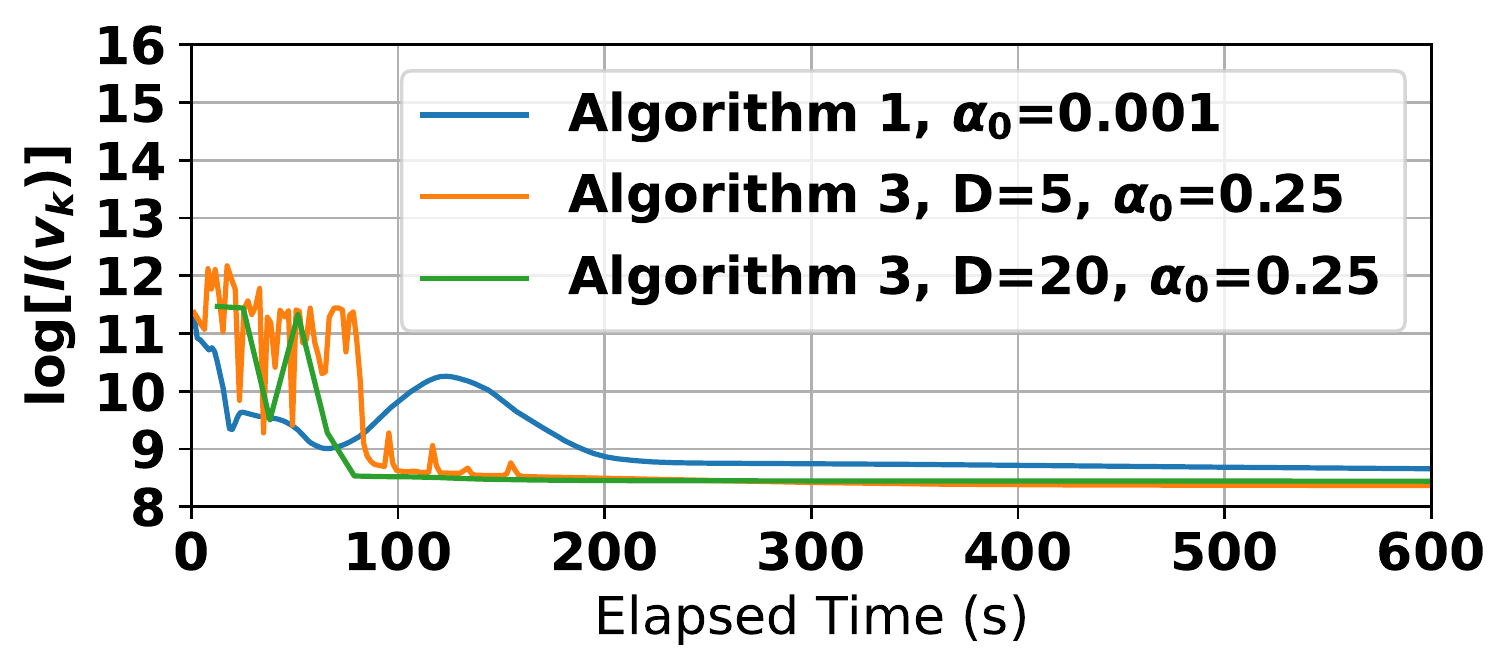}
\caption{\label{fig:DHistory} \small{We compare the convergence history of \prettyref{alg:Opt} and \prettyref{alg:CPOpt} under different $D$, in terms of exact energy value plotted against computational time. We need to use an extremely small $\alpha_0=0.001$ for \prettyref{alg:Opt} to converge. For \prettyref{alg:CPOpt} with $D\geq5$, a much larger $\alpha_0=0.25$ can be used, although there are some initial fluctuations when $D=5$. \prettyref{alg:CPOpt} with $D=20$ exhibits the highest overall convergence speed.}}
\vspace{-5px}
\end{figure}
\begin{figure}[ht]
\vspace{-5px}
\centering
\includegraphics[width=\linewidth]{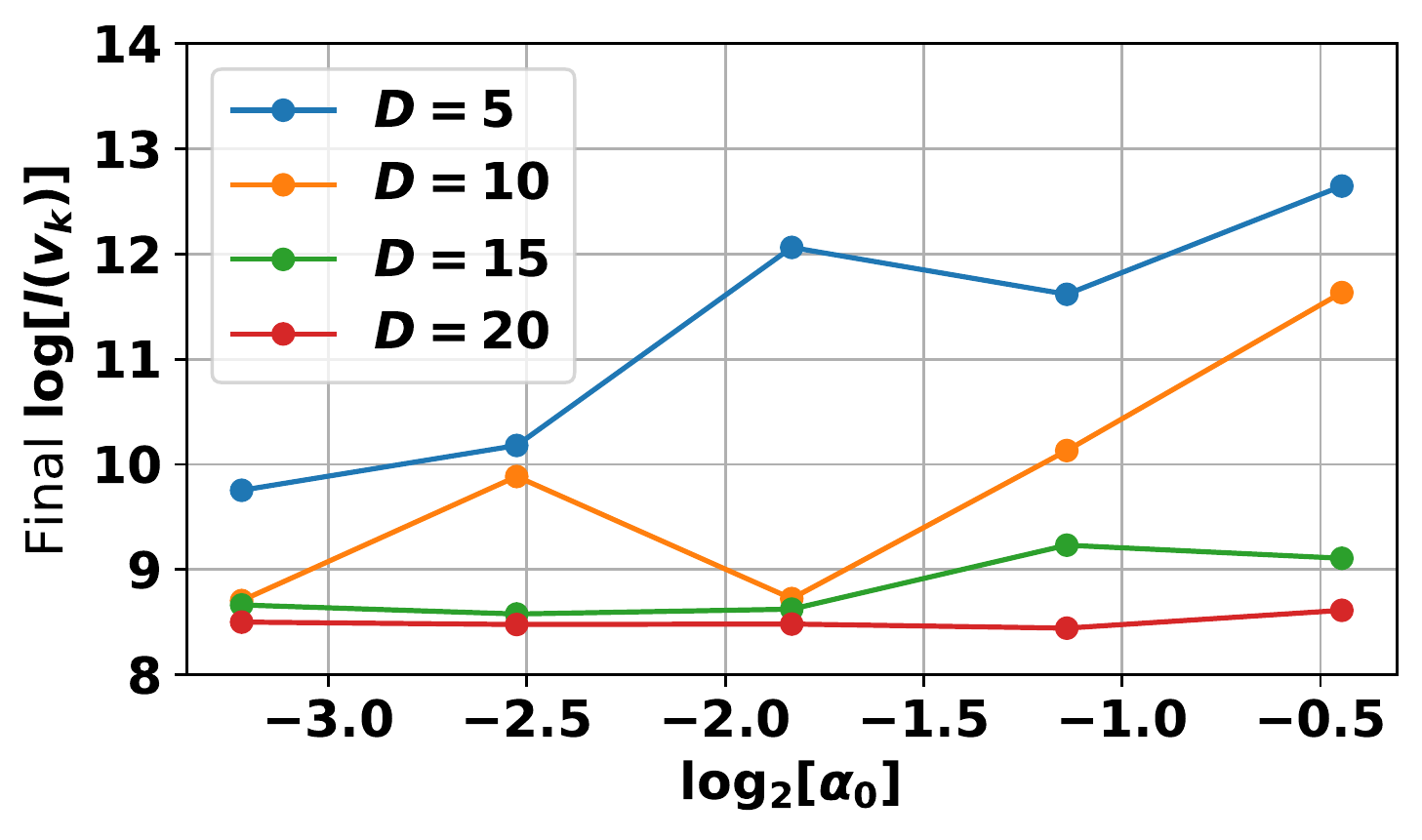}
\caption{\label{fig:D} \small{We run our method for $5000$ iterations and plot the final exact energy value against $\alpha_0$ where $\log(l(v_k))>9$ indicates divergence. When $D\geq15$, our method is convergent under all the step sizes. Therefore, we choose $D=20$ in all examples and only leave $\alpha_0$ to be tuned by users.}}
\vspace{-5px}
\end{figure}
\section{\label{sec:experiment}Experiments}
In this section, we compare our most efficient \prettyref{alg:CPOpt} with other methods that can provide convergence guarantee. We implement our method using native C++ on a laptop machine with a 2.5G 6-core Intel i7 CPU and a 1.48G Nvidia GTX 1060 mobile GPU having 1280 cores. Our implementation makes full use of the cores on CPU via OpenMP and GPU via Cuda and we implement the regular grid FEA discretization. In all the examples, we apply a $7\times7$ separable Gaussian kernel $\mathbb{C}$ as our material filter (see \prettyref{rem:modifier}), which can be implemented efficiently on GPU as a product of multiple linear kernels. We found that a $7\times7$ kernel achieves the best balance between material thickness and boundary sharpness. Unless otherwise stated, we choose \prettyref{alg:CPOpt} with $\underline{v}=0.1, \eta=3, \alpha_k=\alpha_0k^{-3/4}, \beta_k=1, D=20$ for our Krylov-preconditioner in all the experiments where $\alpha_0$ varies across different experiments. \ifsupp We justify these parameter choices in \prettyref{sec:parameter}. \fi For the three algorithms in \prettyref{sec:method}, we terminate when the relative change to infill levels over two iterations ($\|v_{k+1}-v_k\|_\infty$) is smaller than $10^{-4}$ and the absolute error of the linear system solve ($\|K(v_k)u_k-f\|_\infty$) is smaller than $10^{-2}$, which can be efficiently checked on GPU by using a reduction operator that costs $\mathcal{O}(\log E)$. The two conditions are measuring the optimality of the high- and low-level problems, respectively. The later is otherwise known as the tracking error, which measures how closely the low-level solution $u_k$ follows its optima as the high-level solution $v_k$ changes. We choose 7 benchmark problems from \cite{valdez2017topology,7332965,kambampati2020large} to profile our method as illustrated in \prettyref{fig:gallery} and summarized in \prettyref{table:data}. Our iterative cost is at most $40$ms when handling a grid resolution of $192\times576$, which allows mechanical designers to quickly preview the results.
\begin{table}[ht]
\begin{center}
\resizebox{.48\textwidth}{!}{
\rowcolors{0}{gray!50}{white}
\begin{tabular}{lcccccc}
\toprule
Benchmark &  Res. & Frac. & \#Iter. & \TWORCell{Time}{(Total)} & \TWORCell{Time}{(Iter.)} & Mem. \\
\midrule
\prettyref{fig:teaser}   & $128\times256$ & $0.4$ & $49600$ & $694$ & $0.014$ & $8$\\
\prettyref{fig:gallery}a & $192\times576$ & $0.3$ & $14600$ & $584$ & $0.040$ & $24$\\
\prettyref{fig:gallery}b & $192\times576$ & $0.3$ & $8900$ & $356$ & $0.040$ & $24$\\
\prettyref{fig:gallery}c & $192\times384$ & $0.3$ & $37400$ & $1159$ & $0.031$ & $18$\\
\prettyref{fig:gallery}d & $160\times240$ & $0.4$ & $20400$ & $346$ & $0.017$ & $8$\\
\prettyref{fig:gallery}e & $160\times160$ & $0.5$ & $13300$ & $146$ & $0.011$ & $4$\\
\prettyref{fig:gallery}f & $128\times384$ & $0.5$ & $9300$ & $195$ & $0.021$ & $10$\\
\prettyref{fig:gallery}g & $128\times256$ & $0.3$ & $19400$ & $252$ & $0.013$ & $8$\\
\bottomrule
\end{tabular}}
\end{center}
\caption{\label{table:data} \small{Statistics of benchmark problems: benchmark index, grid resolution, volume fraction $(\bar{v})$, iterations until convergence, total computational time(s), iterative cost(s), and GPU memory cost (mb).}}
\vspace{-5px}
\end{table}

\paragraph{Convergence Analysis} We highlight the benefit of using approximate matrix inversions by comparing our method with Projected Gradient Descent (PGD). Using a single-level formulation, PGD differs from \prettyref{alg:CPOpt} by using exact matrix inversions, i.e., replacing \prettyref{ln:lowlevel} with $u_{k+1}\gets K^{-1}(v_k)f$. We profile the empirical convergence rate of the two formulations in \prettyref{fig:history} for solving the benchmark problem in \prettyref{fig:teaser}. The comparisons on other benchmark problems are given in \ifsupp\prettyref{sec:additionalExpr}. \else our extended version. \fi Since exact, sparse matrix factorization prevents PGD from being implemented on GPU efficiently, our method allows fine-grained parallelism and is much more GPU-friendly. For comparison, we implement other steps of PGD on GPU while revert to CPU for sparse matrix factorization. As shown in \prettyref{fig:history}a, the high- and low-level error in our method suffers from some local fluctuation but they are both converging overall. We further plot the evolution of exact energy value $l(v_k)$. After about $70$s of computation, our method converges to a better optima, while PGD is still faraway from convergence as shown in \prettyref{fig:history}b. We further notice that our method would significantly increase the objective function value at an early stage of optimization. This is because first-order methods rely on adding proximal regularization terms to the objective function to form a Lyapunov function (see e.g. \cite{chen2021closing}). Although our analysis is not based on Lyapunov candidates, we speculate a similar argument to \cite{chen2021closing} applies.

\paragraph{Parameter Sensitivity} When setting up the SIMP problem, the user needs to figure out the material model $K_e$ and external load profile $f$. We use linear elastic material relying on Young's modulus and Poisson's ratio. We always set Young's modulus to $1$ and normalize the external force $f$, since these two parameters only scale the objective function without changing the solutions. When solving the SIMP problem, our algorithm relies on two critical parameters: initial high-level step size $\alpha_0$ and Krylov subspace size $D$ for preconditioning. We profile the sensitivity with respect to $\alpha_0$ in \prettyref{fig:alpha0} for benchmark problem in \prettyref{fig:gallery}e. We find our method convergent over a wide range of $\alpha_0$, although an overly large $\alpha_0$ could lead to excessive initial fluctuation. In practice, matrix-vector product is the costliest step and takes up $96\%$ of the computational time, so the iterative cost is almost linear in $D$. As illustrated in \prettyref{fig:DHistory}, \prettyref{alg:Opt} would require an extremely small $\alpha_0$ and more iterations to converge, while \prettyref{alg:CPOpt} even with a small $D$ can effectively reduce the low-level error and allow a wide range of choices for $\alpha_0$, leading to a faster overall convergence speed. We profiled the sensitivity to subspace size $D$ in \prettyref{fig:D}, and we observe convergent behaviors for all $\alpha_0$ when $D\geq15$, so we choose $D=20$ in all examples and only leave $\alpha_0$ to be tuned by users.

\begin{figure}[t]
\vspace{-5px}
\centering
\includegraphics[width=\linewidth]{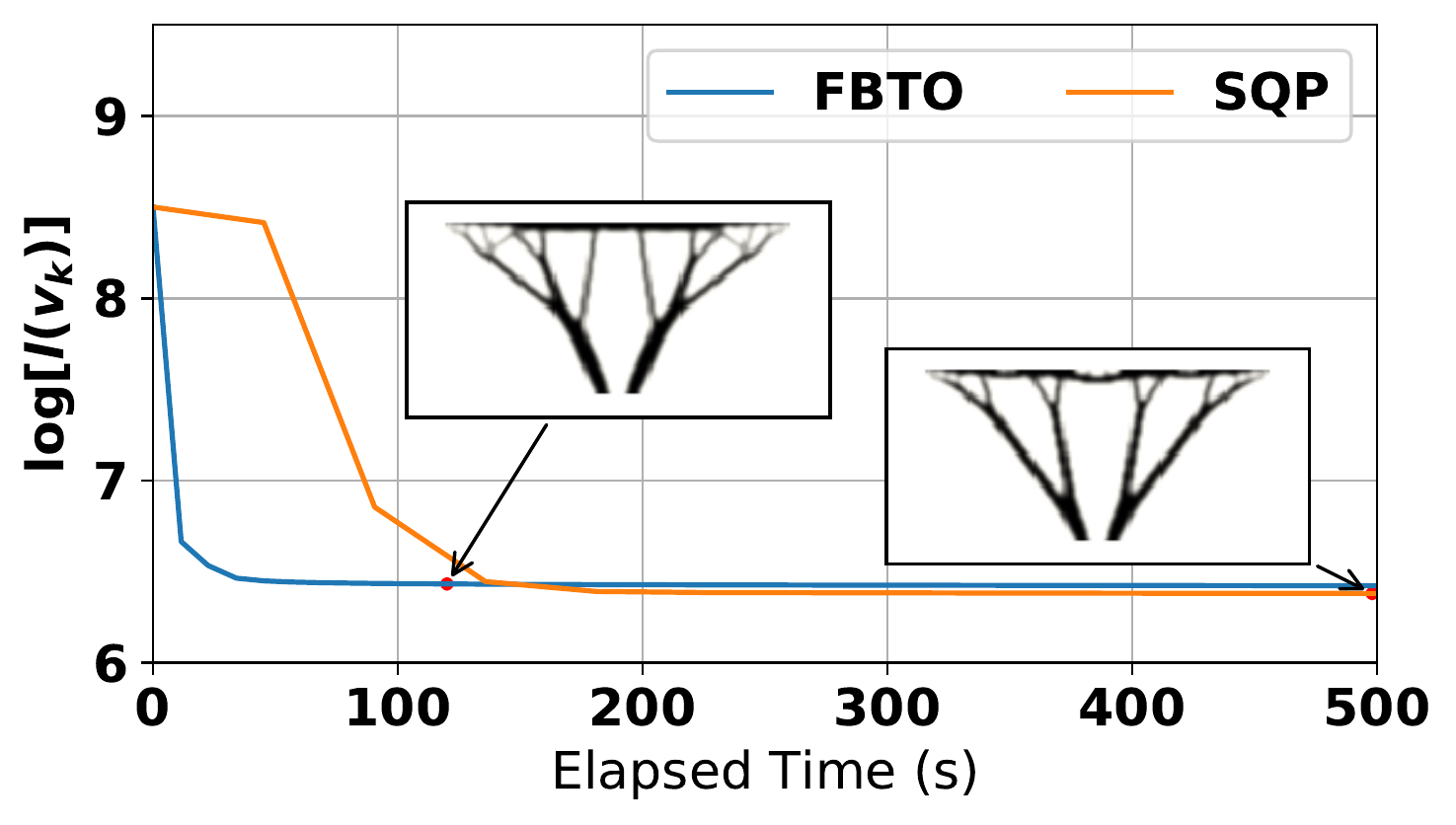}
\caption{\label{fig:compareSQP} \small{We compare SQP and FBTO on the benchmark problem (\prettyref{fig:gallery}c) using a resolution of $64\times128$. While SQP ultimately converges to a better solution, the two solutions are visually similar. Our method finds an approximate solution much more quickly, allowing designers to interactively preview.}}
\vspace{-5px}
\end{figure}
\paragraph{Comparison with SQP} SQP is an off-the-shelf algorithm that pertains global convergence guarantee \cite{rojas2016efficient} and have been used for topology optimization. We conduct comparisons with SQP on the benchmark problem in \prettyref{fig:gallery}c. We use the commercial SQP solver \cite{waltz2004knitro} where we avoid computing the exact Hessian matrix of $l(v)$ via L-BFGS approximation and we avoid factorizing the approximate Hessian matrix using a small number of conjugate gradient iterations. The comparisons are conducted at a resolution of $64\times128$, which is one third that of \prettyref{fig:gallery}c. Using an exact Hessian factorization or a larger resolution in SQP would use up the memory of our desktop machine. As shown in \prettyref{fig:compareSQP}, our method converges to an approximate solution much quicker than SQP, allowing users to preview the results interactively. Such performance of our method is consistent with many other first-order methods, e.g., the Alternating Method Direction of Multiplier (ADMM), which is faster in large-scale problems and converges to a solution of moderate accuracy, while SQP takes longer to converge to a more accurate solution. 

\begin{figure}[t]
\vspace{-5px}
\centering
\includegraphics[width=\linewidth]{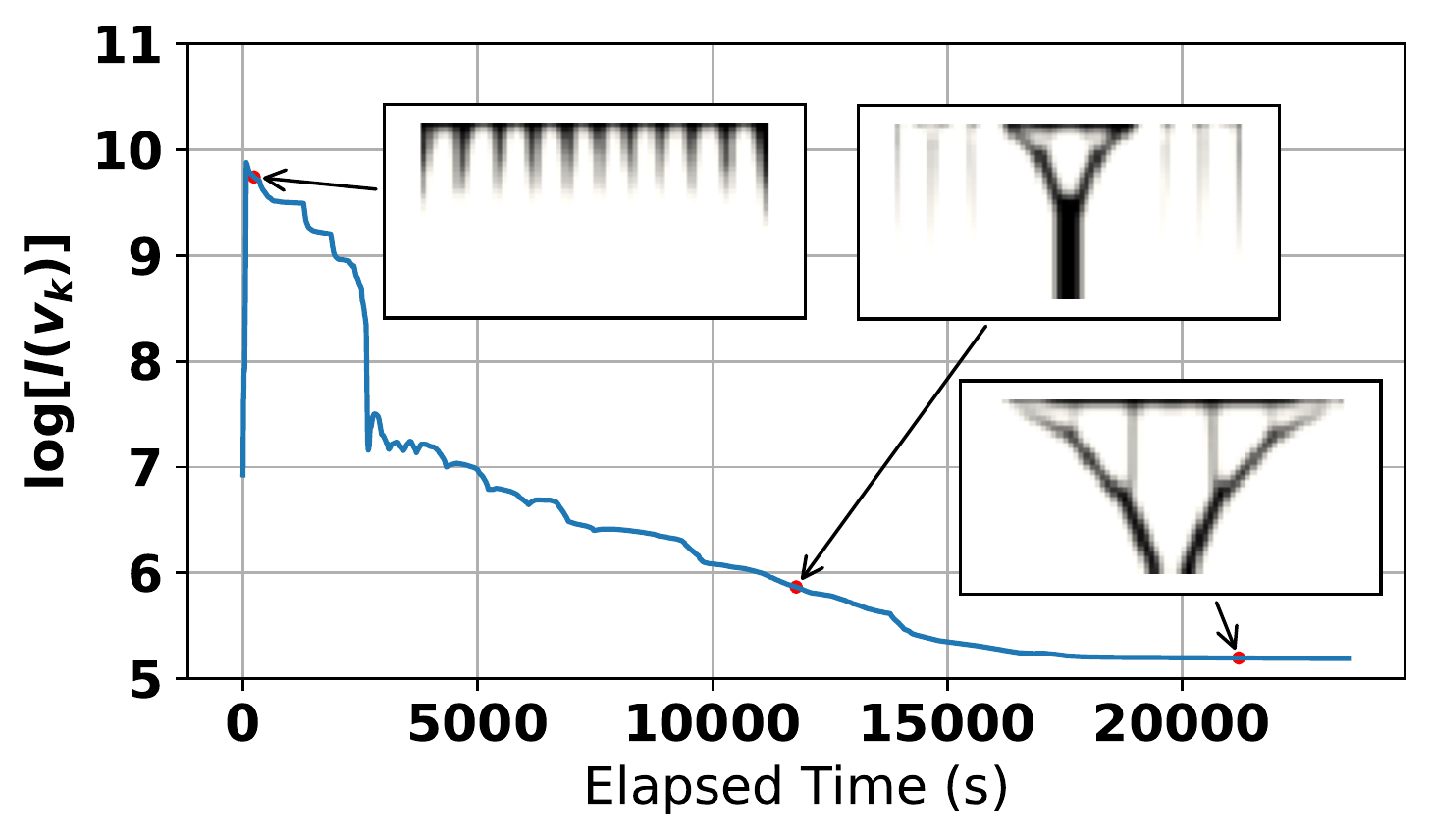}
\caption{\label{fig:compareALM} \small{We run ALM on the benchmark problem (\prettyref{fig:gallery}c) at a resolution of $32\times64$. ALM only exhibits useful solution after $12600$s ($3.5$hr) and converge in $22680$s ($6.3$hr).}}
\vspace{-5px}
\end{figure}
\paragraph{Comparison with ALM} ALM formulates $u,v$ both as decision variables and treat FEA as additional hard constraints:
\begin{align*}
\argmin{v\in X, u}\frac{1}{2}u^TK(v)u\quad\ST\;K(v)u=f.
\end{align*}
By turning the hard constraints into soft penalty functions via the augmented Lagrangian penalty term, ALM stands out as another solver for TO that allows inexact matrix inversions. Although we are unaware of ALM being used for TO problems, it has been proposed to solve general bilevel programs in \cite{mehra2021penalty}. We compare our method with the widely used ALM solver \cite{vanderbei1999loqo}. As the major drawback, ALM needs to square the FEA system matrix $K(v)$, which essentially squares its condition number and slows down the convergence. We have observed this phenomenon in \prettyref{fig:compareALM}, where we run ALM at a resolution of $32\times64$ which is one-sixth that of \prettyref{fig:gallery}c. Even at such a coarse resolution, ALM cannot present meaningful results until after $12600$s ($3.5$hr) that is much longer than either FBTO or SQP. Our \prettyref{alg:POpt} suffers from the same problem of squaring the system matrix. Therefore, we suggest using \prettyref{alg:POpt} only when one has a highly efficient, non-commuting preconditioner such as a geometric multigrid.
\section{\label{sec:conclusion}Conclusion \& Limitation}
We present a novel approach to solve a large subclass of TO problems known as the SIMP model. Unlike classical optimization solvers that perform the exact sensitivity analysis during each iteration, we propose to use inexact sensitivity analysis that is refined over iterations. We show that this method corresponds to the first-order algorithm for a bilevel reformulation of TO problems. Inspired by the recent analytical result \cite{ghadimi2018approximation}, we show that this approach has guaranteed convergence to a first-order critical point of the original problem. Our new approach leads to low-iterative cost and allows interactive result preview for mechanical designers. We finally discuss extensions to use preconditioners and fine-grained GPU parallelism. Experiments on several 2D benchmarks from \cite{valdez2017topology,7332965,kambampati2020large} highlight the computational advantage of our method.

An inherent limitation of our method is the requirement to estimate step size parameters $\alpha_k,\beta_k$. We have shown that $\beta_k$ is fixed and our results are not sensitive to $\alpha_0$, but users still need to give a rough estimation of $\alpha_0$ that can affect practical convergence speed. Further, our first-order solver is based on plain gradient descendent steps, for which several acceleration schemes are available, e.g. Nesterov \cite{beck2009fast} and Anderson \cite{zhang2020globally} accelerations. The analysis of these techniques are left as future works. Finally, when additional constraints are incorporated into the polytopic set $X$ or non-polytopic constraints are considered, our analysis needs to be adjusted and the efficacy of \prettyref{alg:Proj} might be lost.
\printbibliography

@ARTICLE{7332965,
  author={Wu, Jun and Dick, Christian and Westermann, Rüdiger},
  journal={IEEE Transactions on Visualization and Computer Graphics}, 
  title={A System for High-Resolution Topology Optimization}, 
  year={2016},
  volume={22},
  number={3},
  pages={1195-1208},
  doi={10.1109/TVCG.2015.2502588}}

@article{svanberg2007mma,
  title={MMA and GCMMA, versions September 2007},
  author={Svanberg, Krister},
  journal={Optimization and Systems Theory},
  volume={104},
  year={2007},
  publisher={KTH Stockholm, Sweden}
}

@article{rojas2016efficient,
  title={An efficient second-order SQP method for structural topology optimization},
  author={Rojas-Labanda, Susana and Stolpe, Mathias},
  journal={Structural and Multidisciplinary Optimization},
  volume={53},
  number={6},
  pages={1315--1333},
  year={2016},
  publisher={Springer}
}

@article{ghadimi2018approximation,
  title={Approximation methods for bilevel programming},
  author={Ghadimi, Saeed and Wang, Mengdi},
  journal={arXiv preprint arXiv:1802.02246},
  year={2018}
}

@article{hong2020two,
  title={A two-timescale framework for bilevel optimization: Complexity analysis and application to actor-critic},
  author={Hong, Mingyi and Wai, Hoi-To and Wang, Zhaoran and Yang, Zhuoran},
  journal={arXiv preprint arXiv:2007.05170},
  year={2020}
}

@article{khanduri2021near,
  title={A Near-Optimal Algorithm for Stochastic Bilevel Optimization via Double-Momentum},
  author={Khanduri, Prashant and Zeng, Siliang and Hong, Mingyi and Wai, Hoi-To and Wang, Zhaoran and Yang, Zhuoran},
  journal={Advances in neural information processing systems},
  volume={34},
  year={2021}
}

@article{calamai1987projected,
  title={Projected gradient methods for linearly constrained problems},
  author={Calamai, Paul H and Mor{\'e}, Jorge J},
  journal={Mathematical programming},
  volume={39},
  number={1},
  pages={93--116},
  year={1987},
  publisher={Springer}
}

@article{10.1145/3272127.3275012,
    author = {Liu, Haixiang and Hu, Yuanming and Zhu, Bo and Matusik, Wojciech and Sifakis, Eftychios},
    title = {Narrow-Band Topology Optimization on a Sparsely Populated Grid},
    year = {2018},
    issue_date = {November 2018},
    publisher = {Association for Computing Machinery},
    address = {New York, NY, USA},
    volume = {37},
    number = {6},
    issn = {0730-0301},
    url = {https://doi.org/10.1145/3272127.3275012},
    doi = {10.1145/3272127.3275012},
    abstract = {A variety of structures in nature exhibit sparse, thin, and intricate features. It is challenging to investigate these structural characteristics using conventional numerical approaches since such features require highly refined spatial resolution to capture and therefore they incur a prohibitively high computational cost. We present a novel computational framework for high-resolution topology optimization that delivers leaps in simulation capabilities, by two orders of magnitude, from the state-of-the-art approaches. Our technique accommodates computational domains with over one billion grid voxels on a single shared-memory multiprocessor platform, allowing automated emergence of structures with both rich geometric features and exceptional mechanical performance. To achieve this, we track the evolution of thin structures and simulate its elastic deformation in a dynamic narrow-band region around high-density sites to avoid wasted computational effort on large void regions. We have also designed a mixed-precision multigrid-preconditioned iterative solver that keeps the memory footprint of the simulation to a compact size while maintaining double-precision accuracy. We have demonstrated the efficacy of the algorithm through optimizing a variety of complex structures from both natural and engineering systems.},
    journal = {ACM Trans. Graph.},
    month = {12},
    articleno = {251},
    numpages = {14},
    keywords = {multigrid solver, sparsely populated grid, topology optimization}
}

@book{luenberger1984linear,
  title={Linear and nonlinear programming},
  author={Luenberger, David G and Ye, Yinyu and others},
  volume={2},
  year={1984},
  publisher={Springer}
}

@article{benzi2002preconditioning,
  title={Preconditioning techniques for large linear systems: a survey},
  author={Benzi, Michele},
  journal={Journal of computational Physics},
  volume={182},
  number={2},
  pages={418--477},
  year={2002},
  publisher={Elsevier}
}

@article{borrvall2001large,
  title={Large-scale topology optimization in 3D using parallel computing},
  author={Borrvall, Thomas and Petersson, Joakim},
  journal={Computer methods in applied mechanics and engineering},
  volume={190},
  number={46-47},
  pages={6201--6229},
  year={2001},
  publisher={Elsevier}
}

@article{condat2016fast,
  title={Fast projection onto the simplex and the l1-ball},
  author={Condat, Laurent},
  journal={Mathematical Programming},
  volume={158},
  number={1},
  pages={575--585},
  year={2016},
  publisher={Springer}
}

@article{valdez2017topology,
  title={Topology optimization benchmarks in 2D: Results for minimum compliance and minimum volume in planar stress problems},
  author={Valdez, S Ivvan and Botello, Salvador and Ochoa, Miguel A and Marroqu{\'\i}n, Jos{\'e} L and Cardoso, Victor},
  journal={Archives of Computational Methods in Engineering},
  volume={24},
  number={4},
  pages={803--839},
  year={2017},
  publisher={Springer}
}

@article{10.1007/s00158-018-2143-8,
author = {Banh, Thanh T. and Lee, Dongkyu},
title = {Topology Optimization of Multi-Directional Variable Thickness Thin Plate with Multiple Materials},
year = {2019},
issue_date = {May 2019},
publisher = {Springer-Verlag},
address = {Berlin, Heidelberg},
volume = {59},
number = {5},
issn = {1615-147X},
url = {https://doi.org/10.1007/s00158-018-2143-8},
doi = {10.1007/s00158-018-2143-8},
abstract = {This study firstly presents a multi-material topology optimization approach for thin plates with variable thickness based on Kirchhoff plate theory. For this purpose, an alternating active-phase algorithm in conjunction with the block Gauss-Seidel method is utilized to transform a multiphase topology optimization problem with multiple volume fraction constraints to many binary phase topology optimization sub-problems with only one volume fraction constraint. Accordingly, the number of design variables depends only on one active phase in each of those sub-problems no matter how many phases the original problem are. In addition, moved and regularized Heaviside function (MRHF) that plays the role of a filter is also investigated in the framework of multiple materials field. The mathematical formulations of stiffness and complaint sensitivity with respect to multi-directional variable thickness linked to thin plate potential energy are derived in terms of multiphase design variables. Numerical examples demonstrate interactions of variables thickness and multiple materials to thin mid-plates with the same amount of volume fraction and total structural volume.},
journal = {Struct. Multidiscip. Optim.},
month = {05},
pages = {1503–1520},
numpages = {18},
keywords = {Finite element method, Multi-directional variables thickness, Multiple materials, Kirchhoff plate, MRHF, Topology optimization}
}

@article{10.1145/2601097.2601157,
author = {B\"{a}cher, Moritz and Whiting, Emily and Bickel, Bernd and Sorkine-Hornung, Olga},
title = {Spin-It: Optimizing Moment of Inertia for Spinnable Objects},
year = {2014},
issue_date = {July 2014},
publisher = {Association for Computing Machinery},
address = {New York, NY, USA},
volume = {33},
number = {4},
issn = {0730-0301},
url = {https://doi.org/10.1145/2601097.2601157},
doi = {10.1145/2601097.2601157},
abstract = {Spinning tops and yo-yos have long fascinated cultures around the world with their unexpected, graceful motions that seemingly elude gravity. We present an algorithm to generate designs for spinning objects by optimizing rotational dynamics properties. As input, the user provides a solid 3D model and a desired axis of rotation. Our approach then modifies the mass distribution such that the principal directions of the moment of inertia align with the target rotation frame. We augment the model by creating voids inside its volume, with interior fill represented by an adaptive multi-resolution voxelization. The discrete voxel fill values are optimized using a continuous, nonlinear formulation. Further, we optimize for rotational stability by maximizing the dominant principal moment. We extend our technique to incorporate deformation and multiple materials for cases where internal voids alone are insufficient. Our method is well-suited for a variety of 3D printed models, ranging from characters to abstract shapes. We demonstrate tops and yo-yos that spin surprisingly stably despite their asymmetric appearance.},
journal = {ACM Trans. Graph.},
month = {07},
articleno = {96},
numpages = {10},
keywords = {fabrication, shape optimization, moment of inertia}
}

@article{WU2017358,
title = {Minimum compliance topology optimization of shell–infill composites for additive manufacturing},
journal = {Computer Methods in Applied Mechanics and Engineering},
volume = {326},
pages = {358-375},
year = {2017},
issn = {0045-7825},
doi = {https://doi.org/10.1016/j.cma.2017.08.018},
url = {https://www.sciencedirect.com/science/article/pii/S0045782517305984},
author = {Jun Wu and Anders Clausen and Ole Sigmund},
keywords = {Topology optimization, Additive manufacturing, Two-scale structure, Infill, Coating, Composite},
abstract = {Additively manufactured parts are often composed of two sub-structures, a solid shell forming their exterior and a porous infill occupying the interior. To account for this feature this paper presents a novel method for generating simultaneously optimized shell and infill in the context of minimum compliance topology optimization. Our method builds upon two recently developed approaches that extend density-based topology optimization: A coating approach to obtain an optimized shell that is filled uniformly with a prescribed porous base material, and an infill approach which generates optimized, non-uniform infill within a prescribed shell. To evolve the shell and infill concurrently, our formulation assigns two sets of design variables: One set defines the base and the coating, while the other set defines the infill structures. The resulting intermediate density distributions are unified by a material interpolation model into a physical density field, upon which the compliance is minimized. Enhanced by an adapted robust formulation for controlling the minimum length scale of the base, our method generates optimized shell–infill composites suitable for additive manufacturing. We demonstrate the effectiveness of the proposed method on numerical examples, and analyse the influence of different design specifications.}
}

@article{wu2016shape,
  title={Shape interior modeling and mass property optimization using ray-reps},
  author={Wu, Jun and Kramer, Lou and Westermann, R{\"u}diger},
  journal={Computers \& Graphics},
  volume={58},
  pages={66--72},
  year={2016},
  publisher={Elsevier}
}

@article{sigmund2013topology,
  title={Topology optimization approaches},
  author={Sigmund, Ole and Maute, Kurt},
  journal={Structural and Multidisciplinary Optimization},
  volume={48},
  number={6},
  pages={1031--1055},
  year={2013},
  publisher={Springer}
}

@article{MARTINEZFRUTOS201747,
title = {Efficient topology optimization using GPU computing with multilevel granularity},
journal = {Advances in Engineering Software},
volume = {106},
pages = {47-62},
year = {2017},
issn = {0965-9978},
doi = {https://doi.org/10.1016/j.advengsoft.2017.01.009},
url = {https://www.sciencedirect.com/science/article/pii/S0965997816302332},
author = {Jesús Martínez-Frutos and Pedro J. Martínez-Castejón and David Herrero-Pérez},
keywords = {GPU computing, Topology optimization, Compliance, Compliant mechanism, Heat conduction},
abstract = {This paper proposes a well-suited strategy for High Performance Computing (HPC) of density-based topology optimization using Graphics Processing Units (GPUs). Such a strategy takes advantage of Massively Parallel Processing (MPP) architectures to overcome the computationally demanding procedures of density-based topology design, both in terms of memory consumption and processing time. This is done exploiting data locality and minimizing both memory consumption and data transfers. The proposed GPU instance makes use of different granularities for the topology optimization pipeline, which are selected to properly balance the workload between the threads exploiting the parallelization potential of massively parallel architectures. The performance of the fine-grained GPU instance of the solving stage is evaluated using two preconditioning techniques. The proposal is also compared with the classical CPU implementation for diverse topology optimization problems, including stiffness maximization, heat sink design and compliant mechanism design.}
}

@article{mahdavi2006topology,
  title={Topology optimization of 2D continua for minimum compliance using parallel computing},
  author={Mahdavi, A and Balaji, R and Frecker, M and Mockensturm, Eric M},
  journal={Structural and Multidisciplinary Optimization},
  volume={32},
  number={2},
  pages={121--132},
  year={2006},
  publisher={Springer}
}

@article{sigmund200199,
  title={A 99 line topology optimization code written in Matlab},
  author={Sigmund, Ole},
  journal={Structural and multidisciplinary optimization},
  volume={21},
  number={2},
  pages={120--127},
  year={2001},
  publisher={Springer}
}

@article{aage2015topology,
  title={Topology optimization using PETSc: An easy-to-use, fully parallel, open source topology optimization framework},
  author={Aage, Niels and Andreassen, Erik and Lazarov, Boyan Stefanov},
  journal={Structural and Multidisciplinary Optimization},
  volume={51},
  number={3},
  pages={565--572},
  year={2015},
  publisher={Springer}
}

@article{gomes2011truss,
  title={Truss optimization with dynamic constraints using a particle swarm algorithm},
  author={Gomes, Herbert Martins},
  journal={Expert Systems with Applications},
  volume={38},
  number={1},
  pages={957--968},
  year={2011},
  publisher={Elsevier}
}

@article{buhl2000stiffness,
  title={Stiffness design of geometrically nonlinear structures using topology optimization},
  author={Buhl, Thomas and Pedersen, Claus BW and Sigmund, Ole},
  journal={Structural and Multidisciplinary Optimization},
  volume={19},
  number={2},
  pages={93--104},
  year={2000},
  publisher={Springer}
}

@article{bruns2001topology,
  title={Topology optimization of non-linear elastic structures and compliant mechanisms},
  author={Bruns, Tyler E and Tortorelli, Daniel A},
  journal={Computer methods in applied mechanics and engineering},
  volume={190},
  number={26-27},
  pages={3443--3459},
  year={2001},
  publisher={Elsevier}
}

@article{maute1998adaptive,
  title={Adaptive topology optimization of elastoplastic structures},
  author={Maute, Kurt and Schwarz, Stefan and Ramm, Ekkehard},
  journal={Structural optimization},
  volume={15},
  number={2},
  pages={81--91},
  year={1998},
  publisher={Springer}
}

@article{colson2005bilevel,
  title={Bilevel programming: A survey},
  author={Colson, Beno{\^\i}t and Marcotte, Patrice and Savard, Gilles},
  journal={4or},
  volume={3},
  number={2},
  pages={87--107},
  year={2005},
  publisher={Springer}
}

@article{SVANBERG198163,
title = {Optimization of geometry in truss design},
journal = {Computer Methods in Applied Mechanics and Engineering},
volume = {28},
number = {1},
pages = {63-80},
year = {1981},
issn = {0045-7825},
doi = {https://doi.org/10.1016/0045-7825(81)90027-X},
url = {https://www.sciencedirect.com/science/article/pii/004578258190027X},
author = {Krister Svanberg},
abstract = {In this paper a method is presented which attempts to minimize the weight of a 3-dimensional truss structure subject to displacement-, stress- and buckling constraints under multiple load conditions. Both the cross section areas of the bars and the geometry (but not the topology) of the structure are permitted to vary during the optimization. The method generates a sequence of subproblems which are solved by a dual method of convex programming. The convergence of the overall algorithm is in evidence on some test problems.}
}

@article{holmberg2013stress,
  title={Stress constrained topology optimization},
  author={Holmberg, Erik and Torstenfelt, Bo and Klarbring, Anders},
  journal={Structural and Multidisciplinary Optimization},
  volume={48},
  number={1},
  pages={33--47},
  year={2013},
  publisher={Springer}
}

@article{habbal2004multidisciplinary,
  title={Multidisciplinary topology optimization solved as a Nash game},
  author={Habbal, Abderrahmane and Petersson, Joakim and Thellner, Mikael},
  journal={International Journal for Numerical Methods in Engineering},
  volume={61},
  number={7},
  pages={949--963},
  year={2004},
  publisher={Wiley Online Library}
}

@article{holmberg2017game,
  title={Game theory approach to robust topology optimization with uncertain loading},
  author={Holmberg, Erik and Thore, Carl-Johan and Klarbring, Anders},
  journal={Structural and Multidisciplinary Optimization},
  volume={55},
  number={4},
  pages={1383--1397},
  year={2017},
  publisher={Springer}
}

@article{dunning2011introducing,
  title={Introducing loading uncertainty in topology optimization},
  author={Dunning, Peter D and Kim, H Alicia and Mullineux, Glen},
  journal={AIAA journal},
  volume={49},
  number={4},
  pages={760--768},
  year={2011}
}

@article{rojas2015benchmarking,
  title={Benchmarking optimization solvers for structural topology optimization},
  author={Rojas-Labanda, Susana and Stolpe, Mathias},
  journal={Structural and Multidisciplinary Optimization},
  volume={52},
  number={3},
  pages={527--547},
  year={2015},
  publisher={Springer}
}

@article{da2018reliability,
  title={Reliability-based topology optimization of continuum structures subject to local stress constraints},
  author={da Silva, Gustavo Assis and Beck, Andr{\'e} Te{\'o}filo},
  journal={Structural and Multidisciplinary Optimization},
  volume={57},
  number={6},
  pages={2339--2355},
  year={2018},
  publisher={Springer}
}

@article{https://doi.org/10.1002/nme.1620240207,
author = {Svanberg, Krister},
title = {The method of moving asymptotes—a new method for structural optimization},
journal = {International Journal for Numerical Methods in Engineering},
volume = {24},
number = {2},
pages = {359-373},
doi = {https://doi.org/10.1002/nme.1620240207},
url = {https://onlinelibrary.wiley.com/doi/abs/10.1002/nme.1620240207},
eprint = {https://onlinelibrary.wiley.com/doi/pdf/10.1002/nme.1620240207},
abstract = {Abstract A new method for non-linear programming in general and structural optimization in particular is presented. In each step of the iterative process, a strictly convex approximating subproblem is generated and solved. The generation of these subproblems is controlled by so called ‘moving asymptotes’, which may both stabilize and speed up the convergence of the general process.},
year = {1987}
}

@article{amir2014multigrid,
  title={On multigrid-CG for efficient topology optimization},
  author={Amir, Oded and Aage, Niels and Lazarov, Boyan S},
  journal={Structural and Multidisciplinary Optimization},
  volume={49},
  number={5},
  pages={815--829},
  year={2014},
  publisher={Springer}
}

@article{nguyen2017polytree,
  title={A polytree-based adaptive polygonal finite element method for topology optimization},
  author={Nguyen-Xuan, H},
  journal={International Journal for Numerical Methods in Engineering},
  volume={110},
  number={10},
  pages={972--1000},
  year={2017},
  publisher={Wiley Online Library}
}

@article{bard1991some,
  title={Some properties of the bilevel programming problem},
  author={Bard, Jonathan F},
  journal={Journal of optimization theory and applications},
  volume={68},
  number={2},
  pages={371--378},
  year={1991},
  publisher={Springer}
}

@inproceedings{mehra2021penalty,
  title={Penalty method for inversion-free deep bilevel optimization},
  author={Mehra, Akshay and Hamm, Jihun},
  booktitle={Asian Conference on Machine Learning},
  pages={347--362},
  year={2021},
  organization={PMLR}
}

@inproceedings{toussaint2015logic,
  title={Logic-geometric programming: An optimization-based approach to combined task and motion planning},
  author={Toussaint, Marc},
  booktitle={Twenty-Fourth International Joint Conference on Artificial Intelligence},
  year={2015}
}

@inproceedings{tang2020enhancing,
  title={Enhancing bilevel optimization for uav time-optimal trajectory using a duality gap approach},
  author={Tang, Gao and Sun, Weidong and Hauser, Kris},
  booktitle={2020 IEEE International Conference on Robotics and Automation (ICRA)},
  pages={2515--2521},
  year={2020},
  organization={IEEE}
}

@INPROCEEDINGS{Pavone-RSS-19, 
    AUTHOR    = {Benoit Landry AND Zachary Manchester AND Marco Pavone}, 
    TITLE     = {A Differentiable Augmented Lagrangian Method for Bilevel Nonlinear Optimization}, 
    BOOKTITLE = {Proceedings of Robotics: Science and Systems}, 
    YEAR      = {2019}, 
    ADDRESS   = {FreiburgimBreisgau, Germany}, 
    MONTH     = {06}, 
    DOI       = {10.15607/RSS.2019.XV.012} 
}

@book{sanders2010cuda,
  title={CUDA by example: an introduction to general-purpose GPU programming},
  author={Sanders, Jason and Kandrot, Edward},
  year={2010},
  publisher={Addison-Wesley Professional}
}

@techreport{bell2008efficient,
  title={Efficient sparse matrix-vector multiplication on CUDA},
  author={Bell, Nathan and Garland, Michael},
  year={2008},
  institution={Citeseer}
}

@article{guest2004achieving,
  title={Achieving minimum length scale in topology optimization using nodal design variables and projection functions},
  author={Guest, James K and Pr{\'e}vost, Jean H and Belytschko, Ted},
  journal={International journal for numerical methods in engineering},
  volume={61},
  number={2},
  pages={238--254},
  year={2004},
  publisher={Wiley Online Library}
}

@article{saad1986gmres,
  title={GMRES: A generalized minimal residual algorithm for solving nonsymmetric linear systems},
  author={Saad, Youcef and Schultz, Martin H},
  journal={SIAM Journal on scientific and statistical computing},
  volume={7},
  number={3},
  pages={856--869},
  year={1986},
  publisher={SIAM}
}

@article{kambampati2020large,
  title={Large-scale level set topology optimization for elasticity and heat conduction},
  author={Kambampati, Sandilya and Jauregui, Carolina and Museth, Ken and Kim, H Alicia},
  journal={Structural and Multidisciplinary Optimization},
  volume={61},
  number={1},
  pages={19--38},
  year={2020},
  publisher={Springer}
}

@article{chen2021closing,
  title={Closing the Gap: Tighter Analysis of Alternating Stochastic Gradient Methods for Bilevel Problems},
  author={Chen, Tianyi and Sun, Yuejiao and Yin, Wotao},
  journal={Advances in Neural Information Processing Systems},
  volume={34},
  year={2021}
}

@article{waltz2004knitro,
  title={KNITRO 2.0 User’s Manual},
  author={Waltz, Richard A and Nocedal, Jorge},
  journal={Ziena Optimization, Inc.[en ligne] disponible sur http://www. ziena. com (September, 2010)},
  volume={7},
  pages={33--34},
  year={2004}
}

@article{vanderbei1999loqo,
  title={LOQO user's manual—version 3.10},
  author={Vanderbei, Robert J},
  journal={Optimization methods and software},
  volume={11},
  number={1-4},
  pages={485--514},
  year={1999},
  publisher={Taylor \& Francis}
}

@book{bendsoe1995optimization,
  title={Optimization of structural topology, shape, and material},
  author={Bends{\o}e, Martin P and Sigmund, Ole},
  volume={414},
  year={1995},
  publisher={Springer}
}

@article{beck2009fast,
  title={A fast iterative shrinkage-thresholding algorithm for linear inverse problems},
  author={Beck, Amir and Teboulle, Marc},
  journal={SIAM journal on imaging sciences},
  volume={2},
  number={1},
  pages={183--202},
  year={2009},
  publisher={SIAM}
}

@article{zhang2020globally,
  title={Globally convergent type-I Anderson acceleration for nonsmooth fixed-point iterations},
  author={Zhang, Junzi and O'Donoghue, Brendan and Boyd, Stephen},
  journal={SIAM Journal on Optimization},
  volume={30},
  number={4},
  pages={3170--3197},
  year={2020},
  publisher={SIAM}
}

@article{kovcvara1997topology,
  title={Topology optimization with displacement constraints: a bilevel programming approach},
  author={Ko{\v{c}}vara, Michal},
  journal={Structural optimization},
  volume={14},
  number={4},
  pages={256--263},
  year={1997},
  publisher={Springer}
}

@inproceedings{NTOPO,
  title={NTopo: Mesh-free Topology Optimization using Implicit Neural Representations},
  author={Zehnder, Jonas and Li, Yue and Coros, Stelian and Thomaszewski, Bernhard},
  pages = {},
  booktitle = {Advances in Neural Information Processing Systems},
  publisher = {Curran Associates, Inc.},
  volume = {34},
  year = {2021}
}
\ifsupp
\begin{table}[t]
\caption{\label{table:symbols}Table of Symbols}
\resizebox{.48\textwidth}{!}{
\rowcolors{0}{gray!50}{white}
\begin{tabular}{ll}
\toprule
Variable & Definition \\
\midrule
$x\in\Omega$ & a point in material domain  \\
$u$ & displacement field \\
$P$ & internal potential energy \\
$k_e$ & stiffness tensor field  \\
$v, v_e$ & infill level, infill level of $e$th element  \\
$u_k, v_k, \delta_k$ & solutions at $k$th iteration \\
$\underline{v}, \bar{v}$ & infill level bounds  \\
$\Omega_e$ & sub-domain of $e$th element    \\
$E, N$ & number of elements, nodes   \\
$K, K_e$ & stiffness matrix, stiffness of $e$th element  \\
$f$ & external force field  \\
$P_f$ & total energy    \\
$e_i$ & unit vector at $i$th element    \\
$u_f$ & displacement caused by force $f$    \\
$l$ & induced internal potential energy \\
$\eta$ & power law coefficient   \\
$\rho(\bullet)$ & eigenvalue function \\
$\underline{\rho}, \bar{\rho}$ & eigenvalue bounds  \\
$\ALLONE, \ALLZERO$ & all-one, all-zero vectors    \\
$\mathcal{A}_e$ & activation function of $e$th element  \\
$\mathbb{S}$ & symmetry mapping matrix \\
$v_s$ & infill levels of the left-half material block   \\
$\#$ & number of sub-components \\
$\bar{v}^i$ & total infill level of $i$th component \\
$\mathbb{C}$ & material filter operator    \\
$\alpha_k, \beta_k$ & high-level, low-level step size   \\
$\Delta_k, \Delta_k^v$ & low-level, high-level error metrics    \\
$\Xi_k$ & low-level error metric used to analyze \prettyref{alg:CPOpt}  \\
$U$ & upper bound of $\sqrt{\Delta_1}$ \\
$\Gamma_k, \Theta_k$ & coefficients of low-level error    \\
$\bar{\Gamma}, \bar{\Theta}$ & upper bounds of $\Gamma_k, \Theta_k$  \\
$p, m, n$ & algorithmic constants   \\
$L_u, L_K L_{\Delta K}, L_{\nabla l}$ & L-constants of $u_f, K, \Delta K, \nabla l$ \\
$C, C_v$ & coefficients of reduction rate of $\Delta_k^v$ \\
$\mathcal{T}_X$ & tangent cone of $X$   \\
$d_k, \tau_k$ & feasible direction in $\mathcal{T}_X$ and step size \\
$\underline{l}$ & lower bound of $l$    \\
$U_f$ & uniform upper bound of $\|u_f\|$    \\
$\nabla_X$ & projected gradient into normal cone \\
$\epsilon$ & error tolerance of gradient norm   \\
$\Proj{\bullet}$ & projection operator onto $X$   \\
$\kappa$ & condition number \\
$M$ & preconditioner    \\
$\underline{\rho}_M, \bar{\rho}_M$ & eigenvalue bounds of preconditioner  \\
$c_i$ & coefficients of Krylov vectors   \\
$D$ & dimension of Krylov subspace   \\
$\tilde{\nabla}l$ & approximate gradient    \\
$\tilde{\nabla}_Pl$ & mean-projected approximate gradient    \\
$\lambda_{\ALLONE}, \lambda_{\underline{v}}$ & Lagrangian multiplier for projection problem  \\
\hline
\end{tabular}}
\vspace{-10px}
\end{table}
\begin{figure*}[t]
\setlength{\tabcolsep}{0pt}
\begin{tabu}{ccc}
\includegraphics[width=.32\linewidth]{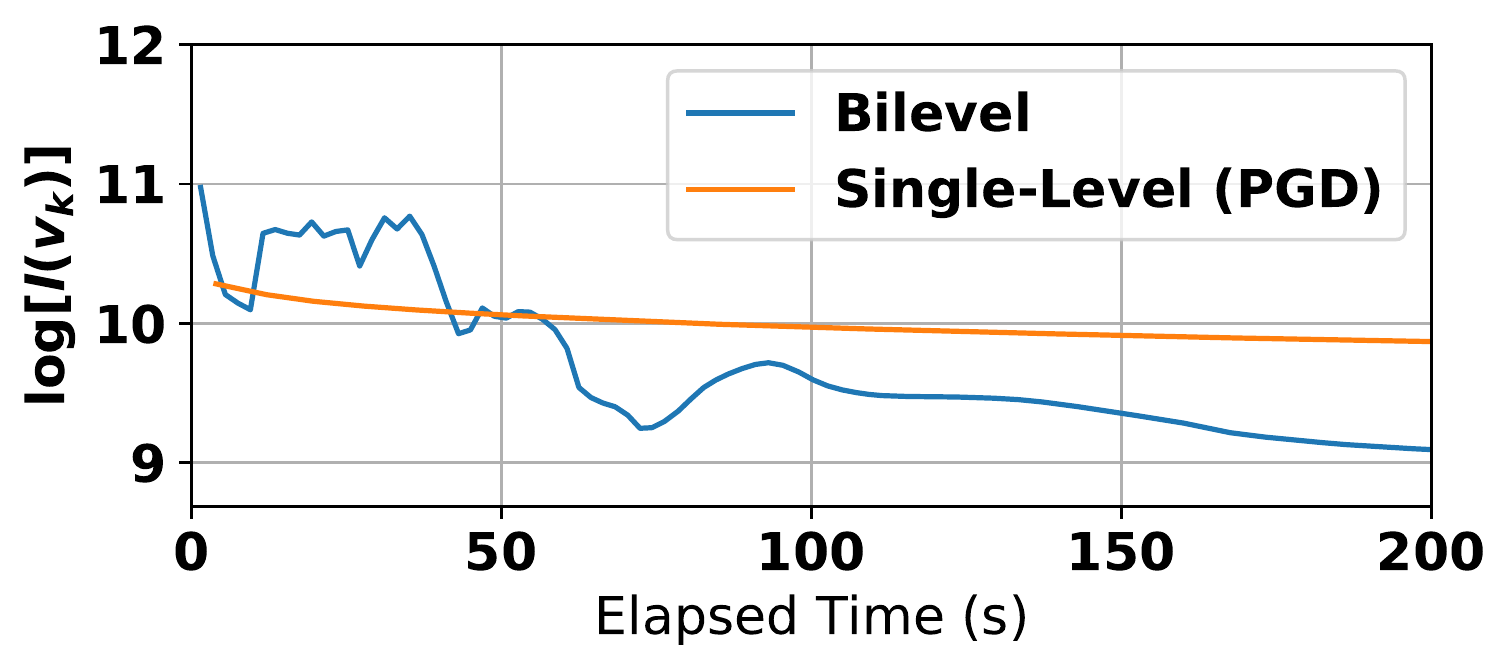}
\put(-120,0){(a)}&
\includegraphics[width=.32\linewidth]{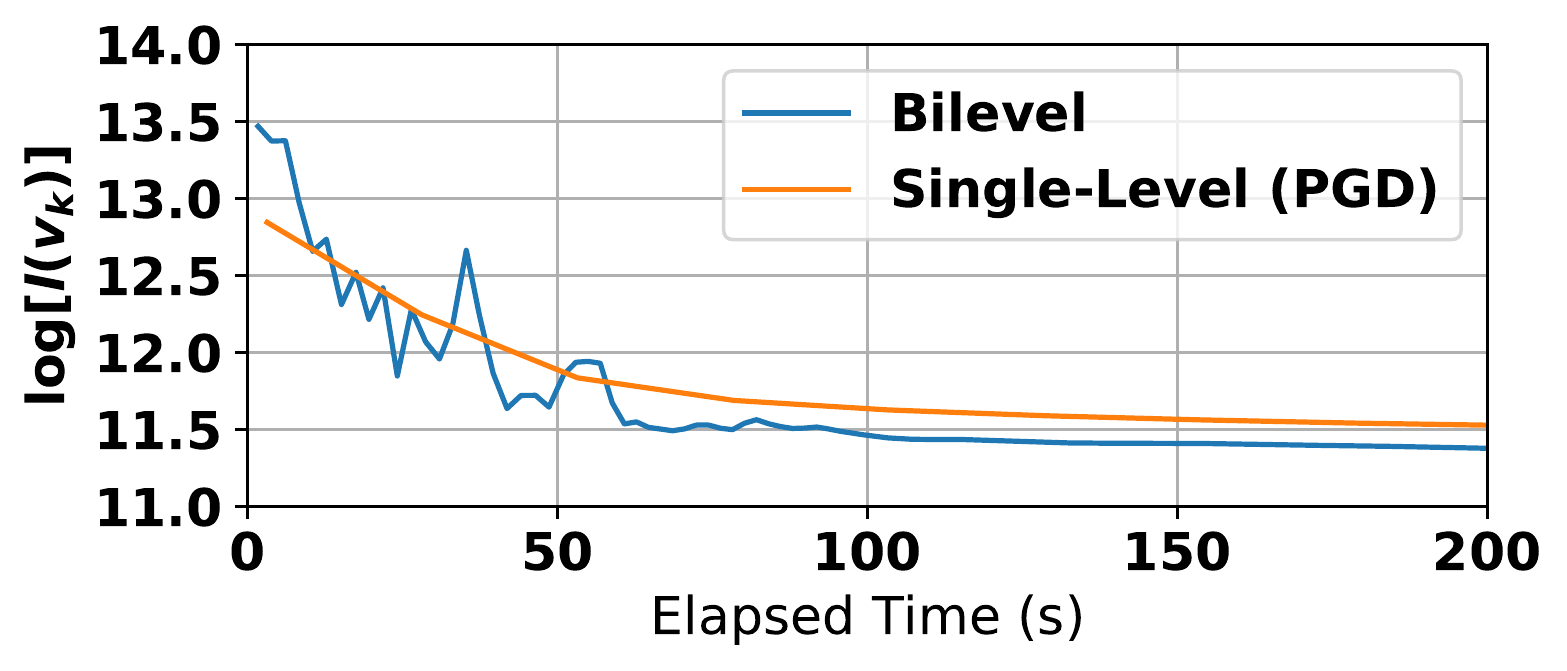}
\put(-120,0){(b)}&
\includegraphics[width=.32\linewidth]{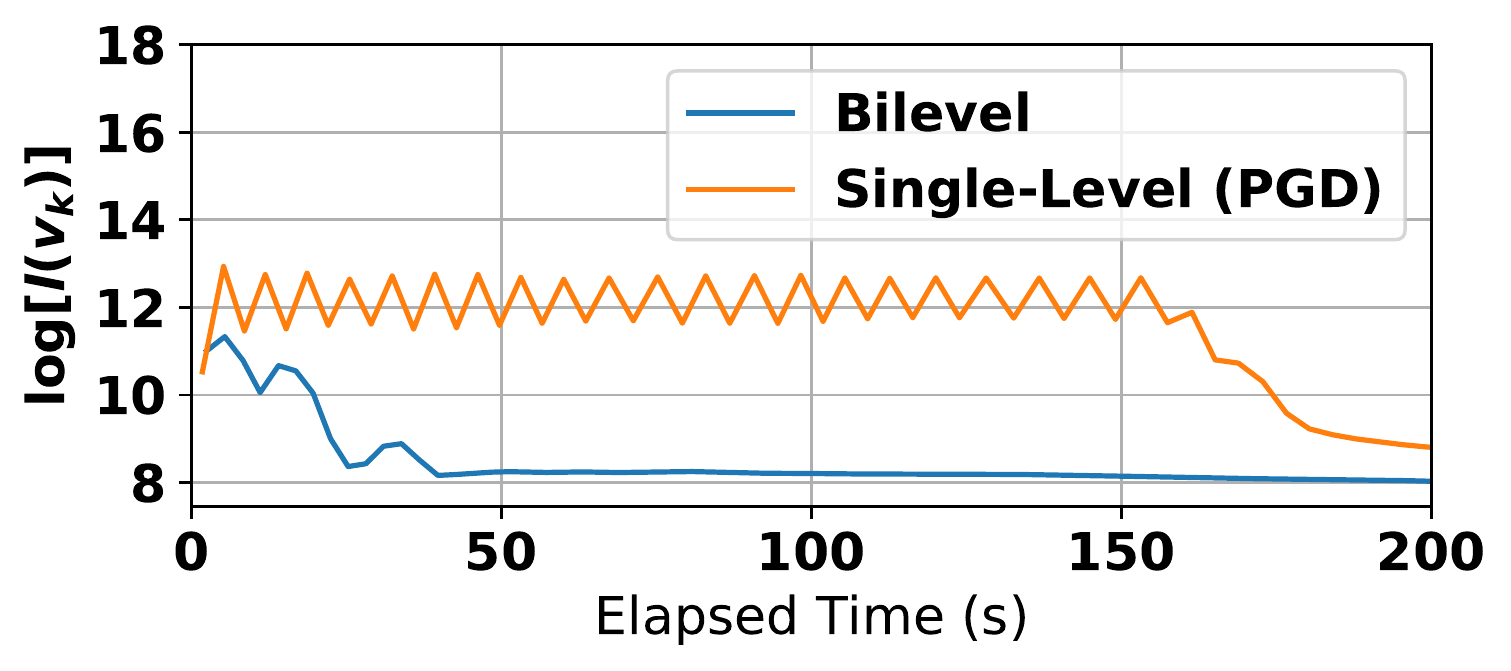}
\put(-120,0){(c)}\\
\includegraphics[width=.32\linewidth]{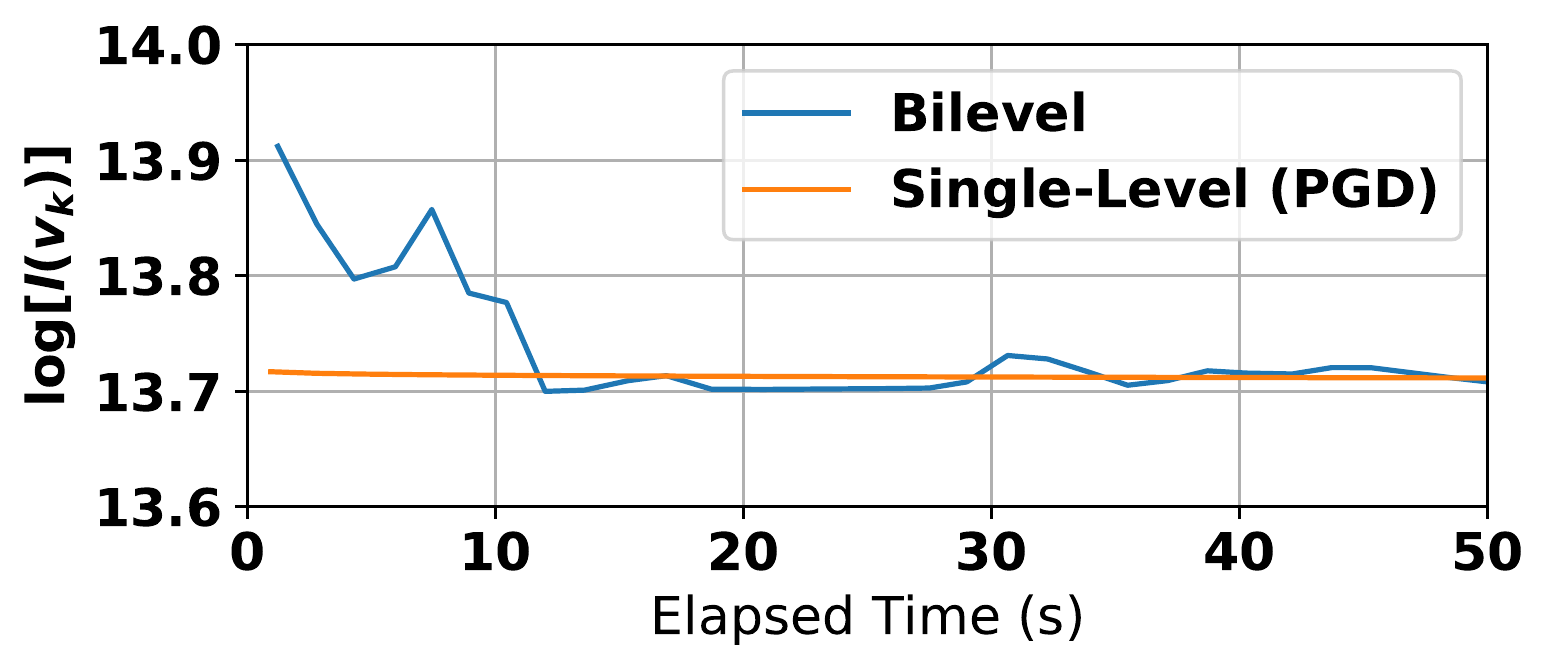}
\put(-120,0){(d)}&
\includegraphics[width=.32\linewidth]{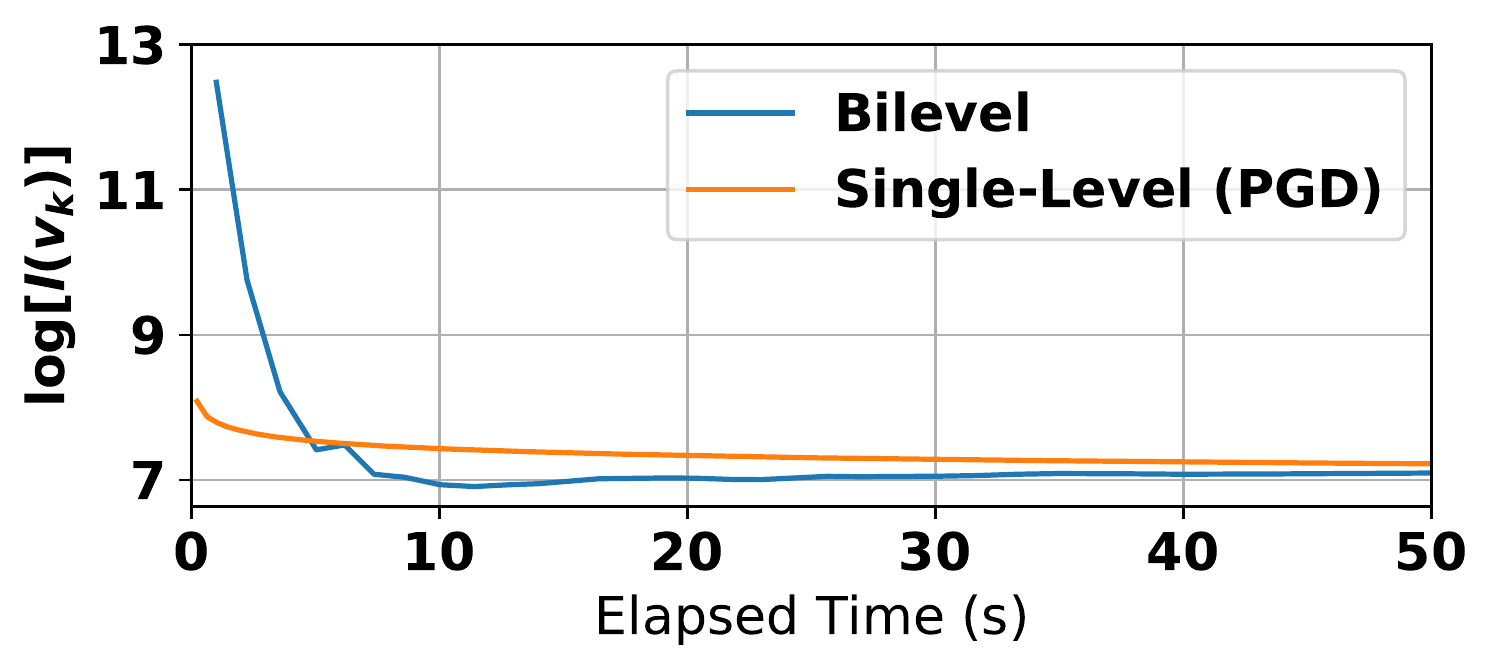}
\put(-120,0){(e)}&
\includegraphics[width=.32\linewidth]{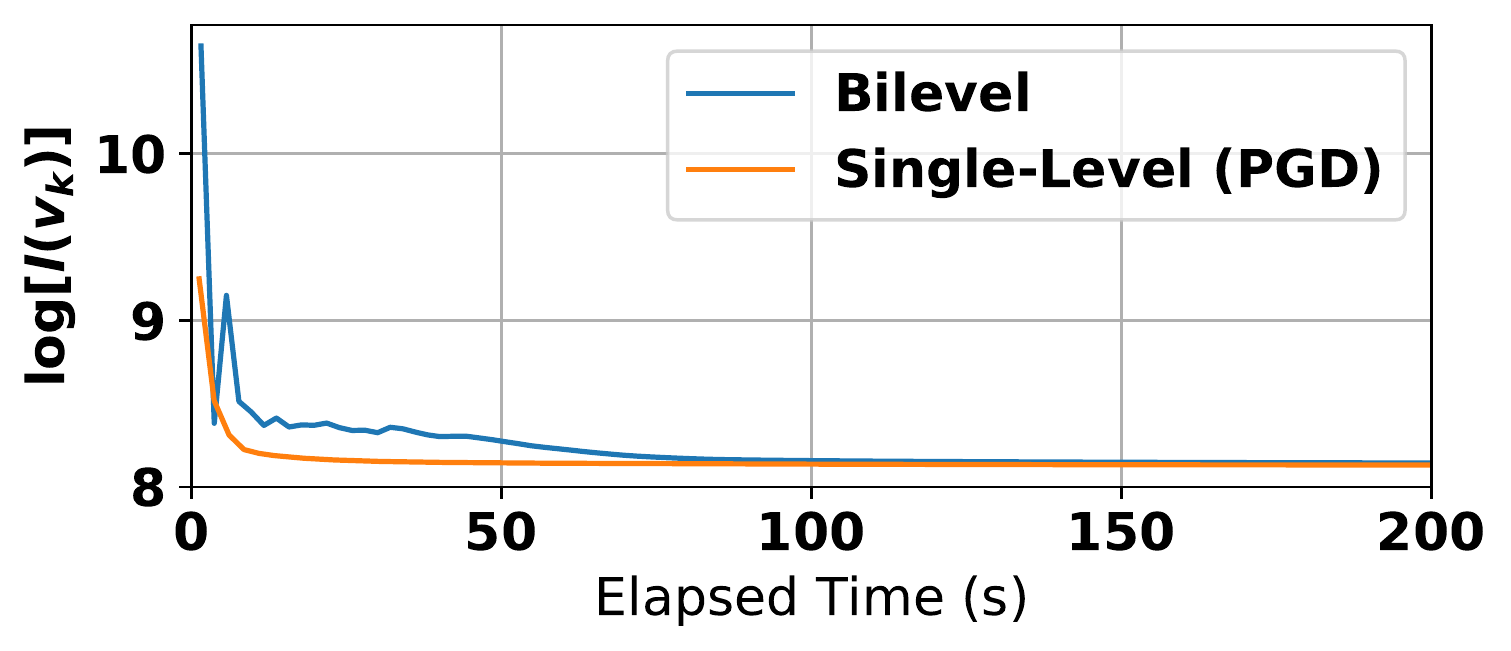}
\put(-120,0){(f)}\\
\includegraphics[width=.32\linewidth]{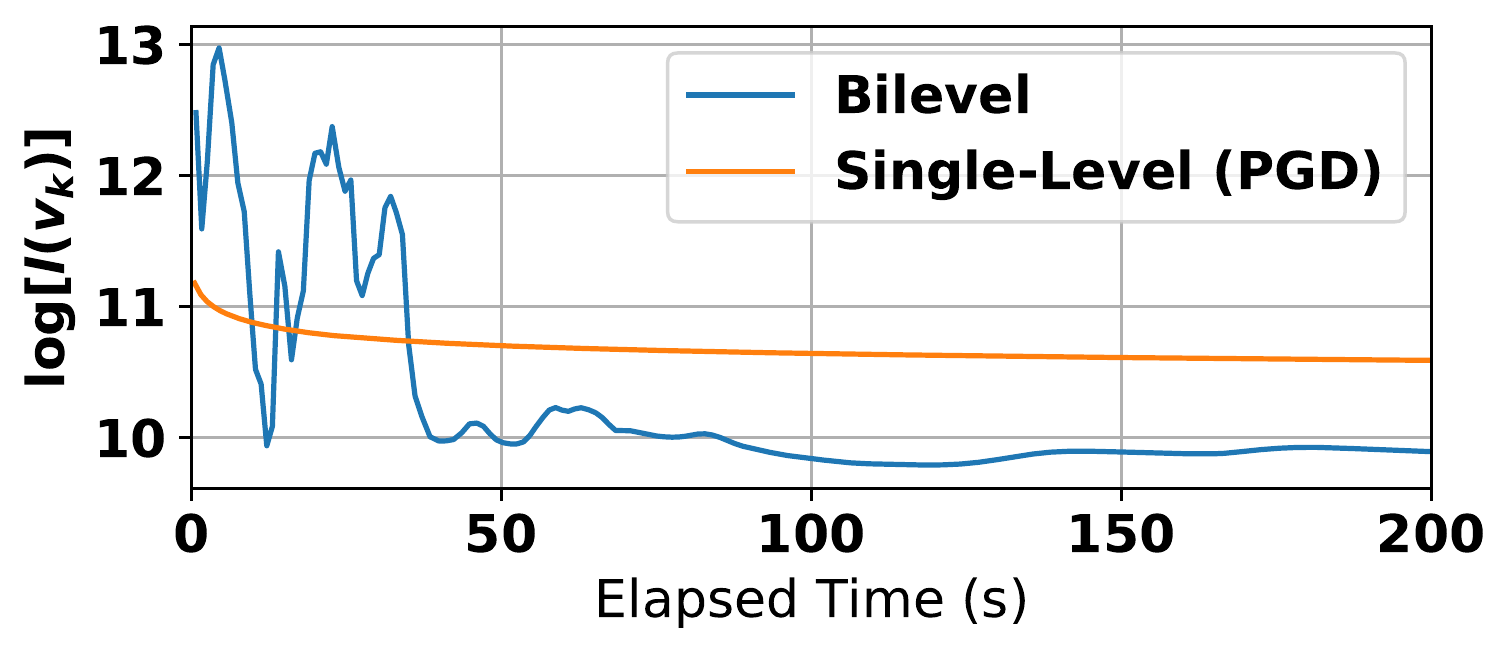}
\put(-120,0){(g)}&
\multicolumn{2}{c}{\multirow{1}{*}[0.7pt]{\parbox{.6\linewidth}{\caption{\label{fig:allBench}We summarize the comparative convergence history of PGD and FBTO for all the benchmarks (\prettyref{fig:gallery}a-g). PGD and FBTO are implemented on GPU with matrix factorization of PGD on CPU. All other parameters are the same.}}}}
\end{tabu}
\vspace{-10px}
\end{figure*}
\section{\label{sec:additionalExpr}Additional Experiments}
We compare the performance of FBTO and PGD for all the benchmarks in \prettyref{fig:gallery} and the results are summarized in \prettyref{fig:allBench}. Note that PGD uses an exact low-level solver so it requires less iterations then FBTO, while FBTO uses more iterations with a much lower iterative cost. For fairness we compare them based on computational time. PGD outperforms or performs similarly to FBTO on two of the problems (\prettyref{fig:allBench}df), where PGD converges rapidly in $5-10$ iterations while FBTO requires thousands of iterations. In all other cases, FBTO converges faster to an approximate solution.
\section{\label{sec:proof}Convergence Analysis of \prettyref{alg:Opt}}
We provide proof of \prettyref{thm:gradientConvergence}. We start from some immediate observations on the low- and high-level objective functions. Then, we prove the rule of error propagation for the low-level optimality, i.e., the low-level different between $u_k$ and its optimal solution $u_f(v_k)$. Next, we choose parameters to let the difference diminish as the number of iterations increase. Finally, we focus on the high-level objective function and show that the difference between $v_k$ and a local optima of $l(v)$ would also diminish. Our analysis resembles the recent analysis on two-timescale bilevel optimization, but we use problem-specific treatment to handle our novel gradient estimation for the high-level problem without matrix inversion.

\subsection{Low-Level Error Propagation}
If \prettyref{ass:StiffnessParameter} holds, then the low-level objective function is $\underline{\rho}$-strongly convex and Lipschitz-continuous in $u$ for any $v$ with $\bar{\rho}$ being the L-constant, so that:
\small
\begin{align*}
&P_f(u_{k+1},v_k)\leq 
P_f(u_k,v_k)+\left<u_{k+1}-u_k,\nabla_u P_f(u_k,v_k)\right>+\\
&\frac{\bar{\rho}}{2}\|u_{k+1}-u_k\|^2
\leq P_f(u_k,v_k)+\left[\frac{\bar{\rho}}{2}-\frac{1}{\beta_k}\right]\|u_{k+1}-u_k\|^2.\numberthis\label{eq:lowLevelUpdate}
\end{align*}
\normalsize
Similarly, we can immediately estimate the approximation error of the low-level problem due to an update on $u$:
\begin{lemma}
\label{lem:lowLevelError}
Under \prettyref{ass:StiffnessParameter}, the following relationship holds for all $k\geq1$:
\small
\begin{align*}
\Delta_{k+1}\leq\frac{p+1}{p}(1-2\beta_k\underline{\rho}+\beta_k^2\bar{\rho}^2)\Delta_k+
L_u^2L_{\nabla K}^2\alpha_k^2\frac{p+1}{4}\|u_k\|^4.
\end{align*}
\normalsize
\end{lemma}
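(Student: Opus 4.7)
The plan is to split the one-step evolution of the low-level error into the part caused by a single damped Jacobi step at fixed $v_k$ and the part caused by the change of the optimum as $v_k$ moves to $v_{k+1}$. Write
\begin{align*}
u_{k+1}-u_f(v_{k+1})
=\bigl[u_{k+1}-u_f(v_k)\bigr]+\bigl[u_f(v_k)-u_f(v_{k+1})\bigr],
\end{align*}
and apply Young's inequality with parameter $p>0$ to obtain
\begin{align*}
\Delta_{k+1}\leq\tfrac{p+1}{p}\|u_{k+1}-u_f(v_k)\|^2+(p+1)\|u_f(v_k)-u_f(v_{k+1})\|^2.
\end{align*}
The choice of the Young parameter as $p$ (rather than, say, $1$) is what delivers the prefactor $(p+1)/p$ and $(p+1)$ in the final bound.

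For the first term I would use that $u_f(v_k)=K(v_k)^{-1}f$, so $K(v_k)u_f(v_k)=f$ and the Jacobi step can be rewritten as
\begin{align*}
u_{k+1}-u_f(v_k)=\bigl[I-\beta_k K(v_k)\bigr]\bigl(u_k-u_f(v_k)\bigr).
\end{align*}
Expanding the squared norm and exploiting $\underline{\rho}\ALLONE\leq\rho(K(v_k))\leq\bar\rho\ALLONE$ from \prettyref{ass:StiffnessParameter} (the cross term bounded by strong convexity, the quadratic term by the operator norm) yields
\begin{align*}
\|u_{k+1}-u_f(v_k)\|^2\leq\bigl(1-2\beta_k\underline{\rho}+\beta_k^2\bar\rho^2\bigr)\Delta_k,
\end{align*}
which is exactly the contraction factor appearing in the claim.

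For the second term I would invoke the Lipschitz constant $L_u$ of the solution map $u_f$ on $X$, giving $\|u_f(v_k)-u_f(v_{k+1})\|\leq L_u\|v_{k+1}-v_k\|$. Because the projection onto $X$ is non-expansive and $v_k\in X$,
\begin{align*}
\|v_{k+1}-v_k\|\leq\tfrac{\alpha_k}{2}\bigl\|u_k^T\tfrac{\partial K}{\partial v_k}u_k\bigr\|.
\end{align*}
Treating $\partial K/\partial v_k$ as a third-order tensor with Frobenius norm bounded by $L_{\nabla K}$, Cauchy--Schwarz componentwise gives $\|u_k^T(\partial K/\partial v_k)u_k\|\leq L_{\nabla K}\|u_k\|^2$. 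Combining, $\|u_f(v_k)-u_f(v_{k+1})\|^2\leq L_u^2L_{\nabla K}^2\alpha_k^2\|u_k\|^4/4$. Substituting the two bounds into the Young inequality recovers the stated relation.

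The only delicate step I expect is the bound on $\|u_k^T(\partial K/\partial v_k)u_k\|$: one must identify the right notion of tensor norm ($L_{\nabla K}$ is introduced in \prettyref{ass:AlphaChoice} as the Frobenius bound on $\partial K/\partial v$) and verify that the componentwise Cauchy--Schwarz gives a clean $L_{\nabla K}\|u_k\|^2$ rather than a dimension-dependent constant. Everything else is a direct application of strong convexity, smoothness, Young's inequality, and non-expansiveness of the projection.
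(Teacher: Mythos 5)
Your proposal is correct and follows essentially the same route as the paper: the same split into the fixed-$v_k$ Jacobi contraction and the drift of $u_f$, the same Young's inequality with parameter $p$ (the paper writes it via the cross term $2\|a\|\|b\|\leq\frac{1}{p}\|a\|^2+p\|b\|^2$, which yields the identical $\frac{p+1}{p}$ and $\frac{p+1}{4}$ coefficients), and the same Lipschitz-plus-nonexpansive-projection bound on $\|u_f(v_k)-u_f(v_{k+1})\|$. The tensor-norm step you flag as delicate is handled the same way in the paper (componentwise Cauchy--Schwarz against the Frobenius bound $L_{\nabla K}$ gives $\|u_k^T(\partial K/\partial v_k)u_k\|\leq L_{\nabla K}\|u_k\|^2$ with no dimension factor), so there is no gap.
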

\begin{proof}
The following result holds by the triangle inequality and the bounded spectrum of $K(v)$ (\prettyref{eq:KVBound}):
\begin{align*}
&\|u_{k+1}-u_f(v_k)\|^2\\
=&\|u_{k+1}-u_k\|^2+\|u_k-u_f(v_k)\|^2+\\
&2\left<u_{k+1}-u_k,u_k-u_f(v_k)\right>\\
=&\|u_{k+1}-u_k\|^2+\|u_k-u_f(v_k)\|^2-\\
&2\beta_k\left<K(v_k)u_k-f,u_k-u_f(v_k)\right>\\
\leq&\|u_{k+1}-u_k\|^2+(1-2\beta_k\underline{\rho})\|u_k-u_f(v_k)\|^2\\
\leq&\beta_k^2\bar{\rho}^2\|u_k-u_f(v_k)\|^2+(1-2\beta_k\underline{\rho})\|u_k-u_f(v_k)\|^2.\numberthis\label{eq:lowLevelTrackingUpdate}
\end{align*}
The optimal solution to the low-level problem is $u_f(v)$, which is a smooth function defined on a compact domain, so any derivatives of $u_f(v)$ is bounded L-continuous and we have the following estimate of the change to $u_f$ due to an update on $v$: 
\begin{align*}
&\|u_{k+1}-u_f(v_k)\|\|u_f(v_k)-u_f(v_{k+1})\|\\
\leq&\frac{1}{2p}\|u_{k+1}-u_f(v_k)\|^2+\frac{p}{2}\|u_f(v_k)-u_f(v_{k+1})\|^2\\
\leq&\frac{1}{2p}\|u_{k+1}-u_f(v_k)\|^2+\frac{pL_u^2}{2}\|v_k-v_{k+1}\|^2\\
\leq&\frac{1}{2p}\|u_{k+1}-u_f(v_k)\|^2+\frac{pL_u^2\alpha_k^2}{8}\|u_k^T\FPP{K}{v_k}u_k\|^2,\numberthis\label{eq:lowLevelYoung}
\end{align*}
where we have used Young's inequality and the contractive property of the $\Proj{\bullet}$ operator. Here we denote $L_u$ as the L-constant of $u_f$. By triangular inequality, we further have:
\small
\begin{align*}
&\|u_{k+1}-u_f(v_{k+1})\|^2\\
\leq&\|u_{k+1}-u_f(v_k)\|^2+\|u_f(v_k)-u_f(v_{k+1})\|^2+\\
&2\|u_{k+1}-u_f(v_k)\|\|u_f(v_k)-u_f(v_{k+1})\|\\
\leq&\frac{p+1}{p}\|u_{k+1}-u_f(v_k)\|^2+\|u_f(v_k)-u_f(v_{k+1})\|^2+\\
&\frac{pL_u^2\alpha_k^2}{4}\|u_k^T\FPP{K}{v_k}u_k\|^2\\
\leq&\frac{p+1}{p}\|u_{k+1}-u_f(v_k)\|^2+L_u^2\|v_k-v_{k+1}\|^2+\\
&\frac{pL_u^2\alpha_k^2}{4}\|u_k^T\FPP{K}{v_k}u_k\|^2\\
\leq&\frac{p+1}{p}\|u_{k+1}-u_f(v_k)\|^2+
L_u^2\alpha_k^2\frac{p+1}{4}\|u_k^T\FPP{K}{v_k}u_k\|^2\\
\leq&\frac{p+1}{p}(1-2\beta_k\underline{\rho}+\beta_k^2\bar{\rho}^2)\|u_k-u_f(v_k)\|^2+\\
&L_u^2\alpha_k^2\frac{p+1}{4}\|u_k^T\FPP{K}{v_k}u_k\|^2\\
\leq&\frac{p+1}{p}(1-2\beta_k\underline{\rho}+\beta_k^2\bar{\rho}^2)\|u_k-u_f(v_k)\|^2+\\
&L_u^2L_{\nabla K}^2\alpha_k^2\frac{p+1}{4}\|u_k\|^4,\numberthis\label{eq:lowLevelError}
\end{align*}
\normalsize
where we have used \prettyref{eq:lowLevelTrackingUpdate} and \prettyref{eq:lowLevelYoung}.
\end{proof}
The result of \prettyref{lem:lowLevelError} is a recurrent relationship on the low-level optimality error $\Delta_k$, which will be used to prove low-level convergence via recursive expansion.

\subsection{Low-Level Convergence}
We use the following shorthand notation for the result in \prettyref{lem:lowLevelError}:
\small
\begin{align*}
&\Delta_{k+1}\leq\Theta_k\Delta_k+\Gamma_k\|u_k\|^4   \\
&\Theta_k\triangleq\frac{p+1}{p}(1-2\beta_k\underline{\rho}+\beta_k^2\bar{\rho}^2)\quad
\Gamma_k\triangleq L_u^2L_{\nabla K}^2\alpha_k^2\frac{p+1}{4}.\numberthis\label{eq:lowLevelErrorShort}
\end{align*}
\normalsize
By taking \prettyref{ass:BetaChoice} and direct calculation, we can ensure that $\Theta_k\leq\bar{\Theta}<1$ (i.e., the first term is contractive). To bound the growth of the second term above, we show by induction that both $\Delta_k$ and $u_k$ can be uniformly bounded for all $k$ via a sufficiently small, constant $\Gamma_k\leq\bar{\Gamma}$.
\begin{lemma}
\label{lem:UpperBound}
Taking \prettyref{ass:StiffnessParameter}, \ref{ass:BetaChoice}, \ref{ass:GammaChoice}, we have $\Delta_k\leq U^2,\|u_k\|\leq U_f+U$ for all $k\geq1$, where $U_f$ is the uniform upper bound for $\|u_f(v)\|$.
\end{lemma}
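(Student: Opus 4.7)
The plan is to proceed by a straightforward induction on $k$, leveraging the recurrence from \prettyref{lem:lowLevelError} together with the explicit cap on $\bar{\Gamma}$ built into \prettyref{ass:GammaChoice}. The assumption is essentially tailor-made to close the induction, so the proof should be short.

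For the base case ($k=1$), $\Delta_1 \leq U^2$ is given by hypothesis, and the triangle inequality yields
\begin{align*}
\|u_1\| \leq \|u_1 - u_f(v_1)\| + \|u_f(v_1)\| \leq U + U_f,
\end{align*}
where $U_f$ is finite since $u_f(v) = K^{-1}(v)f$ is continuous in $v$ on the compact set $X$, as guaranteed by \prettyref{ass:StiffnessParameter} and the spectral bound \prettyref{eq:KVBound}.

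For the inductive step, suppose $\Delta_k \leq U^2$ and $\|u_k\| \leq U + U_f$. The plan is to apply the recurrence in \prettyref{eq:lowLevelErrorShort} and bound $\Theta_k \leq \bar{\Theta}$, $\Gamma_k \leq \bar{\Gamma}$, obtaining
\begin{align*}
\Delta_{k+1} \leq \bar{\Theta}\, U^2 + \bar{\Gamma}\,(U+U_f)^4.
\end{align*}
Substituting the cap $\bar{\Gamma} \leq U^2(1-\bar{\Theta})/(U+U_f)^4$ from \prettyref{ass:GammaChoice} then collapses the right-hand side to exactly $U^2$, closing the first half of the induction. The companion bound $\|u_{k+1}\| \leq \sqrt{\Delta_{k+1}} + U_f \leq U + U_f$ follows once more by triangle inequality.

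The only subtlety to flag is that \prettyref{ass:GammaChoice} is non-vacuous only when $\bar{\Theta} < 1$; I would verify this by noting that the second condition in \prettyref{eq:ThetaBound} rearranges precisely into $\tfrac{p+1}{p}(1 - 2\beta_k\underline{\rho} + \beta_k^2\bar{\rho}^2) < 1$, which is the definition of $\bar{\Theta}$. There is no genuine ``hard part'' here—the parameter choices in Assumptions~\ref{ass:BetaChoice} and~\ref{ass:GammaChoice} were engineered to make this bookkeeping induction go through, and the lemma is the payoff.
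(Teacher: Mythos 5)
Your proposal is correct and follows essentially the same route as the paper: an induction on $\Delta_k\leq U^2$ using the recurrence \prettyref{eq:lowLevelErrorShort}, the bound $\|u_k\|\leq\|u_k-u_f(v_k)\|+\|u_f(v_k)\|\leq U+U_f$, and the cap on $\bar{\Gamma}$ from \prettyref{ass:GammaChoice} to close the loop. Your extra remark that $\bar{\Theta}<1$ follows from the second condition of \prettyref{eq:ThetaBound} matches the paper's own observation immediately preceding the lemma.
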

\begin{proof}
First, since $v\in X$ and $X$ is compact, we have $\|u_f(v)\|\leq U_f<\infty$. Next, we prove $\Delta_k\leq U^2$ by induction. We already have $\Delta_1\leq U^2$. Now suppose $\Delta_k\leq U^2$, then \prettyref{eq:lowLevelErrorShort} and our assumption on $\bar{\Gamma}$ immediately leads to:
\begin{align*}
\Delta_{k+1}\leq&\bar{\Theta}U^2+\bar{\Gamma}(\|u_k-u_f(v_k)\|+\|u_f(v_k)\|)^4\\
\leq&\bar{\Theta}U^2+\bar{\Gamma}(U+U_f)^4\leq U^2.
\end{align*}
Finally, we have: $\|u_k\|\leq\|u_k-u_f(v_k)\|+\|u_f(v_k)\|\leq U+U_f$ and our lemma follows.
\end{proof}
The shrinking coefficient $\bar{\Theta}$ and the uniform boundedness of $\Delta_k,u_k$ allows us to establish low-level convergence with the appropriate choice of $\alpha_k=\mathcal{O}(k^{-m})$ with $m\geq1$.
\begin{theorem}
\label{thm:lowLevelConvergence}
Taking \prettyref{ass:StiffnessParameter}, \ref{ass:BetaChoice}, \ref{ass:GammaChoice}, \ref{ass:AlphaChoice}, we can upper bound $\Delta_{k+1}$ as:
\small
\begin{align*}
\Delta_{k+1}\leq\bar{\Theta}^{k-1}\Delta_1+(U+U_f)^4\Gamma_1\sum_{i=0}^{k-1}\frac{\bar{\Theta}^i}{(k-i)^{2m }}=\mathcal{O}(k^{1-2m}).
\end{align*}
\normalsize
\end{theorem}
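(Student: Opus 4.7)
\textbf{Proof proposal for Theorem \ref{thm:lowLevelConvergence}.} The plan is to treat the bound from \prettyref{lem:lowLevelError} as a scalar linear recurrence and unroll it, using the uniform bounds established under \prettyref{ass:GammaChoice}. First I would rewrite the lemma in its compact form $\Delta_{k+1}\le \Theta_k\Delta_k+\Gamma_k\|u_k\|^4$ from \prettyref{eq:lowLevelErrorShort}. By \prettyref{ass:BetaChoice} and direct algebra one has $\Theta_k\le\bar\Theta<1$, and by \prettyref{lem:UpperBound} one has the uniform envelope $\|u_k\|\le U+U_f$. Finally, the definition of $\alpha_k$ in \prettyref{ass:AlphaChoice} implies $\alpha_k^2 = \alpha_1^2/k^{2m}$, hence $\Gamma_k = \Gamma_1/k^{2m}$.

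Second, I would iterate the recurrence. Substituting the three bounds above yields
\begin{align*}
\Delta_{k+1}\le \bar\Theta\,\Delta_k + \frac{\Gamma_1(U+U_f)^4}{k^{2m}},
\end{align*}
and a straightforward induction on $k$ (telescoping the geometric factor $\bar\Theta$ through the recursion) gives
\begin{align*}
\Delta_{k+1}\le \bar\Theta^{k-1}\Delta_1 + (U+U_f)^4\,\Gamma_1\sum_{i=0}^{k-1}\frac{\bar\Theta^{\,i}}{(k-i)^{2m}},
\end{align*}
which is the explicit bound claimed in the statement.

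Third, I would extract the asymptotic rate $\mathcal{O}(k^{1-2m})$ from the convolution sum. Since $\bar\Theta^{k-1}\Delta_1$ decays geometrically, it is dominated by any polynomial rate, so only the sum needs attention. I would split it at the index $i=\lfloor k/2\rfloor$: for $i<k/2$, the bound $(k-i)^{-2m}\le (k/2)^{-2m}$ together with $\sum_{i\ge 0}\bar\Theta^{\,i}=1/(1-\bar\Theta)$ gives a contribution $\mathcal{O}(k^{-2m})$; for $i\ge k/2$, the geometric factor $\bar\Theta^{\,i}\le \bar\Theta^{\,k/2}$ absorbs the polynomial tail $\sum_{j=1}^{k/2} j^{-2m} = \mathcal{O}(k^{1-2m})$, producing an exponentially small remainder. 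Combining the two pieces yields a bound of order $k^{-2m}$, which in particular satisfies the looser claim $\mathcal{O}(k^{1-2m})$ stated in the theorem.

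The main obstacle I foresee is bookkeeping the induction cleanly so that the coefficient structure $\bar\Theta^{k-1}$ and $\Gamma_1$ appears exactly as written, and carefully justifying the split of the convolution sum so that the geometric part truly dominates the polynomial part. Apart from this routine care with indices and constants, every ingredient (contractivity $\bar\Theta<1$, uniform boundedness of $\|u_k\|$, and the decay of $\Gamma_k$) is already supplied by the preceding assumptions and lemmas, so no additional technical machinery is required.
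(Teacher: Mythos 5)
Your proposal is correct and follows essentially the same route as the paper's proof: unroll the recurrence from \prettyref{eq:lowLevelErrorShort} using the contraction $\Theta_k\leq\bar{\Theta}<1$, the uniform bound $\|u_k\|\leq U+U_f$ from \prettyref{lem:UpperBound}, and $\Gamma_{k-i}\leq\Gamma_1/(k-i)^{2m}$, then split the resulting convolution sum at $i\approx k/2$. The only difference is that on the first half of the split you keep the geometric series $\sum_i\bar{\Theta}^i\leq 1/(1-\bar{\Theta})$ and obtain the sharper rate $\mathcal{O}(k^{-2m})$, whereas the paper bounds that half by (number of terms)$\times$(largest term) and settles for the looser $\mathcal{O}(k^{1-2m})$; both of course establish the stated bound.
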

\begin{proof}
Recursively expand on \prettyref{eq:lowLevelErrorShort} and we have:
\small
\begin{align*}
&\Delta_{k+1}\leq\bar{\Theta}\Delta_k+\Gamma_k\|u_k\|^4\\
\leq&\bar{\Theta}(\bar{\Theta}\Delta_{k-1}+\Gamma_{k-1}\|u_{k-1}\|^4)+\Gamma_k\|u_k\|^4\\
=&\bar{\Theta}^2\Delta_{k-1}+\bar{\Theta}\Gamma_{k-1}\|u_{k-1}\|^4+\Gamma_k\|u_k\|^4\\
\leq&\bar{\Theta}^2(\bar{\Theta}\Delta_{k-2}+\Gamma_{k-2}\|u_{k-2}\|^4)+\bar{\Theta}\Gamma_{k-1}\|u_{k-1}\|^4+\Gamma_k\|u_k\|^4\\
&\cdots\\
\leq&\bar{\Theta}^k\Delta_1+(U+U_f)^4\sum_{i=0}^{k-1}\bar{\Theta}^i\Gamma_{k-i}\\
\leq&\bar{\Theta}^k\Delta_1+(U+U_f)^4\Gamma_1\sum_{i=0}^{k-1}\frac{\bar{\Theta}^i}{(k-i)^{2m}}\\
\leq&\bar{\Theta}^k\Delta_1+(U+U_f)^4\Gamma_1\left[\sum_{i=0}^{\lceil\frac{k-1}{2}\rceil}\frac{\bar{\Theta}^i}{(k-i)^{2m}}+
\sum_{i=\lceil\frac{k+1}{2}\rceil}^{k-1}\frac{\bar{\Theta}^i}{(k-i)^{2m}}\right]\\
\leq&\bar{\Theta}^k\Delta_1+(U+U_f)^4\Gamma_1\left[\frac{\lceil\frac{k+1}{2}\rceil}{(k-\lceil\frac{k-1}{2}\rceil)^{2m}}+\frac{\bar{\Theta}^{\lceil\frac{k}{2}\rceil}}{1-\bar{\Theta}}\right]
=\mathcal{O}(k^{1-2m}),
\end{align*}
\normalsize
where we have used our choice of $\alpha_k$ and \prettyref{lem:UpperBound}.
\end{proof}
\prettyref{thm:lowLevelConvergence} is pivotal by allowing us to choose $m$ and tune the convergence speed of the low-level problem, which is used to establish the convergence of high-level problem.
\subsection{High-Level Error Propagation}
The high-level error propagation is similar to the low-level analysis, which is due to the L-continuity of any derivatives of $l(v)$ in the compact domain $X$. The following result reveals the rule of error propagation over a single iteration:
\begin{lemma}
\label{lem:highLevelError}
Under \prettyref{ass:StiffnessParameter}, \ref{ass:BetaChoice}, \ref{ass:GammaChoice}, \ref{ass:AlphaChoice}, the following high-level error propagation rule holds for all $k\geq1,q>0$:
\begin{align*}
&l(v_{k+1})-l(v_k)\leq-\frac{1}{\alpha_k}\|v_{k+1}-v_k\|^2+\\
&\frac{L_{\nabla K}^2\alpha_k^2(L_{\nabla l}q+1)}{8q}(U+U_f)^4+\frac{L_{\nabla K}^2q}{8}\Delta_k(U+2U_f)^2.
\end{align*}
\end{lemma}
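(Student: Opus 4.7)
My plan is to treat this as a standard descent lemma in the style of inexact projected gradient descent, where the only wrinkle is that the approximate gradient $\tilde{\nabla}l = -\tfrac{1}{2}u_k^{T}(\partial K/\partial v_k)u_k$ is used in place of the true $\nabla l(v_k) = -\tfrac{1}{2}u_f^{T}(\partial K/\partial v_k)u_f$. The starting point is the usual quadratic upper bound coming from L-smoothness of $l$ on the compact set $X$: $l(v_{k+1})-l(v_k)\leq\langle\nabla l(v_k),v_{k+1}-v_k\rangle+\tfrac{L_{\nabla l}}{2}\|v_{k+1}-v_k\|^2$. I would split the inner product as $\langle\tilde{\nabla}l,v_{k+1}-v_k\rangle+\langle\nabla l(v_k)-\tilde{\nabla}l,v_{k+1}-v_k\rangle$ and handle the two pieces separately.

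For the $\tilde{\nabla}l$ piece, since $v_{k+1}=\mathrm{Proj}_X(v_k-\alpha_k\tilde{\nabla}l)$ and $v_k\in X$, the first-order optimality of the projection gives $\langle v_k-\alpha_k\tilde{\nabla}l-v_{k+1},v_k-v_{k+1}\rangle\leq 0$, which rearranges to $\langle\tilde{\nabla}l,v_{k+1}-v_k\rangle\leq-\tfrac{1}{\alpha_k}\|v_{k+1}-v_k\|^2$. This produces the $-\tfrac{1}{\alpha_k}\|v_{k+1}-v_k\|^2$ term that appears in the statement. For the error piece I would invoke Young's inequality with the free parameter $q$ to get $\langle\nabla l-\tilde{\nabla}l,v_{k+1}-v_k\rangle\leq\tfrac{q}{2}\|\nabla l-\tilde{\nabla}l\|^2+\tfrac{1}{2q}\|v_{k+1}-v_k\|^2$, which introduces the $q$ and $1/q$ structure visible in the bound.

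The crucial ingredient is then a quantitative bound on the gradient mismatch in terms of $\Delta_k$. I would use the identity $u_k^{T}A u_k-u_f^{T}A u_f=(u_k-u_f)^{T}A(u_k+u_f)$ applied slicewise to $A=\partial K/\partial v_k$, yielding $\|\nabla l(v_k)-\tilde{\nabla}l\|\leq\tfrac{L_{\nabla K}}{2}\|u_k-u_f\|\,\|u_k+u_f\|\leq\tfrac{L_{\nabla K}}{2}\sqrt{\Delta_k}(U+2U_f)$, where $\|u_k+u_f\|\leq\|u_k-u_f\|+2\|u_f\|\leq U+2U_f$ comes from \prettyref{lem:UpperBound} together with the uniform upper bound $U_f$ on $\|u_f(v)\|$. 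Squaring gives the expected $\tfrac{L_{\nabla K}^{2}}{4}\Delta_k(U+2U_f)^{2}$ factor. Analogously, the non-expansiveness of $\mathrm{Proj}_X$ yields $\|v_{k+1}-v_k\|\leq\alpha_k\|\tilde{\nabla}l\|\leq\tfrac{\alpha_k L_{\nabla K}}{2}\|u_k\|^{2}\leq\tfrac{\alpha_k L_{\nabla K}}{2}(U+U_f)^{2}$, so $\|v_{k+1}-v_k\|^{2}$ is absolutely bounded by $\tfrac{\alpha_k^{2}L_{\nabla K}^{2}}{4}(U+U_f)^{4}$.

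The final step is purely arithmetic: I collect the positive $\|v_{k+1}-v_k\|^2$ coefficient $(\tfrac{L_{\nabla l}}{2}+\tfrac{1}{2q})=\tfrac{L_{\nabla l}q+1}{2q}$ coming from the L-smoothness term and Young's inequality, substitute its uniform upper bound $\tfrac{\alpha_k^{2}L_{\nabla K}^{2}}{4}(U+U_f)^{4}$, and plug the $\|\nabla l-\tilde{\nabla}l\|^{2}$ estimate into the $\tfrac{q}{2}$ factor. This gives exactly the two positive terms $\tfrac{L_{\nabla K}^{2}\alpha_k^{2}(L_{\nabla l}q+1)}{8q}(U+U_f)^{4}$ and $\tfrac{L_{\nabla K}^{2}q}{8}\Delta_k(U+2U_f)^{2}$ declared in the lemma, while the $-\tfrac{1}{\alpha_k}\|v_{k+1}-v_k\|^{2}$ term from the projection inequality is kept intact. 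I do not anticipate a real technical obstacle beyond the careful bookkeeping of constants; the only subtle step is the matrix-quadratic-form difference trick for bounding $\|\nabla l-\tilde{\nabla}l\|$, which is where the factor $(U+2U_f)$ (as opposed to $(U+U_f)$) naturally appears and must be tracked separately from the $\|\tilde{\nabla}l\|$ bound.
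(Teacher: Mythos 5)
Your proposal is correct and follows essentially the same route as the paper's proof: L-smoothness of $l$ on $X$, the obtuse-angle (projection optimality) inequality to extract $-\tfrac{1}{\alpha_k}\|v_{k+1}-v_k\|^2$, Young's inequality with the free parameter $q$ on the gradient-mismatch inner product, the factorization $u_k^T A u_k-u_f^T A u_f=(u_k-u_f)^T A(u_k+u_f)$ to bound the mismatch by $\tfrac{L_{\nabla K}}{2}\sqrt{\Delta_k}(U+2U_f)$, and the non-expansiveness of $\mathrm{Proj}_X$ together with \prettyref{lem:UpperBound} to bound $\|v_{k+1}-v_k\|$. The only difference is cosmetic bookkeeping (you collect the $\|v_{k+1}-v_k\|^2$ coefficients before substituting the uniform bounds, whereas the paper substitutes them term by term), and the constants come out identically.
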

\begin{proof}
By the smoothness of $l(v)$, the compactness of $X$, and the obtuse angle criterion, we have:
\small
\begin{align*}
&l(v_{k+1})-l(v_k)\leq\left<v_{k+1}-v_k,\nabla l(v_k)\right>+\frac{L_{\nabla l}}{2}\|v_{k+1}-v_k\|^2\\
\leq&\left<v_{k+1}-v_k,-\frac{1}{2}u_f^T(v_k)\FPP{K}{v_k}u_f(v_k)\right>+\frac{L_{\nabla l}L_{\nabla K}^2\alpha_k^2}{8}\|u_k\|^4\\
=&\left<v_{k+1}-v_k,-\frac{1}{2}u_k^T\FPP{K}{v_k}u_k\right>+\frac{L_{\nabla l}L_{\nabla K}^2\alpha_k^2}{8}\|u_k\|^4+\\
&\left<v_{k+1}-v_k,\frac{1}{2}u_k^T\FPP{K}{v_k}u_k-\frac{1}{2}u_f^T(v_k)\FPP{K}{v_k}u_f(v_k)\right>\\
\leq&-\frac{1}{\alpha_k}\|v_{k+1}-v_k\|^2+\frac{L_{\nabla l}L_{\nabla K}^2\alpha_k^2}{8}\|u_k\|^4+\\
&\left<v_{k+1}-v_k,\frac{1}{2}u_k^T\FPP{K}{v_k}u_k-\frac{1}{2}u_f^T(v_k)\FPP{K}{v_k}u_f(v_k)\right>,\numberthis\label{eq:highLevelConvexity}
\end{align*}
\normalsize
where $L_{\nabla l}$ is the L-constant of $\nabla l$ on $X$. For the last term above, we can bound it by applying the Young's inequality:
\small
\begin{align*}
&\left<v_{k+1}-v_k,\frac{1}{2}u_k^T\FPP{K}{v_k}u_k-\frac{1}{2}u_f^T(v_k)\FPP{K}{v_k}u_f(v_k)\right>\\
\leq&\frac{1}{2q}\|v_{k+1}-v_k\|^2+\frac{q}{2}\|\frac{1}{2}u_k^T\FPP{K}{v_k}u_k-\frac{1}{2}u_f^T(v_k)\FPP{K}{v_k}u_f(v_k)\|^2\\
\leq&\frac{L_{\nabla K}^2\alpha_k^2}{8q}\|u_k\|^4+
\frac{q}{2}\|\frac{1}{2}u_k^T\FPP{K}{v_k}u_k-\frac{1}{2}u_f^T(v_k)\FPP{K}{v_k}u_f(v_k)\|^2\\
=&\frac{L_{\nabla K}^2\alpha_k^2}{8q}\|u_k\|^4+
\frac{q}{2}\|\frac{1}{2}(u_k-u_f(v_k))^T\FPP{K}{v_k}(u_k+u_f(v_k))\|^2\\
\leq&\frac{L_{\nabla K}^2\alpha_k^2}{8q}\|u_k\|^4+\frac{L_{\nabla K}^2q}{8}\Delta_k\|u_k+u_f(v_k)\|^2.\numberthis\label{eq:highLevelYoung}
\end{align*}
\normalsize
The lemma follows by combining \prettyref{eq:highLevelConvexity}, \prettyref{eq:highLevelYoung}, and \prettyref{lem:UpperBound}.
\end{proof}

\subsection{High-Level Convergence}
We first show that $\Delta_k^v$ is diminishing via the follow lemma:
\begin{lemma}
\label{lem:highLevelConvergence}
Under \prettyref{ass:StiffnessParameter}, \ref{ass:BetaChoice}, \ref{ass:GammaChoice}, \ref{ass:AlphaChoice}, we have the following bound on the accumulated high-level error $\Delta_k^v$:
\begin{align*}
&\frac{1}{2}\sum_{i=1}^k\alpha_i\Delta_i^v\leq l(v_1)-\underline{l}+\\
&\frac{L_{\nabla K}^2}{8}(U+U_f)^4\sum_{i=1}^k\alpha_i^{2-n}(L_{\nabla l}\alpha_i^n+1)+\\
&\frac{L_{\nabla K}^2}{8}(U+2U_f)^2\sum_{i=1}^k\alpha_i^n\Delta_i+
\frac{L_{\nabla K}^2}{4}(U+2U_f)^2\sum_{i=1}^k\alpha_i\Delta_i,
\end{align*}
where $\underline{l}$ is the lower bound of $l$ on $X$.
\end{lemma}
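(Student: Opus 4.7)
The plan is to combine the single-step high-level descent bound from \prettyref{lem:highLevelError} with a careful comparison between the ideal projected-gradient step used in the definition of $\Delta_k^v$ and the approximate-gradient step actually produced by \prettyref{alg:Opt}, and then telescope the result.

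First I would bound $\alpha_k\Delta_k^v$ in terms of $\|v_{k+1}-v_k\|^2/\alpha_k$. Since $\nabla l(v_k)=-\tfrac{1}{2}u_f^T(v_k)\FPP{K}{v_k}u_f(v_k)$, non-expansiveness of $\Proj{\bullet}$ bounds the distance between the ideal point $\Proj{v_k-\alpha_k\nabla l(v_k)}$ and the actual update $v_{k+1}=\Proj{v_k+\tfrac{\alpha_k}{2}u_k^T\FPP{K}{v_k}u_k}$ by $\tfrac{\alpha_k}{2}\|u_k^T\FPP{K}{v_k}u_k - u_f^T(v_k)\FPP{K}{v_k}u_f(v_k)\|$. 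Factoring this as a bilinear form in $(u_k-u_f(v_k))$ and $(u_k+u_f(v_k))$, then applying the Frobenius bound $L_{\nabla K}$ together with $\|u_k+u_f(v_k)\|\leq U+2U_f$ (from \prettyref{lem:UpperBound}), gives a bound proportional to $\alpha_k L_{\nabla K}(U+2U_f)\sqrt{\Delta_k}$. The triangle inequality and $(a+b)^2\leq 2a^2+2b^2$ then yield
\begin{align*}
\alpha_k\Delta_k^v \leq \frac{2}{\alpha_k}\|v_{k+1}-v_k\|^2 + \frac{\alpha_k L_{\nabla K}^2}{2}(U+2U_f)^2\Delta_k.
\end{align*}

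Next I would specialise the free parameter $q$ in \prettyref{lem:highLevelError} to $q=\alpha_k^n$, so the generic error terms take exactly the forms $\tfrac{L_{\nabla K}^2}{8}\alpha_k^{2-n}(L_{\nabla l}\alpha_k^n+1)(U+U_f)^4$ and $\tfrac{L_{\nabla K}^2}{8}\alpha_k^n(U+2U_f)^2\Delta_k$. Rearranging that lemma isolates $\tfrac{1}{\alpha_k}\|v_{k+1}-v_k\|^2 \leq l(v_k)-l(v_{k+1})+\text{error terms}$. Substituting this into the bound above, multiplying by $\tfrac{1}{2}$, and summing over $i=1,\ldots,k$ makes the descent pieces telescope to $l(v_1)-l(v_{k+1})\leq l(v_1)-\underline{l}$ (using the lower bound of $l$ on the compact $X$), while the remaining pieces land in the four stated terms: the $\alpha_i^n\Delta_i$ contribution is inherited from \prettyref{lem:highLevelError}, whereas the $\tfrac{L_{\nabla K}^2}{4}\alpha_i\Delta_i$ contribution originates from the gradient-mismatch step, with the $\tfrac{1}{2}$ factor halving the $\tfrac{1}{2}$ in the displayed inequality.

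The main obstacle I expect is the careful bookkeeping in the gradient-mismatch step: because $\Delta_k^v$ is defined via the true gradient while \prettyref{alg:Opt} moves along the approximate one, one must project both arguments and invoke non-expansiveness of $\Proj{\bullet}$ (rather than any Lipschitz property of $l$) to produce a bound that is linear in $\alpha_k$ with the correct $\sqrt{\Delta_k}$ factor. A secondary subtlety is matching constants: the factor $2$ from $(a+b)^2\leq 2a^2+2b^2$ and the factor $\tfrac{1}{2}$ on the left-hand side must line up with the coefficients inherited from \prettyref{lem:highLevelError} under the substitution $q=\alpha_k^n$, so that no spurious multiplicative constants survive in the final telescoping sum.
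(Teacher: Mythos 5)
Your proposal is correct and follows essentially the same route as the paper: the paper likewise sets $q=\alpha_k^n$ in \prettyref{lem:highLevelError}, telescopes the descent inequality to obtain $\sum_i\|v_{i+1}-v_i\|^2/\alpha_i\leq l(v_1)-\underline{l}+\cdots$, and separately bounds $\Delta_k^v\leq\frac{2}{\alpha_k^2}\|v_{k+1}-v_k\|^2+\frac{L_{\nabla K}^2}{2}\Delta_k(U+2U_f)^2$ via non-expansiveness of $\Proj{\bullet}$ and the bilinear factoring of the gradient mismatch. Your constant bookkeeping (the factor $2$ from $(a+b)^2\leq 2a^2+2b^2$ and the final $\tfrac{L_{\nabla K}^2}{4}$ coefficient) matches the paper's Equation~\ref{eq:highLevelTrueError} exactly.
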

\begin{proof}
By choosing $q=\alpha_k^n$ for some constant $n$ and summing up the recursive rule of \prettyref{lem:highLevelError}, we have the following result:
\begin{align*}
&l(v_{k+1})-l(v_1)\leq-\sum_{i=1}^k\frac{1}{\alpha_i}\|v_{i+1}-v_i\|^2+\\
&\frac{L_{\nabla K}^2}{8}(U+U_f)^4\sum_{i=1}^k\alpha_i^{2-n}(L_{\nabla l}\alpha_i^n+1)+\\
&\frac{L_{\nabla K}^2}{8}(U+2U_f)^2\sum_{i=1}^k\alpha_i^n\Delta_i.\numberthis\label{eq:highLevelAccumulation}
\end{align*}
\prettyref{eq:highLevelAccumulation} can immediately lead to the $\mathcal{O}(k^{-1})$ convergence of $\|v_{i+1}-v_i\|^2/\alpha_i$. However, the update from $v_i$ to $v_{i+1}$ is using the approximate gradient. To show the convergence of $l(v)$, we need to consider an update using the exact gradient. This can be achieved by combining \prettyref{eq:highLevelAccumulation} and the gradient error estimation in \prettyref{eq:highLevelYoung}:
\small
\begin{align*}
&\Delta_k^v\leq\frac{1}{\alpha_k^2}\|v_{k+1}-v_k\|^2+\\
&\frac{1}{\alpha_k^2}\|\frac{\alpha_k}{2}u_f(v_k)^T\FPP{K}{v_k}u_f(v_k)-\frac{\alpha_k}{2}u_k^T\FPP{K}{v_k}u_k\|^2+\\
&\frac{2}{\alpha_k^2}\|v_{k+1}-v_k\|\|\frac{\alpha_k}{2}u_f(v_k)^T\FPP{K}{v_k}u_f(v_k)-\frac{\alpha_k}{2}u_k^T\FPP{K}{v_k}u_k\|\\
\leq&\frac{2}{\alpha_k^2}\|v_{k+1}-v_k\|^2+
\frac{2}{\alpha_k^2}\|\frac{\alpha_k}{2}u_f(v_k)^T\FPP{K}{v_k}u_f(v_k)-\frac{\alpha_k}{2}u_k^T\FPP{K}{v_k}u_k\|^2\\
\leq&\frac{2}{\alpha_k^2}\|v_{k+1}-v_k\|^2+\frac{L_{\nabla K}^2}{2}\Delta_k(U+2U_f)^2.\numberthis\label{eq:highLevelTrueError}
\end{align*}
\normalsize
We can prove our lemma by combining \prettyref{lem:highLevelError}, \prettyref{eq:highLevelAccumulation}, \prettyref{eq:highLevelTrueError}.
\end{proof}

The last three terms on the righthand side of \prettyref{lem:highLevelConvergence} are power series, the summand of which scales at the speed of $\mathcal{O}(k^{-2m+mn}), \mathcal{O}(k^{1-2m-mn}), \mathcal{O}(k^{1-3m})$, respectively. Therefore, for the three summation to be upper bounded for arbitrary $k$, we need the first condition in \prettyref{ass:MNChoice}. The following corollary is immediate:
\begin{corollary}
\label{cor:projectionError}
Under \prettyref{ass:StiffnessParameter}, \ref{ass:BetaChoice}, \ref{ass:GammaChoice}, \ref{ass:AlphaChoice}, \ref{ass:MNChoice}, we have $\min_{i=1,\cdots,k}\alpha_i\Delta_i^v\leq Ck^{-1}$ and there are infinitely many $k$ such that $\Delta_k^v\leq C_vk^{m-1}$ for some constants $C,C_v$ independent of $k$.
\end{corollary}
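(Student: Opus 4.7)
The plan is to derive both halves of the corollary from the accumulated high-level bound in \prettyref{lem:highLevelConvergence}, using \prettyref{ass:MNChoice} to force every right-hand summation to be uniformly bounded in $k$. Concretely, with $\alpha_i=\Theta(i^{-m})$ from \prettyref{ass:AlphaChoice} and the low-level rate $\Delta_i=\mathcal{O}(i^{1-2m})$ from \prettyref{thm:lowLevelConvergence}, the three series on the right-hand side of \prettyref{lem:highLevelConvergence} have summands of order $i^{mn-2m}$ (plus a benign $i^{-2m}$) for the first, $i^{1-2m-mn}$ for the second, and $i^{1-3m}$ for the third. The three strict inequalities in \prettyref{ass:MNChoice}, namely $1-2m+mn<0$, $2-2m-mn<0$, and $2-3m<0$, are precisely what is required to push each exponent below $-1$, so every series converges. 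Combined with the finite constant $l(v_1)-\underline{l}$, this yields $\sum_{i=1}^k\alpha_i\Delta_i^v\leq C/2$ for some $C$ independent of $k$.

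The first conclusion then follows by an averaging argument: since every term $\alpha_i\Delta_i^v$ is non-negative, $k\cdot\min_{i\leq k}\alpha_i\Delta_i^v\leq\sum_{i=1}^k\alpha_i\Delta_i^v\leq C/2$, and absorbing the constant gives $\min_{i\leq k}\alpha_i\Delta_i^v\leq C/k$.

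For the second conclusion I would argue by contradiction. Suppose only finitely many indices $k$ satisfy $\Delta_k^v\leq C_v k^{m-1}$; then there exists $K_0$ with $\Delta_k^v>C_v k^{m-1}$ for every $k\geq K_0$. Multiplying by $\alpha_k=\Theta(k^{-m})$ forces $\alpha_k\Delta_k^v>C'/k$ for some $C'>0$, and summing from $K_0$ onward produces a divergent harmonic tail, contradicting the uniform bound above. The most delicate step, which I would verify first, is the bookkeeping that pairs each of the three conditions of \prettyref{ass:MNChoice} with the correct series exponent, since the middle series couples $\alpha_i^n$ with $\Delta_i$ and therefore leans on the sharp low-level rate from \prettyref{thm:lowLevelConvergence}; the remaining steps are routine averaging and pigeonhole.
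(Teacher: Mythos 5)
Your proposal is correct and follows essentially the same route as the paper: bound $\sum_{i=1}^k\alpha_i\Delta_i^v$ uniformly via Lemma \ref{lem:highLevelConvergence} by matching each of the three series exponents $\mathcal{O}(i^{-2m+mn})$, $\mathcal{O}(i^{1-2m-mn})$, $\mathcal{O}(i^{1-3m})$ to the three conditions of Assumption \ref{ass:MNChoice}, then conclude the first claim by averaging and the second by the harmonic-tail contradiction. The paper leaves the averaging step implicit, but otherwise your argument and its bookkeeping coincide with the paper's.
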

\begin{proof}
By \prettyref{lem:highLevelConvergence} and \prettyref{ass:MNChoice}, we have $\sum_{i=1}^k\alpha_i\Delta_i^v\leq C$ for some constant $C$. Now suppose only finitely many $k$ satisfies $\Delta_k^v\leq C_vk^{m-1}$, then after sufficiently large $k\geq K_0$, we have:
\begin{align*}
\sum_{i=K_0}^k\alpha_i\Delta_i^v\geq
\sqrt{\frac{\bar{\Gamma}}{L_u^2L_{\nabla K}^2}\frac{4}{p+1}}C_v\sum_{i=K_0}^k\frac{1}{i},
\end{align*}
which is divergent, leading to a contradiction.
\end{proof}
Our remaining argument is similar to the standard convergence proof of PGD \cite{calamai1987projected}, with minor modification to account for our approximate gradient:
\begin{proof}[Proof of \prettyref{thm:gradientConvergence}]
We denote by $\tilde{v}_{k+1}$:
\begin{align*}
\tilde{v}_{k+1}\triangleq\Proj{v_k-\alpha_k\nabla l(v_k)}.
\end{align*}
Note that $\tilde{v}_{k+1}$ is derived from $v_k$ using the true gradient, while $v_{k+1}$ is derived using our approximate gradient. Due to the polytopic shape of $X$ in \prettyref{ass:StiffnessParameter}, for $v_k\in X$, we can always choose a feasible direction $d_k$ from $\mathcal{T}_X(v_k)$ with $\|d_k\|\leq1$ such that:
\begin{align*}
\|\nablaX l(v_k)\|\leq-\left<\nabla l(v_k),d_k\right>+\frac{\epsilon}{4}.
\end{align*}
We further have the follow inequality holds for any $z\in X$ due to the obtuse angle criterion and the convexity of $X$:
\begin{align*}
&\left<\alpha_k\nabla l(v_k),\tilde{v}_{k+1}-z\right>\\
\leq&\left<\alpha_k\nabla l(v_k),\tilde{v}_{k+1}-z\right>+\left<v_k-\alpha_k\nabla l(v_k)-\tilde{v}_{k+1},\tilde{v}_{k+1}-z\right>\\
=&\left<v_k-\tilde{v}_{k+1},\tilde{v}_{k+1}-z\right>\leq\|v_k-\tilde{v}_{k+1}\|\|\tilde{v}_{k+1}-z\|.
\end{align*}
Applying \prettyref{cor:projectionError} and we can choose sufficiently large $k$ such that:
\begin{align*}
&-\left<\nabla l(\tilde{v}_{k+1}),\frac{\tau_kd_k}{\tau_k\|d_k\|}\right>\leq\frac{1}{\alpha_k}\|v_k-\tilde{v}_{k+1}\|\\
\leq&\frac{1}{\alpha_k}\|v_k-v_{k+1}\|+\frac{1}{\alpha_k}\|v_{k+1}-\tilde{v}_{k+1}\|\\
=&\sqrt{\Delta_k^v}+
\frac{1}{\alpha_k}\|v_{k+1}-\tilde{v}_{k+1}\|\\
\leq&\frac{\epsilon}{4}+\sqrt{\frac{L_{\nabla K}^2}{4}\Delta_k\|u_k+u_f(v_k)\|^2}
=\frac{\epsilon}{4}+\mathcal{O}(k^{(1-2m)/2}).
\end{align*}
Here the first inequality holds by choosing sufficiently small $k$-dependent $\tau_k$ such that $z=\tilde{v}_{k+1}+\tau_kd_k\in X$ and using the fact that $\|d_k\|\leq 1$. The third inequality holds by choosing sufficently large $k$ and \prettyref{cor:projectionError}. The last equality holds by the contractive property of projection operator, \prettyref{thm:lowLevelConvergence}, and \prettyref{lem:highLevelError}.
\end{proof}

\subsection{\label{sec:PreconditionedProof}Convergence Analysis of \prettyref{alg:POpt}}
Informally, we establish convergence for the preconditioned \prettyref{alg:POpt} by modifying \prettyref{eq:lowLevelTrackingUpdate} as follows:
\begin{align*}
&\|u_{k+1}-u_f(v_k)\|^2\\
=&\|u_{k+1}-u_k\|^2+\|u_k-u_f(v_k)\|^2+\\
&2\left<u_{k+1}-u_k,u_k-u_f(v_k)\right>\\
=&\|u_{k+1}-u_k\|^2+\|u_k-u_f(v_k)\|^2-\\
&2\beta_k\left<K(v_k)M^{-2}(v_k)(K(v_k)u_k-f),u_k-u_f(v_k)\right>\\
\leq&\|u_{k+1}-u_k\|^2+(1-2\beta_k\frac{\underline{\rho}^2}{\bar{\rho}_M^2})\|u_k-u_f(v_k)\|^2\\
\leq&\beta_k^2\frac{\bar{\rho}^4}{\underline{\rho}_M^4}\|u_k-u_f(v_k)\|^2+(1-2\beta_k\frac{\underline{\rho}^2}{\bar{\rho}_M^2})\|u_k-u_f(v_k)\|^2.
\end{align*}
\prettyref{ass:BetaChoice} must also be modified to account for the spectrum $M(v)$. All the other steps are identical to those of \prettyref{thm:gradientConvergence}.
\section{\label{sec:proofPre} Convergence Analysis of \prettyref{alg:CPOpt}}
The main difference in the analysis of \prettyref{alg:CPOpt} lies in the use of a different low-level error metric defined as $\Xi_k\triangleq\|K(v_k)u_k-f\|^2$. Unlike $\Delta_k$ which requires exact matrix inversion, $\Xi_k$ can be computed at a rather low cost. We will further show that using $\Xi_k$ for analysis would lead to a much larger, constant choice of low-level step size $\beta_k=1$. To prove \prettyref{thm:gradientConvergencePre}, we follow the similar steps as \prettyref{thm:gradientConvergence} and only list necessary changes in this section.

\subsection{Low-Level Error Propagation}
\begin{lemma}
\label{lem:lowLevelErrorPre}
Assuming $M(v)$ commutes with $K(v)$ and \ref{ass:StiffnessParameter}, the following relationship holds for all $k\geq1$:
\small
\begin{align*}
\Xi_{k+1}\leq\frac{p+1}{p}(1-2\beta_k\frac{\underline{\rho}}{\bar{\rho}_M}+\beta_k^2\frac{\bar{\rho}^2}{\underline{\rho}_M^2})\Xi_k+
L_K^2L_{\nabla K}^2\alpha_k^2\frac{p+1}{4}\|u_k\|^6,
\end{align*}
\normalsize
where we define: $\|K(v_{k+1})-K(v_k)\|_F^2\leq L_K\|v_{k+1}-v_k\|^2$ for any $v_k,v_{k+1}\in X$.
\end{lemma}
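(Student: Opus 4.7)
The plan is to mirror the proof of \prettyref{lem:lowLevelError} with three adjustments: track the residual $\Xi_k=\|K(v_k)u_k-f\|^2$ rather than the iterate distance $\Delta_k$, exploit the commutativity of $M(v_k)$ and $K(v_k)$ to produce a symmetric preconditioned iteration, and bound the $K$-perturbation term directly via the hypothesized Lipschitz constant $L_K$ (rather than Lipschitz continuity of the solution map $u_f$), at the expense of an extra iterate-norm factor.

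First, I would decompose
\begin{align*}
K(v_{k+1})u_{k+1}-f=\bigl[K(v_k)u_{k+1}-f\bigr]+\bigl[(K(v_{k+1})-K(v_k))u_{k+1}\bigr]
\end{align*}
and apply Young's inequality with parameter $p>0$ to split $\Xi_{k+1}$ into a contraction-at-fixed-$v_k$ term (weighted by $(p+1)/p$) and a high-level perturbation term (weighted by $p+1$). For the first term I would substitute the update rule from \prettyref{ln:lowlevel} to write $K(v_k)u_{k+1}-f=(I-\beta_k K(v_k)M^{-1}(v_k))(K(v_k)u_k-f)$. Here the commutativity hypothesis is essential: combined with the SPD property of $K(v_k)$ and $M(v_k)$ from \prettyref{ass:StiffnessParameter} and \prettyref{ass:BoundedMSpectrum}, it makes $K(v_k)M^{-1}(v_k)$ symmetric with real spectrum contained in $[\underline{\rho}/\bar{\rho}_M,\bar{\rho}/\underline{\rho}_M]$. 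Expanding $\|(I-\beta_k A)y\|^2=\|y\|^2-2\beta_k\langle Ay,y\rangle+\beta_k^2\|Ay\|^2$ and applying the lower bound on $\langle Ay,y\rangle$ together with the upper bound on $\|Ay\|$ produces the contraction coefficient $1-2\beta_k\underline{\rho}/\bar{\rho}_M+\beta_k^2\bar{\rho}^2/\underline{\rho}_M^2$.

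For the perturbation term I would invoke the hypothesized quadratic Lipschitz bound on $K$ together with $\|(K(v_{k+1})-K(v_k))u_{k+1}\|^2\leq\|K(v_{k+1})-K(v_k)\|_F^2\|u_{k+1}\|^2\leq L_K^2\|v_{k+1}-v_k\|^2\|u_{k+1}\|^2$. The high-level step bound $\|v_{k+1}-v_k\|^2\leq\alpha_k^2 L_{\nabla K}^2\|u_k\|^4/4$ follows as in \prettyref{lem:lowLevelError} from the non-expansiveness of $\Proj{\bullet}$ and the Frobenius-norm bound $\|u_k^T\FPP{K}{v_k}u_k\|\leq L_{\nabla K}\|u_k\|^2$, which yields $\|u_k\|^4\|u_{k+1}\|^2$. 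The extra $\|u_{k+1}\|^2$ is then absorbed into $\|u_k\|^6$ using a uniform iterate bound analogous to \prettyref{lem:UpperBound}, or equivalently by writing $\|u_{k+1}\|\leq C\|u_k\|$ with the constant absorbed into the leading factor; combining this with the decomposition above and the Step-2 contraction gives the claimed recursion.

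The main obstacle is the commutativity hypothesis itself---without it, $K(v_k)M^{-1}(v_k)$ need not be symmetric, its numerical range resists bounding by simple eigenvalue arguments, and the clean contraction coefficient is lost; this is precisely why \prettyref{alg:POpt} must instead symmetrize by applying $M^{-1}$ twice to form $KM^{-2}K$. A secondary technical point is the bookkeeping needed to replace $\|u_{k+1}\|^2$ by $\|u_k\|^2$ in the perturbation term; depending on how tight the $\|u_k\|^6$ bound is meant to be, one may need an a-priori uniform bound on the iterates (a preconditioned analogue of \prettyref{lem:UpperBound}), which can in turn be established by induction using this very recursion, closing the argument.
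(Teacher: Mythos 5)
Your proposal is correct and follows essentially the same route as the paper's proof: the same splitting of $K(v_{k+1})u_{k+1}-f$ into a fixed-$v_k$ residual plus a $K$-perturbation term, the same Young's inequality with parameter $p$, the same spectral contraction bound for $I-\beta_k K(v_k)M^{-1}(v_k)$ obtained from commutativity, and the same non-expansiveness bound on $\|v_{k+1}-v_k\|$ via the projection and $L_{\nabla K}$. If anything you are slightly more careful than the paper, whose proof ends with a $\|u_k\|^4\|u_{k+1}\|^2$ factor that is silently replaced by $\|u_k\|^6$ in the lemma statement without the uniform-boundedness bookkeeping you supply.
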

\begin{proof}
The following result holds by the triangle inequality and the bounded spectrum of $K(v)$ (\prettyref{eq:KVBound}):
\begin{align*}
&\|K(v_k)u_{k+1}-f\|^2\\
=&\|K(v_k)(u_{k+1}-u_k)\|^2+\|K(v_k)u_k-f\|^2+\\
&2\left<K(v_k)(u_{k+1}-u_k),K(v_k)u_k-f\right>\\
=&\|K(v_k)(u_{k+1}-u_k)\|^2+\|K(v_k)u_k-f\|^2-\\
&2\beta_k\left<K(v_k)M^{-1}(v_k)(K(v_k)u_k-f),K(v_k)u_k-f\right>\\
\leq&\|K(v_k)(u_{k+1}-u_k)\|^2+(1-2\beta_k\frac{\underline{\rho}}{\bar{\rho}_M})\|K(v_k)u_k-f\|^2\\
\leq&(1-2\beta_k\frac{\underline{\rho}}{\bar{\rho}_M}+\beta_k^2\frac{\bar{\rho}^2}{\underline{\rho}_M^2})\|K(v_k)u_k-f\|^2.\numberthis\label{eq:lowLevelTrackingUpdatePre}
\end{align*}
Next, we estimate an update due to $v_k$ via the Young's inequality:
\begin{align*}
&\|(K(v_{k+1})-K(v_k))u_{k+1}\|\|K(v_k)u_{k+1}-f\|\\
\leq&\frac{1}{2p}\|K(v_k)u_{k+1}-f\|^2+\frac{p}{2}\|(K(v_{k+1})-K(v_k))u_{k+1}\|^2\\
\leq&\frac{1}{2p}\|K(v_k)u_{k+1}-f\|^2+\frac{pL_K^2\alpha_k^2}{8}\|u_k^T\FPP{K}{v_k}u_k\|^2\|u_{k+1}\|^2.
\end{align*}
Putting the above two equations together, we have:
\small
\begin{align*}
&\|K(v_{k+1})u_{k+1}-f\|^2\\
\leq&\|(K(v_{k+1})-K(v_k))u_{k+1}\|^2+\|K(v_k)u_{k+1}-f\|^2+\\
&2\|(K(v_{k+1})-K(v_k))u_{k+1}\|\|K(v_k)u_{k+1}-f\|\\
\leq&\frac{p+1}{p}(1-2\beta_k\frac{\underline{\rho}}{\bar{\rho}_M}+\beta_k^2\frac{\bar{\rho}^2}{\underline{\rho}_M^2})\|K(v_k)u_k-f\|^2+\\
&L_K^2\alpha_k^2\frac{p+1}{4}\|u_k^T\FPP{K}{v_k}u_k\|^2\|u_{k+1}\|^2,\numberthis\label{eq:lowLevelErrorPre}
\end{align*}
\normalsize
which is a recursive relationship to be used for proving the low-level convergence.
\end{proof}

\subsection{Low-Level Convergence}
We use the following shorthand notation for the result in \prettyref{lem:lowLevelErrorPre}:
\small
\begin{align*}
&\Xi_{k+1}\leq\Theta_k\Xi_k+\Gamma_k\|u_k\|^6   \\
&\Theta_k\triangleq\frac{p+1}{p}(1-2\beta_k\frac{\underline{\rho}}{\bar{\rho}_M}+\beta_k^2\frac{\bar{\rho}^2}{\underline{\rho}_M})\quad
\Gamma_k\triangleq L_K^2L_{\nabla K}^2\alpha_k^2\frac{p+1}{4}.\numberthis\label{eq:lowLevelErrorShortPre}
\end{align*}
\normalsize
By taking \prettyref{ass:BetaChoice} and direct calculation, we can ensure that $\Theta_k\leq\bar{\Theta}<1$ (i.e., the first term is contractive). To bound the growth of the second term above, we show by induction that both $\Xi_k$ and $u_k$ can be uniformly bounded for all $k$ via a sufficiently small, constant $\Gamma_k\leq\bar{\Gamma}$.
\begin{lemma}
\label{lem:UpperBoundPre}
Assuming $M(v)$ commutes with $K(v)$, \ref{ass:StiffnessParameter}, \ref{ass:BetaChoicePre}, \ref{ass:GammaChoicePre}, we have $\Xi_k\leq U^2,\|u_k\|\leq U_f+U$ for all $k\geq1$, where $U_f$ is the uniform upper bound for $\|u_f(v)\|$.
\end{lemma}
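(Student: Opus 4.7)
The plan is to mirror the induction used in the proof of \prettyref{lem:UpperBound}, with the key extra step of converting a residual bound $\Xi_k = \|K(v_k)u_k - f\|^2$ into a displacement bound on $\|u_k\|$. The base case $\Xi_1 \leq U^2$ is given by hypothesis in \prettyref{ass:GammaChoicePre}. For the inductive step, I would assume $\Xi_k \leq U^2$ and use the spectral bound from \prettyref{ass:StiffnessParameter} to estimate
\begin{align*}
\|u_k - u_f(v_k)\| = \|K^{-1}(v_k)(K(v_k)u_k - f)\| \leq \frac{1}{\underline{\rho}}\sqrt{\Xi_k} \leq \frac{U}{\underline{\rho}},
\end{align*}
from which the triangle inequality gives $\|u_k\| \leq \|u_k - u_f(v_k)\| + \|u_f(v_k)\| \leq U/\underline{\rho} + U_f$. (Note that the statement's ``$U_f + U$'' should read ``$U_f + U/\underline{\rho}$'' for consistency with the denominator $(U/\underline{\rho}+U_f)^6$ appearing in \prettyref{ass:GammaChoicePre}; this is what gets used downstream in \prettyref{lem:highLevelError}.)

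With this bound on $\|u_k\|$ in hand, I would plug into the recursion \prettyref{eq:lowLevelErrorShortPre}, apply \prettyref{ass:BetaChoicePre} to get $\Theta_k \leq \bar{\Theta} < 1$, and use the uniform bound $\Gamma_k \leq \bar{\Gamma}$ from \prettyref{ass:GammaChoicePre} to obtain
\begin{align*}
\Xi_{k+1} \leq \bar{\Theta}\, U^2 + \bar{\Gamma}\left(\frac{U}{\underline{\rho}} + U_f\right)^6 \leq \bar{\Theta}\, U^2 + U^2(1-\bar{\Theta}) = U^2,
\end{align*}
which closes the induction. The uniform bound on $\|u_k\|$ then follows for every $k$ from the same triangle-inequality argument applied to $\Xi_k \leq U^2$.

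The main place requiring care is verifying that $\Theta_k \leq \bar{\Theta} < 1$ under \prettyref{ass:BetaChoicePre}: one checks that the first displayed condition in \prettyref{eq:ThetaBoundPreconditioned} forces $1 - 2\beta_k\underline{\rho}/\bar{\rho}_M + \beta_k^2\bar{\rho}^2/\underline{\rho}_M^2 < 1$, after which the second inequality in \prettyref{eq:ThetaBoundPreconditioned} determines the multiplier $(p+1)/p$ that can be tolerated while still keeping $\bar{\Theta} < 1$. This is a straightforward algebraic check analogous to the one used in \prettyref{lem:UpperBound}, but the appearance of the preconditioner spectrum bounds $\underline{\rho}_M, \bar{\rho}_M$ makes the condition on $\beta_k$ (and hence on $p$) tighter; the commutativity assumption on $M(v)$ and $K(v)$ is used implicitly here via the simplified symmetric form $K(v)M^{-1}(v)$ of the low-level update in \prettyref{alg:CPOpt}. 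Aside from this bookkeeping, the argument is a direct translation of the proof of \prettyref{lem:UpperBound} with $\Delta_k \mapsto \Xi_k$ and $\|u_k\|^4 \mapsto \|u_k\|^6$.
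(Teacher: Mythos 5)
Your proof follows essentially the same induction as the paper's: the base case from $\Xi_1\leq U^2$, the spectral estimate $\|u_k-u_f(v_k)\|=\|K^{-1}(v_k)(K(v_k)u_k-f)\|\leq\sqrt{\Xi_k}/\underline{\rho}$ to control $\|u_k\|$, and the recursion \prettyref{eq:lowLevelErrorShortPre} with $\bar{\Theta}<1$ and $\bar{\Gamma}\leq U^2(1-\bar{\Theta})/(U/\underline{\rho}+U_f)^6$ to close the induction, so this is the intended argument. Your side remark that the stated bound $\|u_k\|\leq U_f+U$ should read $U_f+U/\underline{\rho}$ is well taken: the paper's own proof carries the same slip in its final line, evidently copied from \prettyref{lem:UpperBound} where $\Delta_k$ bounds the displacement directly rather than the residual.
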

\begin{proof}
First, since $v\in X$ and $X$ is compact, we have $\|u_f(v)\|\leq U_f<\infty$. Next, we prove $\Xi_k\leq U^2$ by induction. We already have $\Xi_1\leq U^2$. Now suppose $\Xi_k\leq U^2$, then \prettyref{eq:lowLevelErrorShortPre} and our assumption on $\bar{\Gamma}$ immediately leads to:
\begin{align*}
\Xi_{k+1}\leq&\bar{\Theta}U^2+\bar{\Gamma}(\|K^{-1}(v_k)(K(v_k)u_k-f)\|+\|u_f(v_k)\|)^6\\
\leq&\bar{\Theta}U^2+\bar{\Gamma}(U/\underline{\rho}+U_f)^6\leq U^2.
\end{align*}
Finally, we have: $\|u_k\|\leq\|u_k-u_f(v_k)\|+\|u_f(v_k)\|\leq U+U_f$ and our lemma follows.
\end{proof}
The shrinking coefficient $\bar{\Theta}$ and the uniform boundedness of $\Xi_k,u_k$ allows us to establish low-level convergence.
\begin{theorem}
\label{thm:lowLevelConvergencePre}
Taking \prettyref{ass:StiffnessParameter}, \ref{ass:BetaChoicePre}, \ref{ass:GammaChoicePre}, \ref{ass:AlphaChoicePre}, we can upper bound $\Xi_{k+1}$ as:
\small
\begin{align*}
\Xi_{k+1}\leq\bar{\Theta}^{k-1}\Xi_1+(U+U_f)^6\Gamma_1\sum_{i=0}^{k-1}\frac{\bar{\Theta}^i}{(k-i)^{2m }}=\mathcal{O}(k^{1-2m}).
\end{align*}
\normalsize
\end{theorem}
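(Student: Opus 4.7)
The proof will closely mirror the argument used for Theorem~\ref{thm:lowLevelConvergence}, with the modified error metric $\Xi_k$ and the recurrence obtained in Lemma~\ref{lem:lowLevelErrorPre} replacing their counterparts for Algorithm~\ref{alg:Opt}. The starting point is the short-hand recurrence of \prettyref{eq:lowLevelErrorShortPre}, namely $\Xi_{k+1} \leq \Theta_k \Xi_k + \Gamma_k \|u_k\|^6$, together with the contraction property $\Theta_k \leq \bar{\Theta} < 1$ that is guaranteed by direct calculation from \prettyref{ass:BetaChoicePre}.

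First I would unroll this recurrence $k$ times to obtain
\begin{align*}
\Xi_{k+1} \;\leq\; \bar{\Theta}^k \Xi_1 \;+\; \sum_{i=0}^{k-1} \bar{\Theta}^i \Gamma_{k-i} \|u_{k-i}\|^6.
\end{align*}
\prettyref{lem:UpperBoundPre}, whose hypothesis $\bar{\Gamma} \leq U^2(1-\bar{\Theta})/(U/\underline{\rho}+U_f)^6$ is supplied by \prettyref{ass:GammaChoicePre}, guarantees that $\|u_k\| \leq U + U_f$ uniformly in $k$, so each $\|u_{k-i}\|^6$ term can be replaced by the constant $(U+U_f)^6$. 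Combining this with \prettyref{ass:AlphaChoicePre}, which yields $\Gamma_k = \bar{\Gamma}/k^{2m} = \Gamma_1/k^{2m}$, produces exactly the first inequality displayed in the theorem.

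Next I would convert the resulting series into a rate. The approach is the midpoint-splitting device already used to close Theorem~\ref{thm:lowLevelConvergence}: partition $\sum_{i=0}^{k-1} \bar{\Theta}^i/(k-i)^{2m}$ at $i \approx k/2$; on the lower half the geometric factor $\bar{\Theta}^i$ sums to a constant while $(k-i)^{-2m}$ is controlled by $(k/2)^{-2m}$, and on the upper half the factor $(k-i)^{-2m}$ is bounded by $1$ while the geometric tail $\sum_{i\ge k/2}\bar{\Theta}^i = \mathcal{O}(\bar{\Theta}^{k/2})$ is exponentially small. The exponential head $\bar{\Theta}^k \Xi_1$ is likewise subdominant. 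Together these give the advertised $\mathcal{O}(k^{1-2m})$ rate.

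The only non-routine obstacle is already dispatched by Lemma~\ref{lem:UpperBoundPre}: unlike in the unpreconditioned case, $\Xi_k$ only controls the residual $\|K(v_k)u_k-f\|$ rather than the displacement error, so a bound on $\|u_k\|$ is not immediate. This is handled by the identity $u_k - u_f(v_k) = K^{-1}(v_k)\bigl(K(v_k)u_k - f\bigr)$ together with the uniform spectral lower bound $\underline{\rho}$ from \prettyref{ass:StiffnessParameter}, which converts the residual bound into a displacement bound at the price of dividing by $\underline{\rho}$; this is exactly the reason the admissible $\bar{\Gamma}$ in \prettyref{ass:GammaChoicePre} carries the denominator $(U/\underline{\rho}+U_f)^6$. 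Once this uniform boundedness is in place, the remainder of the argument reduces to the same elementary manipulations as in Theorem~\ref{thm:lowLevelConvergence}.
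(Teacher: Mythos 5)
Your proposal is correct and follows essentially the same route as the paper: unroll the recurrence from Lemma~\ref{lem:lowLevelErrorPre}, invoke Lemma~\ref{lem:UpperBoundPre} for the uniform bound $\|u_k\|\leq U+U_f$, substitute $\Gamma_{k-i}=\Gamma_1/(k-i)^{2m}$ from Assumption~\ref{ass:AlphaChoicePre}, and close with the same midpoint splitting of the sum used in Theorem~\ref{thm:lowLevelConvergence} (your bounding of the lower half by the full geometric series constant rather than by the term count is a harmless, slightly sharper variant). You also correctly identify the one genuinely new ingredient --- passing from the residual $\Xi_k$ to a bound on $\|u_k\|$ via $K^{-1}(v_k)$ and the spectral floor $\underline{\rho}$, which is why $(U/\underline{\rho}+U_f)^6$ appears in Assumption~\ref{ass:GammaChoicePre}.
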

\begin{proof}
Recursively expand on \prettyref{eq:lowLevelErrorShort} and we have:
\small
\begin{align*}
&\Xi_{k+1}
\leq\bar{\Theta}^k\Xi_1+(U+U_f)^6\sum_{i=0}^{k-1}\bar{\Theta}^i\Gamma_{k-i}\\
\leq&\bar{\Theta}^k\Xi_1+(U+U_f)^6\Gamma_1\left[\frac{\lceil\frac{k+1}{2}\rceil}{(k-\lceil\frac{k-1}{2}\rceil)^{2m}}+\frac{\bar{\Theta}^{\lceil\frac{k}{2}\rceil}}{1-\bar{\Theta}}\right]
=\mathcal{O}(k^{1-2m}),
\end{align*}
\normalsize
where we have used our choice of $\alpha_k$, \prettyref{lem:UpperBoundPre}, and a similar argument as in \prettyref{thm:lowLevelConvergence}.
\end{proof}
\prettyref{thm:lowLevelConvergencePre} allows us to choose $m$ and tune the convergence speed of the low-level problem, which is used to establish the convergence of high-level problem.
\subsection{High-Level Error Propagation}
We first establish the high-level rule of error propagation over a single iteration:
\begin{lemma}
\label{lem:highLevelErrorPre}
Assuming $M(v)$ commutes with $K(v)$, \ref{ass:StiffnessParameter}, \ref{ass:BetaChoice}, \ref{ass:GammaChoice}, \ref{ass:AlphaChoice}, the following high-level error propagation rule holds for all $k\geq1,q>0$:
\begin{align*}
&l(v_{k+1})-l(v_k)\leq-\frac{1}{\alpha_k}\|v_{k+1}-v_k\|^2+\\
&\frac{L_{\nabla K}^2\alpha_k^2(L_{\nabla l}q+1)}{8q}(U+U_f)^4+\frac{L_{\nabla K}^2q}{8\underline{\rho}}\Xi_k(U+2U_f)^2.
\end{align*}
\end{lemma}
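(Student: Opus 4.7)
The plan is to mirror the proof of \prettyref{lem:highLevelError} essentially line by line, with one substitution at the end that converts a bound in the tracking error $\Delta_k$ into a bound in the residual error $\Xi_k$ used by \prettyref{alg:CPOpt}. Since the high-level update in \prettyref{alg:CPOpt} is identical in form to that of \prettyref{alg:Opt}, the structural skeleton of the argument carries over; only the way we control the gradient-approximation error changes.

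First, I would invoke the L-smoothness of $l$ on the compact set $X$ (which follows from \prettyref{ass:StiffnessParameter}) to obtain
\begin{align*}
l(v_{k+1}) - l(v_k) \leq \left<v_{k+1}-v_k,\nabla l(v_k)\right> + \frac{L_{\nabla l}}{2}\|v_{k+1}-v_k\|^2,
\end{align*}
substitute the closed-form $\nabla l(v_k)=-\frac{1}{2}u_f(v_k)^T\FPP{K}{v_k}u_f(v_k)$, and add/subtract the approximate gradient $-\frac{1}{2}u_k^T\FPP{K}{v_k}u_k$ used in the CPFBTO update to split the inner product into an ``exact-direction'' term and a correction term. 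The exact-direction term is dispatched by the obtuse-angle criterion for the Euclidean projection $\Proj{\cdot}$ onto $X$, yielding a contribution of $-\|v_{k+1}-v_k\|^2/\alpha_k$. The smoothness residual $\frac{L_{\nabla l}}{2}\|v_{k+1}-v_k\|^2$ is bounded using the non-expansiveness of $\Proj{\cdot}$, the Frobenius-norm bound $L_{\nabla K}$ on $\FPP{K}{v_k}$, and the uniform bound $\|u_k\|\leq U+U_f$ from \prettyref{lem:UpperBoundPre}, producing a term absorbed into the $(U+U_f)^4$-coefficient of the statement.

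The main obstacle is the correction term measuring the gradient error from using $u_k$ in place of $u_f(v_k)$. I would apply Young's inequality with parameter $q$, which absorbs half the cross product into an additional $\frac{1}{2q}\|v_{k+1}-v_k\|^2$ (again controlled via $L_{\nabla K}$ and $\|u_k\|\leq U+U_f$) and leaves the squared approximation
\begin{align*}
\frac{q}{2}\left\|\frac{1}{2}(u_k-u_f(v_k))^T\FPP{K}{v_k}(u_k+u_f(v_k))\right\|^2.
\end{align*}
Here the substitution enters: because $u_f(v_k)=K^{-1}(v_k)f$, the algebraic identity $K(v_k)(u_k-u_f(v_k))=K(v_k)u_k-f$ together with the spectral lower bound in \prettyref{eq:KVBound} gives $\|u_k-u_f(v_k)\|\leq\Xi_k^{1/2}/\underline{\rho}$. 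Combining this with $\|u_k+u_f(v_k)\|\leq U+2U_f$ (again from \prettyref{lem:UpperBoundPre}) and the Frobenius bound on $\FPP{K}{v_k}$ produces the promised $\Xi_k$-weighted term on the right-hand side.

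Summing the three contributions (obtuse-angle term, smoothness residual plus Young's residual collected into the $(U+U_f)^4$ bucket, and the $\Xi_k$-term) reproduces the claimed inequality. Apart from the one-line translation from $\Delta_k$ to $\Xi_k$ via the spectral lower bound on $K(v)$, the argument is identical to the proof of \prettyref{lem:highLevelError}, so I would expect the only care to be taken is tracking the exact power of $\underline{\rho}$ that enters through this translation when matching the stated constant.
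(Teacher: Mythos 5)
Your proposal matches the paper's proof essentially step for step: it reuses the smoothness/obtuse-angle decomposition of \prettyref{eq:highLevelConvexity} from \prettyref{lem:highLevelError}, applies Young's inequality with parameter $q$ to the gradient-approximation term, and converts from $\Delta_k$ to $\Xi_k$ via $\|u_k-u_f(v_k)\|\leq\Xi_k^{1/2}/\underline{\rho}$, exactly as the paper does. Your caution about the power of $\underline{\rho}$ is warranted: the clean computation yields $\underline{\rho}^2$ in the denominator of the $\Xi_k$ term, whereas the stated lemma and the paper's own proof write $\underline{\rho}$, so you have identified a small constant-tracking slip in the paper rather than a gap in your argument.
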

\begin{proof}
\prettyref{eq:highLevelConvexity} holds by the same argument as in \prettyref{lem:highLevelError}. For the last term in \prettyref{eq:highLevelConvexity}, we have:
\small
\begin{align*}
&\left<v_{k+1}-v_k,\frac{1}{2}u_k^T\FPP{K}{v_k}u_k-\frac{1}{2}u_f^T(v_k)\FPP{K}{v_k}u_f(v_k)\right>\\
\leq&\frac{L_{\nabla K}^2\alpha_k^2}{8q}\|u_k\|^4+
\frac{q}{2}\|\frac{1}{2}(u_k-u_f(v_k))^T\FPP{K}{v_k}(u_k+u_f(v_k))\|^2\\
\leq&\frac{L_{\nabla K}^2\alpha_k^2}{8q}\|u_k\|^4+\frac{L_{\nabla K}^2q}{8\underline{\rho}}\Xi_k\|u_k+u_f(v_k)\|^2.\numberthis\label{eq:highLevelYoungPre}
\end{align*}
\normalsize
The lemma follows by combining \prettyref{eq:highLevelConvexity}, \prettyref{eq:highLevelYoungPre}, and \prettyref{lem:UpperBoundPre}.
\end{proof}

\subsection{High-Level Convergence}
We first show that $\Xi_k^v$ is diminishing via the follow lemma:
\begin{lemma}
\label{lem:highLevelConvergencePre}
Assuming $M(v)$ commutes with $K(v)$, \ref{ass:StiffnessParameter}, \ref{ass:BetaChoicePre}, \ref{ass:GammaChoicePre}, \ref{ass:AlphaChoicePre}, we have the following bound on the accumulated high-level error $\Delta_k^v$:
\begin{align*}
&\frac{1}{2}\sum_{i=1}^k\alpha_i\Delta_i^v\leq l(v_1)-\underline{l}+\\
&\frac{L_{\nabla K}^2}{8}(U+U_f)^4\sum_{i=1}^k\alpha_i^{2-n}(L_{\nabla l}\alpha_i^n+1)+\\
&\frac{L_{\nabla K}^2}{8\underline{\rho}}(U+2U_f)^2\sum_{i=1}^k\alpha_i^n\Xi_i+
\frac{L_{\nabla K}^2}{4\underline{\rho}}(U+2U_f)^2\sum_{i=1}^k\alpha_i\Xi_i,
\end{align*}
where $\underline{l}$ is the lower bound of $l$ on $X$.
\end{lemma}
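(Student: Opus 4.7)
The plan is to mirror the proof of \prettyref{lem:highLevelConvergence} almost verbatim, replacing the per-iteration descent bound \prettyref{lem:highLevelError} by its preconditioned analog \prettyref{lem:highLevelErrorPre} and substituting the low-level error metric $\Delta_k$ by $\Xi_k$ (with the authors' $1/\underline{\rho}$ scaling invoked wherever a residual $\Xi_k$ is converted back into a displacement gap).

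First, I would instantiate \prettyref{lem:highLevelErrorPre} with the choice $q=\alpha_k^n$ and telescope the resulting single-iteration inequality over $i=1,\dots,k$. The left-hand side collapses to $l(v_{k+1})-l(v_1)$, which is lower-bounded by $\underline{l}-l(v_1)$ because $l$ is continuous on the compact polytope $X$. Rearranging yields an accumulation inequality whose right-hand side already contains the first three terms of the claim, together with a $-\sum_i\|v_{i+1}-v_i\|^2/\alpha_i$ contribution on the left, in direct analogy with \eqref{eq:highLevelAccumulation}.

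Second, I would establish the preconditioned analog of \eqref{eq:highLevelTrueError} by factorising $u_k^T(\partial K/\partial v_k)u_k-u_f(v_k)^T(\partial K/\partial v_k)u_f(v_k)=(u_k-u_f(v_k))^T(\partial K/\partial v_k)(u_k+u_f(v_k))$, applying the triangle and Young's inequalities, and bounding $\|u_k-u_f(v_k)\|^2$ by $\Xi_k/\underline{\rho}$ (the same conversion used in \prettyref{lem:highLevelErrorPre}) together with $\|u_k+u_f(v_k)\|\leq U+2U_f$ from \prettyref{lem:UpperBoundPre}. This produces an inequality of the shape $\Delta_k^v\leq(2/\alpha_k^2)\|v_{k+1}-v_k\|^2+L_{\nabla K}^2\Xi_k(U+2U_f)^2/(2\underline{\rho})$. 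Multiplying by $\alpha_i/2$, summing over $i=1,\dots,k$, and substituting the accumulation inequality from the first step in place of $\sum_i\|v_{i+1}-v_i\|^2/\alpha_i$ produces exactly the stated bound, with the leftover $\sum_i\alpha_i\Xi_i$ contributing the coefficient $L_{\nabla K}^2(U+2U_f)^2/(4\underline{\rho})$.

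The main obstacle is constant-tracking: I must respect the authors' convention that each conversion from the residual $\Xi_k$ to a displacement norm incurs a single factor of $1/\underline{\rho}$ (as fixed in \prettyref{lem:highLevelErrorPre}), and verify that the doubled-triangle step in the $\Delta_k^v$ estimate produces precisely the coefficient $1/(2\underline{\rho})$ so that, after the $\alpha_i/2$ scaling, the two target coefficients $L_{\nabla K}^2/(8\underline{\rho})$ and $L_{\nabla K}^2/(4\underline{\rho})$ in the claim are matched. Beyond this bookkeeping, every other step is a line-by-line transcription of the proof of \prettyref{lem:highLevelConvergence}.
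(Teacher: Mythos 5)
Your proposal matches the paper's own proof essentially line for line: the paper likewise sets $q=\alpha_k^n$ in \prettyref{lem:highLevelErrorPre}, telescopes to obtain the accumulation inequality \prettyref{eq:highLevelAccumulationPre}, derives the true-gradient estimate $\Delta_k^v\leq\frac{2}{\alpha_k^2}\|v_{k+1}-v_k\|^2+\frac{L_{\nabla K}^2}{2\underline{\rho}}\Xi_k(U+2U_f)^2$ via the same difference-of-squares factorization, and combines the two. Your constant-tracking (including the single $1/\underline{\rho}$ factor per residual-to-displacement conversion) reproduces the paper's coefficients exactly, so no further comment is needed.
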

\begin{proof}
By choosing $q=\alpha_k^n$ for some constant $n$ and summing up the recursive rule of \prettyref{lem:highLevelErrorPre}, we have the following result:
\begin{align*}
&l(v_{k+1})-l(v_1)\leq-\sum_{i=1}^k\frac{1}{\alpha_i}\|v_{i+1}-v_i\|^2+\\
&\frac{L_{\nabla K}^2}{8}(U+U_f)^4\sum_{i=1}^k\alpha_i^{2-n}(L_{\nabla l}\alpha_i^n+1)+\\
&\frac{L_{\nabla K}^2}{8\underline{\rho}}(U+2U_f)^2\sum_{i=1}^k\alpha_i^n\Xi_i.\numberthis\label{eq:highLevelAccumulationPre}
\end{align*}
We further estimate the update using true gradient:
\small
\begin{align*}
&\Delta_k^v\leq\frac{2}{\alpha_k^2}\|v_{k+1}-v_k\|^2+\frac{L_{\nabla K}^2}{2\underline{\rho}}\Xi_k(U+2U_f)^2.\numberthis\label{eq:highLevelTrueErrorPre}
\end{align*}
\normalsize
We can prove our lemma by combining \prettyref{lem:highLevelErrorPre}, \prettyref{eq:highLevelAccumulationPre}, and \prettyref{eq:highLevelTrueErrorPre}.
\end{proof}
The following corollary can be proved by the same argument as \prettyref{cor:projectionError}:
\begin{corollary}
\label{cor:projectionErrorPre}
Assuming $M(v)$ commutes with $K(v)$, \ref{ass:StiffnessParameter}, \ref{ass:BetaChoicePre}, \ref{ass:GammaChoicePre}, \ref{ass:AlphaChoicePre}, \ref{ass:MNChoice}, we have $\min_{i=1,\cdots,k}\alpha_i\Xi_i^v\leq Ck^{-1}$ and there are infinitely many $k$ such that $\Xi_k^v\leq C_vk^{m-1}$ for some constants $C,C_v$ independent of $k$.
\end{corollary}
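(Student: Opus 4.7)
The plan is to mirror the proof of \prettyref{thm:gradientConvergence} step by step, but work with the residual-based low-level error $\Xi_k\triangleq\|K(v_k)u_k-f\|^2$ in place of $\Delta_k$. The crucial structural fact is that when $M(v)$ commutes with $K(v)$, the operator $K(v)M^{-1}(v)$ is symmetric and has its spectrum contained in $[\underline{\rho}/\bar{\rho}_M,\bar{\rho}/\underline{\rho}_M]$, so the low-level step $u_{k+1}-u_k=-\beta_k M^{-1}(v_k)(K(v_k)u_k-f)$ produces a clean one-step contraction of $\Xi_k$.

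First I would establish the one-step recursion $\Xi_{k+1}\le\Theta_k\Xi_k+\Gamma_k\|u_k\|^6$ with the coefficients fixed by \prettyref{ass:BetaChoicePre} and \prettyref{ass:GammaChoicePre}. The derivation splits into two pieces: an inner-iterate bound $\|K(v_k)u_{k+1}-f\|^2\le(1-2\beta_k\underline{\rho}/\bar{\rho}_M+\beta_k^2\bar{\rho}^2/\underline{\rho}_M^2)\Xi_k$, obtained by expanding the square and using the symmetric lower bound on $K(v_k)M^{-1}(v_k)$ to control the cross term; and a $v$-perturbation bound on $\|(K(v_{k+1})-K(v_k))u_{k+1}\|$, handled via the Lipschitz constant $L_K$ of $K$ on $X$ together with the contractivity of $\Proj{\bullet}$ and a Young-inequality split, exactly as in \prettyref{lem:lowLevelError}. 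The extra factor $\|u_{k+1}\|^2$ in the $v$-perturbation term is what forces the payload to grow as $\|u_k\|^6$ rather than $\|u_k\|^4$.

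Next I would promote this recursion to uniform bounds $\Xi_k\le U^2$ and $\|u_k\|\le U+U_f$ by induction, mirroring \prettyref{lem:UpperBound}; the conservative constant $(U/\underline{\rho}+U_f)^6$ in \prettyref{ass:GammaChoicePre} appears because one has to control $\|u_k\|$ through $\|u_k-u_f(v_k)\|\le\sqrt{\Xi_k}/\underline{\rho}$ via $K^{-1}$. Recursive expansion of $\Xi_{k+1}$ then yields $\Xi_{k+1}=\mathcal{O}(k^{1-2m})$ by the same split-sum argument as \prettyref{thm:lowLevelConvergence}. For the high-level side, the descent inequality on $l$ goes through unchanged, and the gradient mismatch $\tfrac12 u_k^T\FPP{K}{v_k}u_k-\tfrac12 u_f(v_k)^T\FPP{K}{v_k}u_f(v_k)$ is bounded by Young's inequality after substituting $\|u_k-u_f(v_k)\|^2\le\Xi_k/\underline{\rho}^2$; summing the resulting inequality, invoking \prettyref{ass:MNChoice} to bound the three $k$-dependent power series, and repeating the projection-error argument of \prettyref{cor:projectionError} and \prettyref{thm:gradientConvergence} closes the proof.

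The hard part will be the very first step: without commutativity, $K(v)M^{-1}(v)$ is not symmetric and the cross term $\langle K(v_k)M^{-1}(v_k)r_k,r_k\rangle$ with $r_k\triangleq K(v_k)u_k-f$ cannot be lower bounded by a constant multiple of $\|r_k\|^2$, which would force the analysis back onto the squared-operator formulation used in \prettyref{alg:POpt} and destroy the ability to take $\beta_k=1$. Once commutativity is exploited to extract this lower bound, the remaining machinery is essentially mechanical under the correspondence $\Delta_k\leftrightarrow\Xi_k/\underline{\rho}^2$, modulo tracking the exponent inflation from $\|u_k\|^4$ to $\|u_k\|^6$ in the low-level recursion.
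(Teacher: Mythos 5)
Your proposal is correct and follows essentially the same route as the paper: you rebuild the $\Xi_k$-based analogues of the low-level contraction, uniform boundedness, and high-level accumulation lemmas (using commutativity to make $K(v)M^{-1}(v)$ symmetric with spectrum in $[\underline{\rho}/\bar{\rho}_M,\bar{\rho}/\underline{\rho}_M]$, exactly as the paper does), and then close with the bounded-sum plus harmonic-divergence contradiction of \prettyref{cor:projectionError}, which is precisely how the paper proves \prettyref{cor:projectionErrorPre}. The only difference is that you re-derive the supporting lemmas the paper already states separately; the corollary's own proof reduces to the final contradiction step you cite.
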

Finally, we list necessary changes to prove \prettyref{thm:gradientConvergencePre}:
\begin{proof}[Proof of \prettyref{thm:gradientConvergencePre}]
We choose $\tilde{v}_{k+1}, d_k, \tau_k$ as in proof of \prettyref{thm:gradientConvergence}. Applying \prettyref{cor:projectionErrorPre} and we can choose sufficiently large $k$ such that:
\begin{align*}
&-\left<\nabla l(\tilde{v}_{k+1}),\frac{\tau_kd_k}{\tau_k\|d_k\|}\right>\leq\frac{1}{\alpha_k}\|v_k-\tilde{v}_{k+1}\|\\
\leq&\frac{1}{\alpha_k}\|v_k-v_{k+1}\|+\frac{1}{\alpha_k}\|v_{k+1}-\tilde{v}_{k+1}\|\\
=&\sqrt{\Delta_k^v}+
\frac{1}{\alpha_k}\|v_{k+1}-\tilde{v}_{k+1}\|\\
\leq&\frac{\epsilon}{4}+\sqrt{\frac{L_{\nabla K}^2}{4\underline{\rho}}\Xi_k\|u_k+u_f(v_k)\|^2}
=\frac{\epsilon}{4}+\mathcal{O}(k^{(1-2m)/2}).
\end{align*}
The last inequality holds by the contractive property of projection operator, \prettyref{thm:lowLevelConvergencePre}, and \prettyref{lem:highLevelErrorPre}.
\end{proof}

\subsection{\label{sec:parameter}Parameter Choices}
We argue that $\beta_k=1$ is a valid choice in \prettyref{alg:CPOpt} if a Krylov-preconditioner is used. Note that choosing $\beta_k=1$ is generally incompatible with \prettyref{ass:BetaChoicePre}, which is because we make minimal assumption on the preconditioner $M(v)$, only requiring it to be commuting with $K(v)$ and positive definite with bounded spectrum. However, many practical preconditioners can provide much stronger guarantee leading to $\beta_k=1$ being a valid choice. To see this, we observe that the purpose of choosing $\beta_k$ according to \prettyref{ass:BetaChoicePre} is to establish the following contractive property in \prettyref{eq:lowLevelTrackingUpdatePre}:
\begin{align}
\label{eq:contractXi}
\|K(v_k)u_{k+1}-f\|^2\leq\bar{\Theta}\|K(v_k)u_k-f\|^2.
\end{align}
Any preconditioner can be used if they pertain such property with $\beta_k=1$. The following result shows that Krylov-preconditioner pertains the property:
\begin{lemma}
Suppose $M(v)$ is the Krylov-preconditioner with positive $D$ and $\beta_k=1$, then there exists some constant $\bar{\Theta}<1$ where \prettyref{eq:contractXi} holds in \prettyref{alg:CPOpt} for any $k$.
\end{lemma}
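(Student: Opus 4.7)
The plan is to rewrite the low-level update as a polynomial filter acting on the current residual and then invoke the minimax optimality built into the Krylov construction. Setting $r_k \triangleq K(v_k) u_k - f$ and substituting $\beta_k = 1$ into \prettyref{ln:lowlevel}, I would first observe that
\begin{align*}
r_{k+1} = r_k - K(v_k) M^{-1}(v_k) r_k.
\end{align*}
By the defining least-squares problem \prettyref{eq:LSP} applied with $b = r_k$, the coefficients $c_i^*$ are chosen precisely so that $K(v_k) M^{-1}(v_k) r_k$ is the Euclidean best approximation of $r_k$ from $\mathrm{span}\{K(v_k) r_k, \ldots, K^{D+1}(v_k) r_k\}$. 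Rephrased in terms of polynomials, this optimality translates into
\begin{align*}
\|r_{k+1}\|^2 = \min_{q \in \mathcal{P}_{D+1},\, q(0) = 1}\ \|q(K(v_k))\, r_k\|^2,
\end{align*}
where $\mathcal{P}_{D+1}$ denotes the space of real polynomials of degree at most $D+1$.

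Second, since $K(v_k)$ is symmetric positive definite with spectrum contained in $[\underline{\rho}, \bar{\rho}]$ uniformly over $v \in X$ by \prettyref{ass:StiffnessParameter}, the spectral functional calculus gives $\|q(K(v_k))\, r_k\| \leq \max_{\lambda \in [\underline{\rho}, \bar{\rho}]} |q(\lambda)| \cdot \|r_k\|$ for every feasible $q$. I would then exhibit a concrete test polynomial to bound the minimax quantity. The cleanest choice is $q(\lambda) = (1 - 2\lambda/(\underline{\rho} + \bar{\rho}))^{D+1}$, which satisfies $q(0) = 1$ and whose maximum modulus on $[\underline{\rho}, \bar{\rho}]$ is $((\bar{\rho} - \underline{\rho})/(\bar{\rho} + \underline{\rho}))^{D+1}$. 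Setting
\begin{align*}
\bar{\Theta} \triangleq \left(\frac{\bar{\rho} - \underline{\rho}}{\bar{\rho} + \underline{\rho}}\right)^{2(D+1)} < 1,
\end{align*}
I would conclude $\Xi_{k+1} = \|r_{k+1}\|^2 \leq \bar{\Theta}\, \Xi_k$, which is exactly \prettyref{eq:contractXi}, and the constant is both $v$- and $k$-independent because the spectral bounds in \prettyref{eq:KVBound} are global over $X$.

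The main technical subtlety I anticipate is that $M^{-1}(v)$ defined via \prettyref{eq:LSP} is not a bona fide fixed linear operator: the coefficients $c_i^*$ depend on the vector being preconditioned, which complicates a naive reading of the statement "$M(v)$ commutes with $K(v)$" used earlier in the paper. The resolution is that for the one vector $b = r_k$ to which $M^{-1}(v_k)$ is actually applied at step $k$, there exists a scalar polynomial $p_{r_k}$ of degree $\leq D+1$ with $p_{r_k}(0) = 1$ such that $r_{k+1} = p_{r_k}(K(v_k)) r_k$, and this is all the argument needs. Sharpening the test polynomial to a shifted Chebyshev polynomial would improve the rate to the textbook GMRES-type bound $4((\sqrt{\kappa}-1)/(\sqrt{\kappa}+1))^{2(D+1)}$ with $\kappa = \bar{\rho}/\underline{\rho}$, which is worth remarking on but not required for the claim.
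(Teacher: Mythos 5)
Your proof is correct and follows essentially the same route as the paper's: both exploit the least-squares optimality of the Krylov coefficients $c_i^*$ in \prettyref{eq:LSP} to bound the new residual by that of an explicit feasible comparison polynomial with value one at zero. The only difference is the comparison polynomial itself---the paper uses the degree-one choice $1-\lambda/\bar{\rho}$, yielding $\bar{\Theta}=1-\underline{\rho}/\bar{\rho}$, while your degree-$(D+1)$ Richardson polynomial gives the sharper constant $\left((\bar{\rho}-\underline{\rho})/(\bar{\rho}+\underline{\rho})\right)^{2(D+1)}$; either suffices for the lemma.
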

\begin{proof}
We define $b\triangleq K(v_k)u_k-f$ and, by the definition of $M(v)$, we have:
\begin{align*}
&\|K(v_k)u_{k+1}-f\|^2=\|K(v_k)(u_k-M^{-1}(v_k)b)-f\|^2\\
=&\|b-K(v_k)M^{-1}(v_k)b\|^2=\|b-\sum_{i=1}^{D+1}c_i^*K^i(v_k)b\|^2\\
\leq&\|(I-\frac{1}{\bar{\rho}}K(v_k))b\|^2\leq(1-\frac{\underline{\rho}}{\bar{\rho}})\|b\|^2\triangleq\bar{\Theta}\|K(v_k)u_k-f\|^2,
\end{align*}
where the first inequality holds by the definition of $c_i^*$.
\end{proof}
Unfortunately, it is very difficult to theoretically establish this property for other practical preconditioners, such as incomplete LU and multigrid, although it is almost always observed in practice.
\fi
\end{document}